\documentclass[Afour,sageh,times]{sagej}

\usepackage{moreverb,url}
\usepackage{array}
\usepackage{textcomp}
\usepackage{stfloats}
\usepackage{url}
\usepackage{verbatim}
\usepackage{graphicx,wrapfig,lipsum}
\usepackage{amsmath}
\usepackage{tabularx} 
\usepackage{amsthm}
\usepackage{bm}
\usepackage{booktabs}
\usepackage{subcaption}
\usepackage{multirow, booktabs}

\usepackage{algorithm}% http://ctan.org/pkg/algorithm
\usepackage{algpseudocode}
\usepackage{amssymb}
\usepackage{xcolor}

\newcommand{\termdef}[2]{\hypertarget{#1}{#2}}
\newcommand{\termref}[2]{\hyperlink{#1}{#2}}

\usepackage[colorlinks,bookmarksopen,bookmarksnumbered,citecolor=red,urlcolor=red]{hyperref}

\newcommand\BibTeX{{\rmfamily B\kern-.05em \textsc{i\kern-.025em b}\kern-.08em
T\kern-.1667em\lower.7ex\hbox{E}\kern-.125emX}}

\def\volumeyear{2025}
\newtheorem{proposition}{Proposition}
\begin{document}

\runninghead{Zhang and Calinon}
% use `\textit{et~al.}' if there are three or more authors.

\title{Physics-Informed Eikonal Caging for Whole-Arm Manipulation Planning
}

\author{
Yan Zhang\affilnum{1,2}, Yiming Li\affilnum{1,2}, Yifei Dong\affilnum{3}, Florian T. Pokorny\affilnum{3}, and Sylvain Calinon\affilnum{1,2}}

\affiliation{\affilnum{1}Idiap Research Institute, Martigny, Switzerland\\
\affilnum{2}\'Ecole Polytechnique F\'ed\'erale de Lausanne (EPFL), Lausanne, Switzerland\\
\affilnum{3}KTH Royal Institute of Technology, Stockholm, Sweden
}

\corrauth{Yan Zhang, Idiap Research Institute, 
Rue Marconi 19,
1920, Martigny
Switzerland}

\email{yan.zhang@idiap.ch}

\begin{abstract}
Planning contact-rich whole-arm manipulation is challenging because interactions that involve extended robot geometry give rise to complex contact dynamics that are difficult to model accurately. This creates a need for planning principles that do not rely heavily on precise contact models. Caging offers one such geometric notion of robustness to modeling inaccuracy by restricting object escape through geometrically enclosing the object and restricting its escape. However, existing caging formulations are difficult to incorporate into continuous optimization-based manipulation planning.
We reformulate caging as a minimum-time escape problem in which the object seeks to leave an enclosing robot geometry in the shortest time. This yields a continuous escape-time field that measures the robot's enclosure quality and we show it satisfies an eikonal equation. We therefore can approximate this field using a physics-informed neural network, producing a smooth differentiable representation that can be embedded directly into manipulation planning. The resulting objective supports whole-arm manipulation planning to favor robot configurations resisting object escape. This improves the manipulation robustness to contact model mismatch, thus enabling planning with simplified contact models, including quasi-dynamic approximations and simplified object geometry. Across simulation and real-world experiments, we show improved robustness to disturbances and contact-model mismatch relative to baselines. These results suggest that geometric enclosure can serve as a practical robustness primitive for whole-arm manipulation. A supplementary video, which includes an intuitive overview of our method and experiment video results, is available on our \href{https://sites.google.com/view/piec4wam?usp=sharing}{project webpage.}
\end{abstract}

\keywords{Whole-Arm Manipulation, Robotic Caging, Eikonal Equation, Robust Planning}

\maketitle

\section{Introduction}
Planning contact-rich manipulation remains a central challenge in robotics, particularly when interactions extend beyond the end-effector to involve extended portions of the robot arm \citep{barreiros2025learning,leve2025scaling}. Whole-arm manipulation can use distributed contact across the arm to move objects that are difficult to grasp. However, planning such behaviors is challenging because object motion depends on whole-arm contact geometry and mode transitions at multiple contact points, both of which are difficult to model accurately. As a result, existing contact-rich manipulation pipelines tend to simplify the contact dynamics \citep{hogan2020reactive, pang2023global} by restricting interaction to the end-effector, using smoothed contact approximations, or assuming quasi-static/dynamic interaction, which can degrade performance or fail during execution.

A central failure mode in contact-rich manipulation is object \emph{escape}: the object slips or reorients, thus leaving the robot's controllable region. Robotic caging \citep{makita2017survey} provides a geometric notion of robustness against such escape. An object is \emph{caged} when its feasible escape motions are blocked by surrounding robot geometry, so that the robot geometrically \emph{encloses} the object. Prior work has mainly studied caging for grasp analysis, certification, and enclosure reasoning \citep{bircher2021complex, aceituno2023certified, dong2024characterizing}. However, most existing formulations are static: they evaluate a fixed configuration or estimate escape difficulty through sampling \citep{dong2024characterizing}, graph search \citep{bircher2021complex}, or certification procedures \citep{aceituno2023certified}. These approaches are valuable for static analysis, yet difficult to integrate into continuous-time whole-arm trajectory optimization.

We revisit caging from a dynamical and adversarial perspective by asking: \emph{how difficult is it for an object to escape an enclosing robot geometry when it moves to escape as quickly as possible?} We reformulate caging as a minimum-time escape optimal control problem in which the object seeks to escape in shortest time with bounded maximal velocity. This formulation yields a continuous escape-time field over collision-free robot--object configurations, as illustrated by the colormap and white contours in Figure~\ref{fig:illustration_wa_mani}.

We further show that this escape-time field satisfies an eikonal equation with boundary conditions defined by configurations outside the robot workspace. Classical grid-based eikonal solvers \citep{sethian1996fast,sanguinetti2015sub} do not scale to high-dimensional robot--object configuration spaces, nor do they provide smooth gradients with respect to articulated robot geometry. We therefore approximate the field in a self-supervised manner using physics-informed neural networks (PINNs), without requiring time-consuming escape difficulty labeling, unlike prior robotic caging approaches \citep{bircher2021complex,dong2024quasi}.

More importantly, this approximation yields a smooth representation that can be queried in batch and differentiated for whole-arm manipulation planning, as shown in Figure~\ref{fig:illustration_wa_mani}. Moreover, intuitively, the resulting formulation can be interpreted as a min--max game: the robot seeks configurations with large escape time, while the object would prefer motions that minimize it. Empirically, optimizing this escape-time field biases the whole-arm manipulation planner toward enclosing robot configurations that resist object escape, which improves robustness to contact-model mismatch in the evaluated tasks. As a result, the planner can better tolerate simplified planning models, such as quasi-dynamic contact models or simplified object geometry. For example, in Figure~\ref{fig:illustration_wa_mani}, the robot successfully pushes the yellow \emph{box} into the goal region even though the object is approximated as a \emph{circle} during planning.

\begin{figure*}[!t]
    \centering
        \centering
        \includegraphics[width=\linewidth]{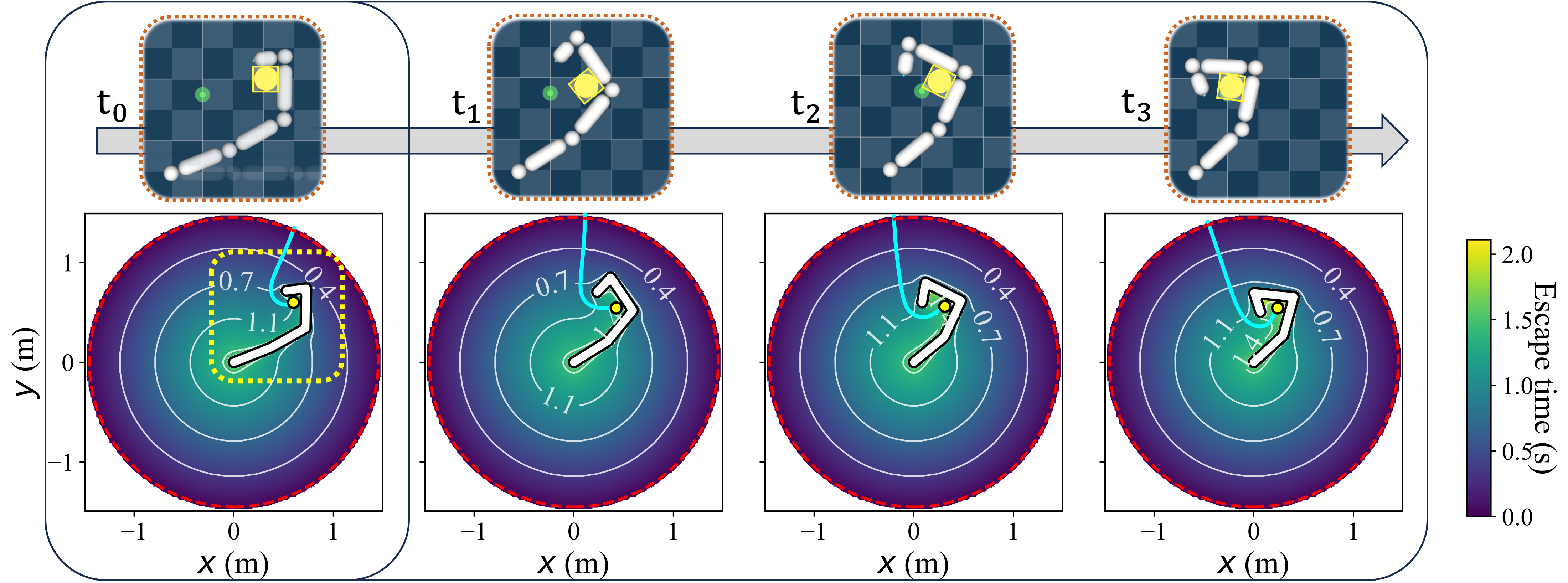}
        \caption{Conceptual overview of whole-arm caging manipulation for robust whole-arm manipulation planning with simplified contact dynamics. Top: the robot transports a box-shaped object using distributed whole-arm contact along a manipulation trajectory planned with a simplified object model, where the box is under-approximated by a circular geometry (yellow circle). Bottom: the corresponding escape-time field over the robot workspace (boundary shown by the red dashed line), where colors and contours indicate the minimum time required for the object to escape the enclosing robot geometry under bounded velocity. The cyan curves illustrate minimum-time escape paths at each time step. By planning motions that maximize object escape time, the robot maintains geometric enclosure throughout the manipulation. This geometric enclosure provides improved robustness to contact model mismatch, enabling the planner to tolerate simplified contact dynamics during planning, such as the contact-geometry simplification illustrated here. The upper panel corresponds to the region highlighted by the dashed rectangle in the bottom panel.}
        \label{fig:illustration_wa_mani}
        \vspace{-0.5cm}
\end{figure*}

The learned escape-time field therefore provides an \emph{explicit geometric robustness measure} for whole-arm manipulation planning: larger values correspond to robot--object configurations from which escape is more difficult under worst-case object escape motion. By optimizing this quantity jointly with task objectives, the planner can explicitly trade task progress against resistance to object escape, while exploiting whole-arm geometry more effectively during manipulation.

In summary, our contributions include:
\begin{itemize}
    \item We reformulate robotic caging as a minimum-time escape problem and show that the resulting escape-time field satisfies an eikonal equation.

    \item We introduce a physics-informed neural approximation of the escape-time field that is smooth and differentiable with respect to robot--object configurations, and can therefore be directly embedded into whole-arm manipulation planning.
    
    \item We show that maximizing the escape-time field biases planners toward enclosing robot geometry that resists object escape, and provide a game-theoretic interpretation of the resulting whole-arm manipulation formulation.
    
    \item Through simulation and real-world experiments, we demonstrate improved whole-arm manipulation robustness to disturbances and contact-model mismatch, enabling effective planning with simplified dynamics and geometry models.
\end{itemize}

\emph{Manuscript structure:} The remainder of the paper develops these ideas progressively. Section~\ref{sec:eikonal_caging} introduces the minimum-time escape formulation and its eikonal relationship. Section~\ref{sec:wacaging_conf} learns a differentiable escape-time field for static whole-arm \emph{configuration} optimization. Section~\ref{sec:wacaging_mani} embeds this field into whole-arm \emph{manipulation} trajectory planning. Each of these three sections concludes with an illustrative example to convey the key ideas. Experiments are conducted in Section~\ref{sec:robust_analyse} and~\ref{sec:experiments}. Section~\ref{sec:robust_analyse} analyzes whole-arm manipulation robustness under controlled disturbances and contact dynamic simplifications. Section~\ref{sec:experiments} then validates transfer to realistic geometry and real-world experiments.

\section{Related Work}
\subsection{Robotic Caging}

Caging was introduced in robotics as a geometric notion describing configurations in which an object, while not rigidly grasped, is constrained to remain within a bounded region of configuration space and cannot escape arbitrarily far without collision \citep{rimon1996caging}. Unlike force-closure grasps, caging emphasizes geometric enclosure rather than force balance, making it an ideal alternative to form-closure grasping for robust manipulation planning. Following this definition, a line of work has studied caging as a means of certifying that an object is confined to a compact subset of configuration space. For example, \cite{rodriguez2012caging} and \cite{aceituno2023certified} establish conditions under which caging provides formal guarantees that an object remains bounded, enabling certified grasping and manipulation. For objects with specific topological features such as graspable handles, \citep{pokorny2013a, stork2013b, stork2013c} developed approaches based on Gauss Linking Integrals. In general, to construct caging configurations, previous approaches mainly rely on explicit exploration of contact or configuration space. Graph-search methods defined over contact manifolds have been proposed to identify caging regions \citep{allen2015robust, bunis2018equilateral}, while randomized and sampling-based planners have been used to test whether an object can escape a given enclosure \citep{song2021herding, varava2021free}.

Beyond binary notions of caging, prior work has introduced quantitative measures of \emph{partial caging} or enclosure quality. \cite{makapunyo2013measurement} proposed measuring caging quality using average escape path length or elapsed time estimated via sampling-based motion planning. 
The work of \cite{varava2019partial} introduced clearance-based definitions of partial caging and estimated escape difficulty through repeated sampling-based searches. A related line of work considers \emph{energy-bounded caging}, in which escape is defined in terms of the minimum energy required to overcome an external potential, such as gravity \citep{mahler2016energy}. Extensions of this idea have explored partial and margin-based caging under energy constraints \citep{dong2024quasi, dong2024characterizing, dong2026robustness}, as well as applications to gripper, hand or tool design that maximize caging regions or robustness margins \citep{bircher2021complex, dong2025cagecoopt}. These works highlight the importance of escape-based reasoning for robustness, but remain primarily focused on static analysis, gripper design, and tool selection rather than whole-arm manipulation planning.

More broadly, existing approaches treat escape difficulty primarily as a static or sampling-based evaluation metric. Escape effort is typically estimated indirectly through repeated planning \citep{makapunyo2013measurement, varava2019partial, dong2024quasi, dong2024characterizing}, graph search \citep{allen2015robust, bunis2018equilateral}, or expensive mixed-integer optimization \citep{aceituno2023certified}. In contrast, we formulate caging as a minimum-time optimal control problem and show that the resulting escape-time field satisfies an eikonal equation with boundary conditions defined on configurations outside the robot's workspace, which enables the escape-time field to be approximated using physics-informed neural networks in a self-supervised manner, without requiring expensive escape effort labeling as previous caging metric learning approaches \citep{varava2019partial, dong2026robustness}.

\subsection{Signed Distance Fields and Eikonal-Based Methods}
Signed distance fields (SDFs) are widely used in robotics to represent robot and environment geometry, enabling smooth distance query and gradient-based optimization for motion planning \citep{koptev2022neural, li2024representing, Li24RSS}. In this work, we assume access to an SDF representation of the robot arm and workspace boundary for distance query. In our implementation, this representation follows prior work in \citep{li2024representing}, while the proposed formulation is agnostic to distance query modules.

Eikonal equations and their neural approximations have been studied for motion planning and navigation, where the resulting time or cost fields characterize shortest paths or traversal costs in collision-free configuration space \citep{huh2021cost, ni2022ntfields, Li26IJRR}. These approaches typically exploit the gradient of the learned field to guide robot motion while avoiding obstacles. In this paper, we show that the escape-time field arising from our minimum-time caging formulation satisfies an eikonal equation with a fixed boundary condition that corresponds to the escaping goal region boundary. To approximate this field efficiently, we adopt a physics-informed neural network (PINN) framework and tailor it to our setting by incorporating fixed goal-region boundary conditions and conditioning the field on robot configurations. Rather than using the resulting field gradients for navigation, we interpret the escape-time field as a measure of robot enclosure and object escape difficulty under whole-arm geometric constraints. This field is used directly to reason about caging quality and to shape robust whole-arm manipulation behaviors.

\subsection{Contact-Rich Manipulation Planning}
Contact-rich manipulation planning remains challenging due to the inherently hybrid nature of contact dynamics, in which the system evolves under multiple smooth modes separated by contact mode transitions \citep{hogan2020reactive, pang2023global}. These transitions induce non-smooth dynamics for which local Taylor approximations no longer hold, substantially complicating the application of trajectory optimization methods \citep{pang2023global}. As a result, the complexity of contact dynamics makes model-based planning for contact-rich manipulation particularly difficult.

\subsubsection{End-Effector–Centric Manipulation}
A common strategy for mitigating contact complexity is to restrict interaction to the end effector and adopt simplified contact models, such as quasi-static or quasi-dynamic approximations, that reduce second-order dynamics to first-order velocity-level constraints.
For example, \cite{hogan2020reactive} model contact modes using discrete variables under quasi-static approximation and formulate fingertip pushing as a mixed-integer nonlinear program. Related work incorporates non-smooth contact dynamics through complementarity constraints within trajectory optimization frameworks under quasi-dynamics approximation, giving rise to contact-implicit trajectory optimization (CITO) methods for pushing, sliding, and pivoting tasks \citep{moura2022non, zhang2025simultaneous}.
More recently, zeroth-order optimization approaches have been shown to effectively bypass non-smoothness by treating the contact dynamics as a black box. For instance, \cite{jankowski2025robust} explore Covariance Matrix Adaptation Evolution Strategy (CMA-ES) \citep{hansen2016cma} for trajectory optimization in end-effector–centric planar pushing tasks with quasi-static contact dynamics, demonstrating strong performance in contact-rich manipulation tasks.

Taken together, these methods highlight an important practical insight: contact dynamics should be simplified to make planning more tractable. Restricting contact to the end effector is one effective way to make quasi-static or quasi-dynamic contact models simpler, as it limits the range of possible contact interactions and reduces the complexity of the resulting dynamics. However, this restriction also limits the robot’s manipulation capabilities. Many tasks involving large or bulky objects cannot be reliably performed using only fingertip contact, and instead require extended portions of the robot arm to provide distributed support or geometric constraint.

\subsubsection{Whole-Arm (Body) Manipulation} removes end-effector–centric simplification by exploiting the full robot geometry for contact-rich interaction. This significantly increases the range of achievable manipulation behaviors yet introduces multi-contact planning problems with substantially more complex contact dynamics \citep{king2015nonprehensile, barreiros2025learning, leve2025scaling}, even under quasi-static or quasi-dynamic assumptions.

To address this complexity, several works adopt reinforcement learning (RL) approaches that learn reactive whole-body or whole-arm manipulation policies through extensive offline simulation \citep{zhang2023plan, barreiros2025learning}. Once trained, such policies can provide fast feedback control for specific tasks and contact conditions. However, RL-based approaches typically require large amounts of task-specific data and often generalize poorly beyond the training distribution. In practice, real-world manipulation involves diverse object geometries, physical properties, and contact conditions, making it difficult to train policies that generalize robustly across tasks. From a planning perspective, these methods also shift the burden of handling contact complexity to the learning phase, rather than addressing it directly within the optimization problem.

An alternative line of work seeks to recover similar performance through improved model-based optimization by smoothing contact models and combining them with global planning techniques \citep{pang2023global, suh2025dexterous}. While these approaches enable gradient-based optimization for whole-arm manipulation with less computation time, the required contact smoothing and quasi-dynamic approximations inevitably introduce modeling errors. When execution dynamics deviate from the planning model, such errors can lead to task failure \citep{pang2023global}, highlighting the need for robust planning objectives that explicitly tolerate contact-model mismatch.

\subsubsection{Robust Planning}
Robustness is essential for addressing modeling errors introduced by contact simplifications, as well as uncertainty and disturbances encountered in real-world manipulation. Prior work has explored robustness through belief-space planning \citep{jankowski2025robust, wang2025caging} and worst-case formulations \citep{ogunmolu2018minimax, chen2025adversarial}. In parallel, the RL community has studied robustness via domain randomization \citep{tobin2017domain, muratore2018domain}, online adaptation \citep{bousmalis2018using, arndt2020meta, xue2024robust}, and game-theoretic formulations \citep{kontoudis2019robust, song2022robust, liang2023game, shi2024rethinking}.

In model-based contact-rich manipulation planning, recent work has investigated belief-space formulations that reduce object pose uncertainty either explicitly \citep{jankowski2025robust} or implicitly via caging in time \citep{wang2025caging}. These methods provide important mechanisms for handling uncertainty, but they typically focus on uncertainty in initial object state or perception and are often developed for end-effector-centric manipulation. In contrast, the failure mode studied in this paper is due to contact-model mismatch during whole-arm manipulation, where errors arise from multi-contact geometry, unmodeled mode transitions, and shape simplification. Such mismatch is difficult to represent exhaustively as a belief over initial states or a fixed distribution over physical parameters. 

Our work addresses this complementary setting by exploring whole-arm caging manipulation, where a learned physics-informed eikonal caging metric biases planning toward enclosing robot geometries that make object escape difficult. Rather than replacing belief-space planning, the proposed escape-time field provides a geometric robustness objective that can be embedded into contact-rich planning. Empirically, we show that this objective improves tolerance to contact-model mismatch in open-loop execution with simplified contact models. In this way, exploiting whole-arm contact geometry both expands the range of achievable manipulation behaviors beyond end-effector-centric strategies and provides a mechanism for reducing dependence on highly accurate contact dynamics during planning.

\section{Eikonal Caging}\label{sec:eikonal_caging}
Our formulation begins from the observation that caging is inherently a worst-case phenomenon: an (adversarial) object is considered caged not because it is close to the robot, but because any admissible motion that would allow it to escape requires infinite time. We therefore take the escape-time function as the primary object of interest. This function represents the minimum time required for such an adversarial object to reach the boundary of the escape goal region, given a fixed robot configuration.

Crucially, by treating escape-time as the quantity to be optimized, robustness to contact-model mismatch is made explicit. Any trajectory that maintains a large escape-time margin is inherently tolerant to disturbances, modeling errors, and unmodeled contacts, because such perturbations must first reduce this margin before an escape can occur. The caging metric introduced in this work should thus be understood not as a heuristic objective, but as a representation of this escape-time margin that can be directly incorporated into trajectory optimization for whole-arm manipulation.

\subsection{Preliminary on Caging Theory}
\begin{figure}[!t]
    \centering
    \begin{subfigure}[t]{0.48\linewidth}
        \centering
        \includegraphics[width=\linewidth, trim=11cm 4.5cm 11cm 4.5cm, clip]{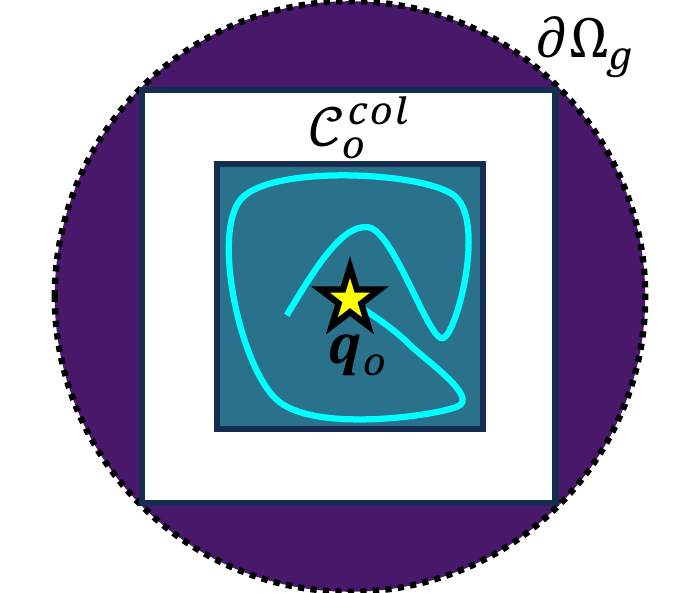}
        \caption{Caging}
        \label{subfig:caging}
    \end{subfigure}
    \hfill
    \begin{subfigure}[t]{0.48\linewidth}
        \centering
        \includegraphics[width=\linewidth, trim=11cm 4.5cm 11cm 4.5cm, clip]{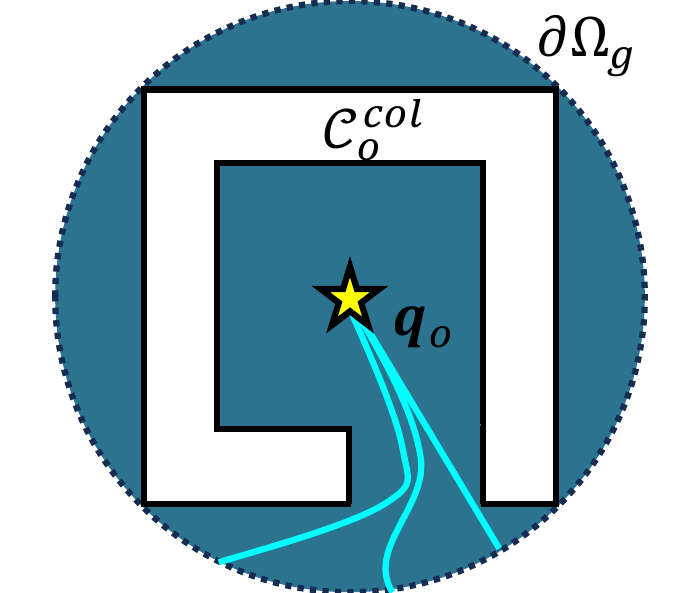}
        \caption{Partial Caging}
        \label{subfig:partial_caging}
    \end{subfigure}
    \caption{Illustrations of caging and partial caging with a point-mass object (yellow star) in its configuration space $\mathbb{R}^2$. The white region indicates environment obstacles, corresponding to the collision set $\mathcal{C}^{\mathrm{col}}_o$. The escape region consists of configurations outside the boundary $\partial \Omega_g$, shown as a dashed circle. Cyan curves indicate possible collision-free paths of the object: in the caging case they remain trapped, whereas in the partial-caging case they reach the escape region $\Omega_g$.} % once pass through $\partial \Omega_g$
    \label{fig:caging_comparison}
    \vspace{-0.5cm}
\end{figure}
\paragraph{Complete Caging:} We say that an environment cages an object if it confines the object’s motion to a bounded region of configuration space, so that, without penetrating with the environment, the object cannot move arbitrarily far away, as shown in Figure~\ref{subfig:caging}.

\paragraph{Partial Caging:} The geometric definition above induces a binary notion of caging: for a given configuration $\bm{q}_{o}$, the environment either cages the object or it does not. In practical grasping and manipulation settings, however, it is often more informative to distinguish how hard it is for an object to escape, even when escape is still possible. This motivates the notion of partial caging, in which the environment does not fully prevent escape, but imposes a non-trivial escape effort $J \geq 0$ that can be quantified by an appropriate cost functional. For example, one possible choice is the length of the object's escaping paths connecting its current configuration to an escape region in configuration space, as illustrated by the cyan curves in Figure~\ref{subfig:partial_caging}.

\subsection{Mathematical Formulation: Eikonal Caging}\label{subsec:eikonal_caging}
\paragraph{Caging as a minimum-time optimal control problem.}
In this work, we instantiate the abstract escape cost $J$ as the \emph{minimum time} $\mathcal{T}$ required for an object moving with bounded velocity to reach the escape region $\Omega_g$ along a collision-free trajectory. We model the object as a \emph{point in configuration space} whose motion is governed by a bounded velocity control. Specifically, the object follows a trajectory $\{\bm q_o^t\}_{t=0}^{T-1}$ driven by control inputs $\bm u_o^t$, subject to the velocity constraint $\|\bm u_o^t\|\le u_{\max}$. The object must remain within the collision-free configuration space $\mathcal{C}^{\mathrm{free}}$ and eventually reach $\Omega_g$ in minimal time, corresponding to the shortest escape path (e.g., the straight cyan escape path shown in Figure~\ref{subfig:partial_caging}). 

The resulting minimum-time escape problem can therefore be formulated as the following optimal control problem:
\begin{equation}\label{eq:mini_time_caging}
\begin{aligned}
    \min_{T, \{\bm{u}_o^t\}_{t=1}^{T-1}} \quad T \\
    \text{subject to} \quad 
    & \bm{q}_{o}^{t+1} = \bm{q}_{o}^t + \bm{u}_o^t \\
    & \dot{\bm q}_{o}^t=\bm u_o^t, \qquad \|\bm u_o^t\|\le u_{\max}, \\
    & \bm q_{o}^t \in \mathcal{C}^{\mathrm{free}} \quad \text{for all } t\in[0,T-1], \\
    & \bm q_{o}(0)=\bm q_0, \qquad \bm q_{o}^{T-1}\in \Omega_{g}, \\
\end{aligned}
\end{equation}

We denote the optimal value of~Equation~\eqref{eq:mini_time_caging} by $\mathcal{T}(\bm q_{o}^0)$. This
scalar function plays a dual role in our framework. First, it recovers complete caging: if no admissible bounded-velocity trajectory can reach $\Omega_g$ while satisfying $\bm q_{o}^{t} \in \mathcal{C}^{\mathrm{free}}$, then $\mathcal{T}(\bm q_{o}^0) = +\infty$, and the environment cages the object at $\bm q_{o}^0$ in the sense of the geometric definition. Second, whenever $0 < \mathcal{T}(\bm q_{o}^0) < +\infty$, this finite optimal time realizes an escape-time--based partial caging metric, directly instantiating $J^\star(\bm q_{o}^0)$ in the construction above. In other words, function $\mathcal{T}$ over all object configurations provides a unified description of complete and partial caging: the set $\{\bm q | \mathcal{T}(\bm q) = +\infty\}$ is the classical cage region, while the finite level sets $\{\bm q | \mathcal{T}(\bm q) \geq \alpha\}$ for $\alpha > 0$ represent graded degrees of partial caging around this region.

\subsection{Eikonal-Time Field and Eikonal Relation}
\paragraph{$\mathcal{T}(\bm q_{o}^0)$ as a scaled geodesic distance.}
Under the bounded-velocity kinematic model $\dot{\bm q}_{o}^{t} = \bm u_o^{t}$ with $\|\bm u_o^{t}\|\le u_{\max}$, the minimal escape time $\mathcal{T}(\bm q_{o}^0)$ in~Equation~\eqref{eq:mini_time_caging} corresponds to the geodesic (shortest-path) distance from $\bm q_{o}^0$ to $\Omega_g$ in the object’s collision-free configuration space, scaled by the maximal speed by $1/u_{\max}$.

First, any admissible trajectory $\bm q_{o}(\cdot)$ from
$\bm q_{o}^0$ to $\Omega_g$ over a time horizon $[0,T-1]$ satisfies
\[
    T \;\ge\; \frac{1}{u_{\max}}
    \int_0^T \|\dot{\bm q}_{o}^{t}\| \,\mathrm{d}t,
\]
If the target object moves at maximal speed ($\|\bm u_o^{t}\| = u_{\max}$) almost everywhere along an admissible trajectory, then the inequality becomes an equality and we obtain
\[
    T \;=\; \frac{1}{u_{\max}}
    \int_0^T \|\dot{\bm q}_{o}^{t}\| \,\mathrm{d}t
    \;=\;
    \frac{1}{u_{\max}} \,\mathrm{Length}\big(\{\bm q_{o}^t\}_{t=1}^{T-1}\big),
\]
where $\mathrm{Length}(\bm q_{o}(\cdot))$ denotes the arc length of the trajectory in configuration space. Thus, minimizing the time $T$ in~Equation~\eqref{eq:mini_time_caging} is equivalent to minimizing the path length over all admissible collision-free trajectories from $\bm q_{o}^0$ to $\Omega_g$, and therefore equivalent to computing the free-space geodesic distance $d_{\mathcal{C}^{\mathrm{free}}}$ from $\bm q_{o}^0$ to the escape region $\Omega_g$:
\[
    \mathcal{T}(\bm q_{o}^0)
    \;=\;
    \frac{1}{u_{\max}} \,
    d_{\mathcal{C}^{\mathrm{free}}}(\bm q_{o}^0, \Omega_g).
\]
In this sense, $\mathcal{T}(\bm q_{o}^0)$ is simply the geodesic distance in
the free configuration space scaled by $1/u_{max}$.

\paragraph{Eikonal characterization of the escape time.} Let $\mathcal{C}^{\mathrm{free}} \subset \mathbb{R}^n$ denote the free configuration space, and assume that $d_{\mathcal{C}^{\mathrm{free}}}(\cdot,\Omega_g)$ is finite on the connected component of $\mathcal{C}^{\mathrm{free}}$ containing $\Omega_g$. The escape-time field is then defined as
\[
    \mathcal{T}(\bm q)
    =
    \frac{1}{u_{\max}} d_{\mathcal{C}^{\mathrm{free}}}(\bm q, \Omega_g).
\]
It is proven that $\mathcal{T}$ is the unique viscosity solution of the eikonal boundary-value problem~\citep{clawson2014causal}
\begin{equation}
    \|\nabla \mathcal{T}(\bm q)\|
    \;=\;
    \frac{1}{u_{\max}}
    \qquad \text{for } \bm q \in \mathcal{C}^{\mathrm{free}} \setminus \Omega_g,
    \label{eq:eikonal_caging}
\end{equation}
with boundary condition $\mathcal{T}(\bm q)=0$ for all $\bm q \in \Omega_g$, and the convention $\mathcal{T}(\bm q)=+\infty$ for configurations that cannot reach $\Omega_g$ within $\mathcal{C}^{\mathrm{free}}$. 

Eikonal equation~\eqref{eq:eikonal_caging} admits a natural wavefront interpretation: it describes a wave that originates on the escape region $\Omega_g$ and propagates through the collision-free configuration space $\mathcal{C}^{\mathrm{free}}$ with constant speed $u_{\max}$, while propagation is blocked in regions that are not collision-free. The value $\mathcal{T}(\bm q)$ at any configuration $\bm q$ is precisely the arrival time of this wavefront, which is equivalent to the scaled geodesic distance $d_{\mathcal{C}^{\textrm{free}}}(\Omega_g, \bm{q})$ from $\Omega_g$ to $\bm q$ in $\mathcal{C}^{\mathrm{free}}$.

As illustrated by white contours in Figure~\ref{subfig:geodesic_dist}, the wavefront originates from the region outside the escape boundary $\partial \Omega_g$ (the dashed circle) and propagates inward through the collision-free space. Within collision-free regions, the wave advances at constant speed $u_{\max}$ (equals to $1$ in the visualization), while propagation is blocked when the wave encounters environmental obstacles (white area). The white contours indicate equal arrival times of the propagating wavefront, meaning that all points on the same contour share the same reaching time. Regions corresponding to obstacles are assigned zero propagation speed and therefore do not participate in the wave propagation. The corresponding spatial speed map used to compute the escape-time field is shown in Figure~\ref{subfig:speed_map}.

\begin{figure}[!t]
    \centering
    \begin{subfigure}{0.48\linewidth}
        \centering
        \includegraphics[width=\linewidth]{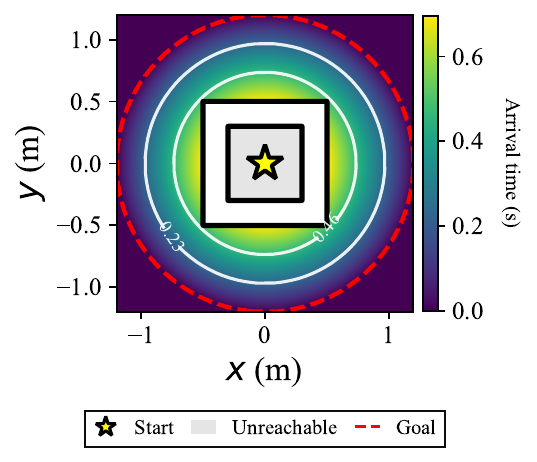}
        \caption{Time field $\mathcal{T}$.}
        \label{subfig:geodesic_dist}
    \end{subfigure}
    \begin{subfigure}{0.48\linewidth}
        \centering
        \includegraphics[width=\linewidth]{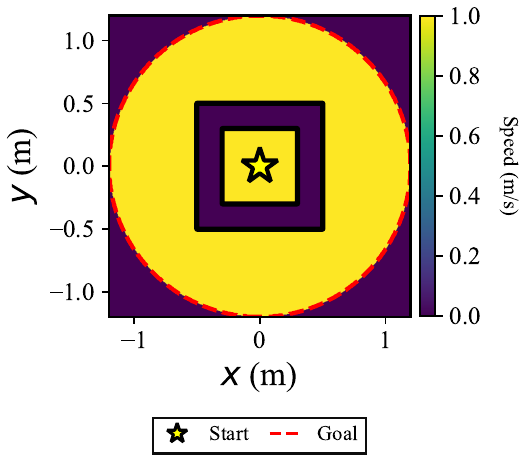}
        \caption{Speed map $u_{\max}$.}
        \label{subfig:speed_map}
    \end{subfigure}
    \caption{(a): Eikonal caging in the configuration space of a point mass (yellow star) interacting with a square-annulus obstacle (white area). The point mass lies inside the inner cavity (grey area), while the goal region is outside the red dashed boundary; No collision-free path exists from the cavity to the goal, so this region forms a cage in configuration space. Contours with values indicate the minimal escape time required for the object on that contour. (b): Speed map used for eikonal caging analysis. The yellow region indicates collision-free configurations with constant speed $u_{max}=1$; the purple square-annulus marks the collision space, where the local speed is zero. We also set the speed to zero outside the workspace boundary, so arrival time there are ignored.}
    \label{fig:eikonal_caging_example}
    \vspace{-0.5cm}
\end{figure}

\paragraph{From Eikonal equation to Eikonal caging.} 
In this article, we assume that the free-space distance $d_{\mathcal{C}^{\mathrm{free}}}$ between two configurations $\bm q_1, \bm q_2$ is symmetric,
\[
    d_{\mathcal{C}^{\mathrm{free}}}(\bm q_1,\bm q_2)
    \;=\;
    d_{\mathcal{C}^{\mathrm{free}}}(\bm q_2,\bm q_1)
    \qquad \forall\, \bm q_1,\bm q_2 \in \mathcal{C}^{\mathrm{free}},
\]
which corresponds physically to assuming that the object’s velocity in configuration space in Equation~\eqref{eq:eikonal_caging} is reversible during the caging analysis. Under this symmetry, the geodesic distance between any configuration $\bm q$ and the escape set $\Omega_g$ can be viewed either as the cost of moving from $\bm q$ to $\Omega_g$ and as the cost of moving from $\Omega_g$ to $\bm q$:
\[
    d_{\mathcal{C}^{\mathrm{free}}}(\bm q,\Omega_g)
    \;=\;
    \inf_{\bm p \in \Omega_g}
    d_{\mathcal{C}^{\mathrm{free}}}(\bm q,\bm p)
    \;=\;
    \inf_{\bm p \in \Omega_g}
    d_{\mathcal{C}^{\mathrm{free}}}(\bm p,\bm q).
\]

Combining this symmetry with the definition of $\mathcal{T}$ and the eikonal characterization in~Equation~\eqref{eq:eikonal_caging}, we obtain a simple interpretation: the solution of the Eikonal equation is simultaneously (i) the arrival time at $\bm q$ of a wavefront propagating from $\Omega_g$ at speed $u_{\max}$, and (ii) the minimal time required for the object, starting from $\bm q$, to reach $\Omega_g$ along a collision-free trajectory under the bounded-velocity model. As a consequence, solving a single eikonal problem in~Equation~\eqref{eq:eikonal_caging} with boundary condition $\mathcal{T}=0$ on $\Omega_g$ yields the escape-time field $\mathcal{T}(\bm q)$ of the object in \emph{all configurations} in $\mathcal{C}^{\mathrm{free}}$. 

This is what we refer to as eikonal caging: we formulate caging and partial caging in terms of the solution of an eikonal equation on the object’s collision-free configuration space, and we exploit the resulting escape-time field both as a theoretical bridge between geometric and escape-based caging and as a practical potential for caging-aware manipulation planning.

\subsection{Illustrative Example on Eikonal Caging}\label{subsec:toy_example_eikonal_caging}

To illustrate the proposed eikonal caging metric, we consider the two-dimensional example in Figure~\ref{fig:eikonal_caging_example}. The object is modeled as a point mass (yellow star) moving in the plane, while the white square-annulus represents an obstacle. The red dashed circle denotes the escape boundary $\partial \Omega_g$, and configurations outside it belong to the escape region $\Omega_g$. The point mass is initially located in the inner cavity (grey region) of the annulus. Since any collision-free path from this cavity to $\Omega_g$ must pass through the obstacle, no feasible escape path exists. Under the classical geometric definition, the inner cavity is therefore a cage region.

Figure~\ref{fig:eikonal_caging_example} (right) shows the speed map used in Equation~\eqref{eq:eikonal_caging}. In collision-free space (yellow), the object moves with constant bounded speed $u_{\max}=1$. Inside the obstacle (purple), the speed is set to zero, preventing propagation through collisions. The speed is also set to zero outside the workspace boundary, excluding those configurations from the computation. Together with the boundary condition $\mathcal{T}(\bm q_o)=0$ on $\Omega_g$, this fully defines the escape-time field.

The resulting solution $\mathcal{T}(\bm q_o)$ assigns to each collision-free configuration the minimum time required to reach the escape region. Configurations outside the annulus have finite values that increase as escape routes become longer or narrower, as shown by the level-set contours in Figure~\ref{subfig:geodesic_dist}. In contrast, the wavefront cannot enter the inner cavity, so the corresponding escape time is effectively $+\infty$. This unreachable region coincides exactly with the classical cage set.

This example highlights two useful properties of the proposed formulation. First, classical caging appears as the special case of infinite escape time. Second, configurations that are not fully caged still receive finite positive values, providing a continuous measure of partial caging and escape difficulty. This graded structure will later allow caging to be incorporated directly into optimization-based planning.

For visualization, the field $\mathcal{T}(\bm q_o)$ is computed on a uniform grid using the Fast Marching Method (FMM) \citep{sethian1996fast, mirebeau2014anisotropic, sanguinetti2015sub, chen2016new}, which provides an efficient numerical approximation of Equation~\eqref{eq:eikonal_caging}.

\section{Whole-Arm Caging Configuration Planning}\label{sec:wacaging_conf}
\subsection{Mathematical Formulation}
Classical caging models often rely on simplified geometries such as point fingers or circular grippers. Here, we instead treat the full articulated robot arm as a reconfigurable caging geometry and seek a joint configuration that maximizes the object's escape difficulty. Given the current object configuration $\bm q_o^0$, the robot is modeled as an articulated obstacle parameterized by its joint configuration $\bm q_r$.

We formulate whole-arm caging configuration planning as
\begin{equation}\label{eq:wm_caging}
\begin{aligned}
    \max_{\bm{q}_{r}} \;
    & \min_{T, \{\bm{u}_o^{t}\}_{t=0}^{T-1}} \quad T \\
    \text{s.t} \quad 
    & \dot{\bm q}_{o}^t=\bm u_o^t, \qquad \|\bm u_o^t\|\le u_{\max}, \\
    & \phi_{r}\!\left(\bm q_{o}^t, \bm{q}_{r}\right)\ge 0 \quad \text{for all } t\in[0,T-1], \\
    & \bm q_{o}^{t=0}=\bm{q}_{o}^0, \qquad \bm q_{o}^{T-1}\in \Omega_g, \\
    & \bm q_{r}^{\text{lw}} \leq \bm q_{r} \leq \bm q_{r}^{\text{up}}, \\
\end{aligned}
\end{equation}
where $\phi_{r}\!\left(\bm q_{o}^{t}, \bm{q}_{r}\right) \geq 0$ encodes the collision-free configurations $\mathcal{C}^{\mathrm{free}}_{o, r}$ (written as $\mathcal{C}^{\textrm{free}}$ for short in the following) in the joint configuration space of the robot and target object; $\Omega_g$ indicates the escape region and corresponds to the region out of the arm's workspace; $[\bm q_{r}^{\text{lw}}, \bm q_{r}^{\text{up}}]$ are the robot's joint limits.

For any fixed robot configuration $\bm q_r$, the inner minimization is exactly the minimum-time escape problem introduced in Section~\ref{subsec:eikonal_caging}. Its optimal value defines the escape-time field $\mathcal T(\bm q_o^0;\bm q_r)$,which measures the shortest time for the object to reach $\Omega_g$ under bounded motion while remaining collision-free. The outer optimization therefore selects the robot configuration that maximizes this escape time:
\begin{equation}\label{eq:wm_caging_max}
\bm q_r^* \in \arg\max_{\bm q_r}\; \mathcal T(\bm q_o^0;\bm q_r).
\end{equation}

\emph{Game-theoretic interpretation:} Equation \eqref{eq:wm_caging} admits a natural min--max interpretation: the object seeks the fastest feasible escape trajectory, while the robot chooses a configuration that maximizes the corresponding worst-case escape time. In this sense, the resulting configuration is geometrically robust to adversarial bounded object motion.

The next two subsections describe how we represent the collision-free boundary $\phi_r$ and how we approximate $\mathcal T(\bm q_o^0;\bm q_r)$ with a differentiable physics-informed model for efficient optimization.

\subsection{Collision-Free Boundary Representation}\label{subsec:rdf}
We represent the robot--object collision boundary $\phi_r(\bm q_o,\bm q_r)$ using the kinematics-aware robot signed distance field (RDF) introduced in \citep{li2024representing}.
The RDF models each robot link with a continuous signed distance field in its local frame, while forward kinematics are used at inference time to evaluate the signed distance $f_r(\bm p,\bm q_r)$ from any workspace query point $\bm p$ to the robot surface at configuration $\bm q_r$.

To handle target objects of arbitrary geometry, we represent the object by a set of $K$ surface samples rigidly attached to its configuration, $\mathcal P(\bm q_o)=\{\bm p_i(\bm q_o)\}_{i=1}^K $. We then define the collision-free boundary function as
\begin{equation}
\phi_r(\bm q_o,\bm q_r)=\min_{\bm p\in\mathcal P(\bm q_o)} f_r(\bm p,\bm q_r).
\end{equation}

Hence, $\phi_r(\bm q_o,\bm q_r)\ge0$ implies that all sampled object points lie outside or on the robot surface, and the corresponding configuration is collision-free. The zero level set $\phi_r=0$ therefore implicitly defines the robot--object contact boundary.

This representation is efficient to evaluate in batch, accommodates arbitrary articulated robot geometries, and naturally supports gradient-based optimization in the following subsections.

\subsection{Physics-Informed Eikonal Caging}\label{subsec:piec}

The escape-time field $\mathcal T(\bm q_o;\bm q_r)$ depends jointly on the robot configuration and object geometry, since both determine the collision-free configuration space. To handle objects of arbitrary shape efficiently, we approximate the object-level escape time by aggregating escape times of a set of key points attached to the object. For key points $\{\bm p_i\}_{i=1}^K$, we define
\begin{equation}
\hat{\mathcal T}(\bm q_o;\bm q_r)
=
\frac{1}{K}\sum_{i=1}^{K}\mathcal T_p(\bm p_i,\bm q_r),
\label{eq:object_time_average}
\end{equation}
where $\mathcal T_p(\bm p_i,\bm q_r)$ denotes the escape time of point $\bm p_i$ under the caging geometry induced by $\bm q_r$. This reduces arbitrary object geometry to batched point queries of a shared field predictor.

We model $\mathcal T_p(\bm p,\bm q_r)$ using a physics-informed neural network (PINN) that directly enforces the eikonal equation. Unlike supervised regression, this learns the escape-time field without requiring precomputed labels or repeated numerical solution of the inner escape problem during training.

Specifically, we parameterize
\begin{equation}
\mathcal T_p(\bm p,\bm q_r)
=
\sigma\!\big(t_\theta(\bm p,\bm q_r)\big),
\label{eq:pinn_factorization}
\end{equation}
where $t_\theta$ is a multilayer perceptron and $\sigma(\cdot)$ is a nonnegative activation ensuring $\mathcal T_p\ge0$.

The network parameters $\theta$ are optimized using the loss
\begin{equation}
\mathcal L_{\mathrm{eik}}
=
\lambda_{\mathrm{pde}}\mathcal L_{\mathrm{pde}}
+
\lambda_{\mathrm{bc}}\mathcal L_{\mathrm{bc}},
\end{equation}
where
\[
\mathcal L_{\mathrm{pde}}
=
\Big(
\|\nabla_{\bm p}\mathcal T_p(\bm p,\bm q_r)\|
-
\frac{1}{u_{\max}}
\Big)^2
\]
enforces the eikonal equation, and
\[
\mathcal L_{\mathrm{bc}}
=
\mathbb E_{\bm p\in\Omega_g}
\big[\mathcal T_p(\bm p,\bm q_r)^2\big]
\]
enforces the boundary condition $\mathcal T_p=0$ on the escape region $\Omega_g$.

The resulting model provides a smooth and differentiable surrogate for the inner minimum-time escape problem in Section~\ref{sec:wacaging_conf}. Consequently, caging objectives can be queried efficiently and optimized directly with respect to robot configurations and trajectories. Additional implementation details are provided in Appendix~\ref{app:pinn}.

\subsection{Theoretical Analysis}
In this subsection, we provide a brief theoretical justification that our point-based approximation $\hat{\mathcal{T}}(\bm{q}_{o}, \bm{q}_{r})$ is a conservative lower bound on the true rigid-body escape time $\mathcal{T}(\bm{q}_{o}, \bm{q}_{r})$.

\begin{proposition}\label{prop:point_cloud_lower_bound}
$\hat{\mathcal{T}}(\bm{q}_{o}, \bm{q}_{r})$ is the lower bound of $\mathcal{T}(\bm{q}_{o}, \bm{q}_{r})$ for an object with arbitrary shape.
\end{proposition}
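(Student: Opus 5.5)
The plan is to compare the rigid-body escape problem with $K$ decoupled point escape problems. Take any admissible rigid-body escape trajectory and show that each key point, viewed on its own, traces an admissible point-escape trajectory whose duration is no longer. Averaging over the key points then gives the bound. Throughout, $\bm q_r$ is fixed.

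\textbf{Step 1 (feasibility of the induced point paths).} Fix $\bm q_r$ and let $\bm q_o(\cdot)$ on $[0,T]$ be any admissible object trajectory for the rigid-body problem in Equation~\eqref{eq:wm_caging}. Admissibility means $\phi_r(\bm q_o(t),\bm q_r)\ge 0$ for all $t$, and the object ends in $\Omega_g$. Consider the induced key-point trajectories $\bm p_i(t)=\bm p_i(\bm q_o(t))$.
\begin{itemize}
\item By the definition $\phi_r=\min_{\bm p\in\mathcal P(\bm q_o)} f_r(\bm p,\bm q_r)$, we get $f_r(\bm p_i(t),\bm q_r)\ge 0$ for every $i$ and $t$. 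So each $\bm p_i(\cdot)$ is collision-free for the point problem.
\item The object has escaped when it lies outside the arm's workspace, which I interpret as all of its points lying in $\Omega_g$. Hence each $\bm p_i(T)\in\Omega_g$.
\end{itemize}

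\textbf{Step 2 (speed comparison).} Next, compare speeds. I will state as a modelling convention that the bounded-velocity constraint $\|\bm u_o\|\le u_{\max}$ is measured so that every material point of the body moves with workspace speed at most $u_{\max}$. Equivalently, the configuration-space metric is chosen so that $\|\dot{\bm p}_i\|\le\|\dot{\bm q}_o\|$; for pure translation this is equality.

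Under this convention, each $\bm p_i(\cdot)$ is admissible for the point problem defining $\mathcal T_p(\bm p_i(0),\bm q_r)$, with duration $T$. Therefore $\mathcal T_p(\bm p_i,\bm q_r)\le T$ for every $i$.

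\textbf{Step 3 (averaging and passing to the infimum).} Averaging over $i$ gives $\hat{\mathcal T}(\bm q_o;\bm q_r)=\frac1K\sum_i\mathcal T_p(\bm p_i,\bm q_r)\le T$. Taking the infimum over all admissible rigid-body escapes yields $\hat{\mathcal T}\le\mathcal T$. The case $\mathcal T=+\infty$ is trivial. In fact the argument shows the stronger bound $\max_i \mathcal T_p(\bm p_i,\bm q_r)\le\mathcal T$, of which the average is a further relaxation. It can also be phrased through the geodesic characterization, $\mathcal T=d_{\mathcal C^{\mathrm{free}}}/u_{\max}$: the rigid-body free space projects into each point's free space, so point geodesics are no longer.

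\textbf{Main obstacle.} The hardest step is Step 2, because the paper never specifies a metric on the object configuration space once rotations are included. I would handle this by making the convention explicit, namely a velocity bound on all body points. The alternative is to restrict the claim to translational escapes, where $\dot{\bm p}_i=\dot{\bm q}_o$ holds exactly.

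A secondary subtlety is that the sampled points only under-approximate the true body. Using more points can only shrink the rigid-body free space, so the inequality is preserved: the true escape time with the full shape is at least the sampled-shape escape time, which is at least $\hat{\mathcal T}$.
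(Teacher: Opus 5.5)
Your proposal is correct and follows essentially the same argument as the paper: each key point of an admissible rigid-body escape traces a feasible point-mass escape of the same duration, so $T\ge\max_i\mathcal T_p(\bm p_i;\bm q_r)\ge\hat{\mathcal T}(\bm q_o;\bm q_r)$, and one then passes to the optimal escape time. You are more careful than the paper in two places: it leaves your Step~2 speed comparison implicit by simply asserting that the induced point paths are feasible, and it writes ``maximum over all such trajectories'' where the infimum you take is what is needed.
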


\begin{proof}
    Take any admissible rigid-body escape trajectory $\bm{q}_{o}^{t}$ with escape time $T$, all collision and dynamics constraints are satisfied, and all sampled points $\bm{p}_{i}^{t}$ induced by the rigid motion satisfy $\bm{p}_i^{T-1} \in \Omega_g$. For each $i$, the trajectory $\bm{p}_i^{t}$ is a feasible point-mass escape trajectory for $\bm{p}_i^0$, so by minimality of $\mathcal{T}_p(\bm{p}_i; \bm{q}_{r})$ we have
    \[
        T \geq \mathcal{T}_{p}(\bm{p}_i; \bm{q}_{r}), \quad \forall \ i.
    \]
    Thus $T \geq \max_i \mathcal{T}_p(\bm{p}_i; \bm{q}_{r})$. Since this holds for any admissible rigid-body escape trajectory and its time $T$, it also holds for the maximum over all such trajectories. Therefore, we have 
    \[
        \mathcal{T}(\bm{q}_{o}; \bm{q}_{r}) \geq \max_{i} \mathcal{T}_p(\bm{p}_i;\bm{q}_{r}).
    \]
    Finally, all point-wise escape times are non-negative, so the maximum dominates the average: 
    \[
        \max_{i} \mathcal{T}_p(\bm{p}_i;\bm{q}_{r}) \geq \frac{1}{K} \sum_{i=1}^{K}
    \mathcal{T}_{p}(\bm p_i, \bm q_{r}) = \hat{\mathcal{T}}(\bm{q}_{o}; \bm{q}_{r}).
    \]
    Combing the two inequalities therefore yields the claimed lower bound: $\mathcal{T}(\bm{q}_{o}; \bm{q}_{r}) \geq \hat{\mathcal{T}}(\bm{q}_{o}; \bm{q}_{r})$. \qed
\end{proof}

\paragraph{Intuitive explanation:} Intuitively, a rigid object
is more constrained than its individual points: every feasible rigid-body escape motion induces feasible point-mass motions, but not conversely. Therefore, the object can never escape faster than its slowest point could in isolation. This yields a clean ordering: 1) the true object escape time is bounded below by the maximum point-wise escape time, 2) which in turn is bounded below by the average point-wise escape time we use in our approximation.

\paragraph{Practical lower bound selection: average over maximum.} From a theoretical perspective, the maximum over point-wise escape times is the tightest point-based lower bound on the true rigid-body escape time. However, in practice we choose to use the average $\hat{\mathcal{T}}$ as lower-bound approximation for two reasons. First, it yields a smoother and more stable quantity for optimization: all points contribute to the objective and its gradient, rather than only the single worst-case point, which improves numerical behavior when optimizing over robot configurations. Second, in multi-object scenes the average encourages the robot to increase the escape time of all objects collectively, rather than focusing exclusively on the single most vulnerable point. At the same time, by the Proposition above, $\hat{\mathcal{T}}$  remains a conservative lower bound on the true escape time, ensuring that large values of $\hat{\mathcal{T}}$ correspond to scenes that are at least as difficult to escape in the rigid-body sense.

\subsection{Whole-Arm Caging Configuration Optimization}
We now instantiate Equation~\eqref{eq:wm_caging} by replacing the exact escape time with its point-based approximation $\hat{\mathcal T}(\bm q_o^0;\bm q_r)$. This yields
\begin{equation}\label{eq:wm_caging_lb}
\begin{aligned}
\max_{\bm q_r}\quad
& \hat{\mathcal T}(\bm q_o^0;\bm q_r) \\
\text{s.t.}\quad
& \phi_r(\bm q_o^0,\bm q_r)\ge0, \\
& \bm q_r^{\mathrm{lw}} \le \bm q_r \le \bm q_r^{\mathrm{up}}.
\end{aligned}
\end{equation}

As shown in the previous subsection, $\hat{\mathcal T}$ is a conservative lower bound of the true rigid-body escape time $\mathcal T_{\mathrm{obj}}$. For planning, however, exact values are less important than ranking robot configurations by caging quality. Maximizing a smooth lower bound therefore remains an effective surrogate for increasing the true escape difficulty.

Because $\hat{\mathcal T}(\bm q_o^0;\bm q_r)$ is efficiently evaluated and differentiable with respect to $\bm q_r$, the problem can be solved using either gradient-free methods such as CMA-ES or gradient-based methods such as sequential quadratic programming (SQP).

The following example illustrates both the fidelity of the learned field and its use for whole-arm caging configuration optimization.

\subsection{Illustrative Example on Whole-Arm Caging Configuration Planning}
\label{subsec:example_wm_caging}

We consider a planar 4-DOF robot arm anchored at the origin with capsule links of lengths $[0.4,0.4,0.4,0.2]$ m and radius $0.05$ m. The workspace boundary, which defines the escape set $\Omega_g$, is modeled as a circle of radius $1.4$ m centered at the robot base. A point-mass object is placed inside the workspace, and the proposed neural eikonal model is used to approximate the escape-time field $\mathcal T(\bm p,\bm q_r)$ conditioned on robot configuration.

\begin{figure}[!t]
    \centering
    \includegraphics[width=\linewidth]{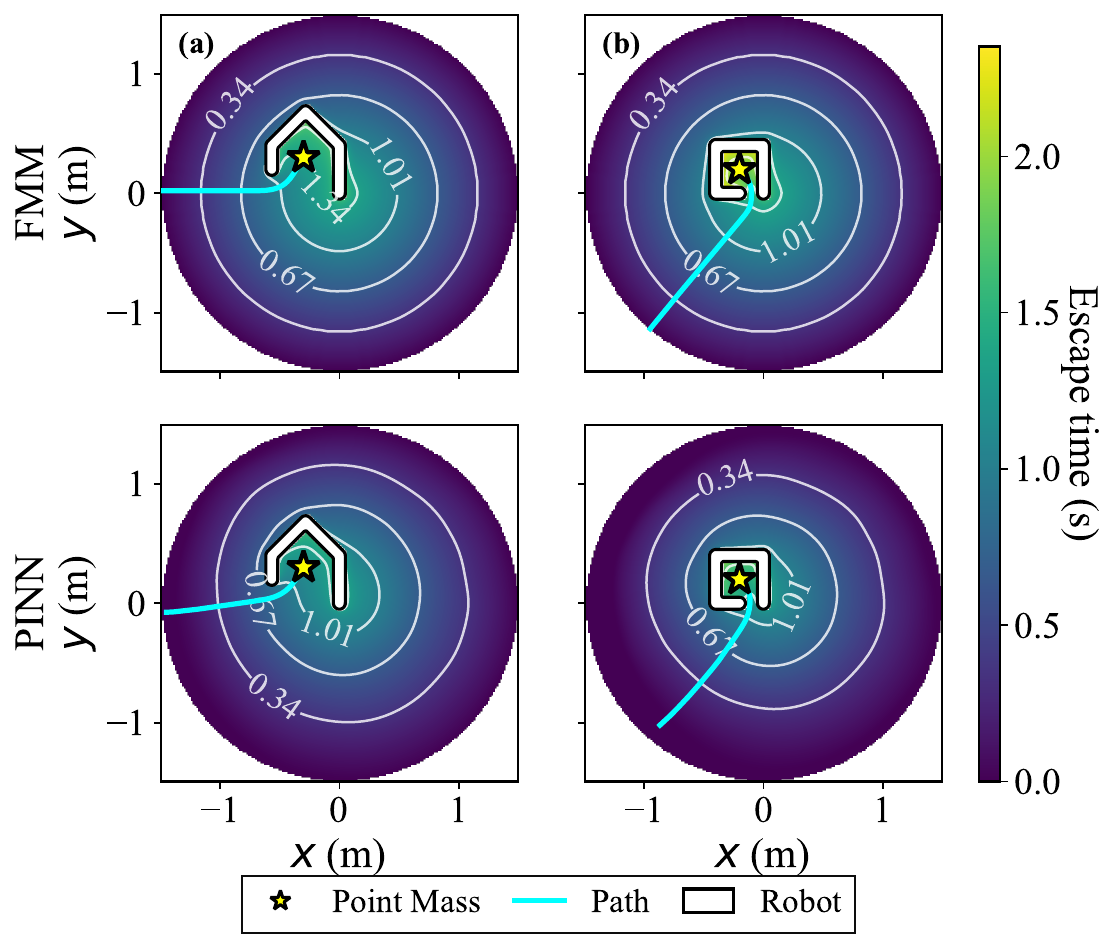}
    \caption{Comparison between the PINN-based eikonal caging time field and the ground-truth solution from FMM. Two representative robot configurations and point-mass positions are shown, illustrating that the learned model closely matches the true escape-time field and recovers the corresponding optimal escape paths.}
    \label{fig:pinn_vs_fmm}
    \vspace{-0.5cm}
\end{figure}
\begin{figure}[!t]
    \centering
    \includegraphics[width=\linewidth]{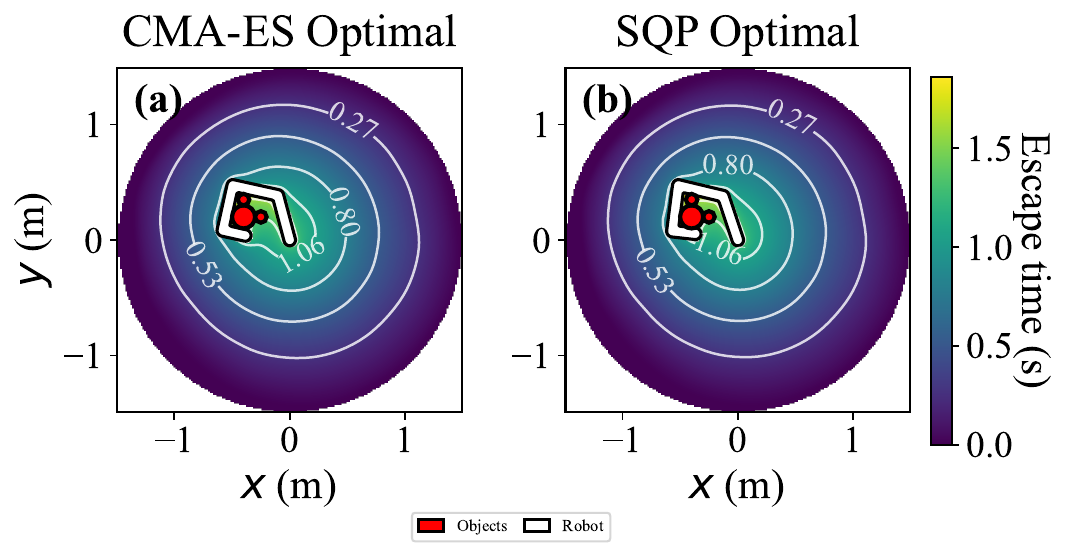}
    \caption{Optimal caging configurations found with CMA-ES and SQP, maximizing the learned eikonal caging metric of an object represented by three circles shown in red.}
    \label{fig:wm_caging_opt_cmaes_sqp}
    \vspace{-0.5cm}
\end{figure}

\paragraph{Field approximation accuracy.}
Figure~\ref{fig:pinn_vs_fmm} compares the learned field\textsuperscript{1} with ground-truth FMM solutions for two representative robot configurations. The predicted contours closely match the FMM solutions, and the induced steepest-descent escape paths are consistent with the corresponding shortest-time trajectories. This indicates that the learned model captures both the scalar field values and their geometric structure.
\footnotetext{Escape time values are scaled by $0.02$ for all plots in the following for rendering.}

\paragraph{Configuration optimization.}
We next optimize Equation~\eqref{eq:wm_caging_lb} for an object represented by three circles, using both CMA-ES and SQP from the same initial robot configuration. As shown in Figure~\ref{fig:wm_caging_opt_cmaes_sqp}, both optimizers converge to enclosing arm postures that wrap around the object and increase the predicted escape time. The final objective values are comparable, suggesting that the learned caging objective is numerically well behaved for both gradient-free and gradient-based optimization.

Together, these results show that the proposed model is sufficiently accurate for planning and can be used directly as an optimization objective for whole-arm caging configuration search. Larger-scale evaluations with realistic geometries are presented in Section~\ref{subsec:exp_wm_caging_conf}.

\section{Whole-Arm Caging Manipulation}\label{sec:wacaging_mani}
We now extend whole-arm caging from static configuration planning to dynamic manipulation. Rather than searching for a single enclosing posture, the robot must execute a trajectory that achieves the task objective while maximizing geometric enclosure of the object. This section incorporates the proposed caging metric into trajectory optimization and uses an illustrative example to compare the behaviors induced by different manipulation objectives. These comparisons motivate the robustness analyses in the following sections.

\subsection{Mathematical Formulation}\label{subsec:wm_cage_mani_form}
We formulate whole-arm caging manipulation as the finite-horizon optimization problem:
\begin{subequations}\label{eq:wm_caging_mani}
\begin{align}
    \min_{\{\bm{u}_{r}^t\}_{t=0}^{N-1}} \quad 
    & \sum_{t=0}^{N-1} 
      \bigl[
        l\big(\bm{q}_{o}^t, \bm{q}_{r}^t, \bm{u}_{r}^t\big)
        - \hat{\mathcal{T}}\big(\bm{q}_{o}^t, \bm{q}_{r}^t\big)
      \bigr],
      \label{eq:wm_caging_mani_obj} \\[2pt]
    \text{s.t.} \quad
    & \bm{q}^{t+1} = f\big(\bm{q}^{t}, \bm{u}_{r}^{t}\big),
      \label{eq:wm_caging_mani_dyn} \\
    & \phi_{r}\!\left(\bm q_{o}^t, \bm{q}_{r}^t\right)\ge 0 \quad t=0,\dots,N, 
    \label{eq:wm_caging_mani_col}\\
    & \bm q_{r}^{\text{lw}} \leq \bm q_{r}^t \leq \bm q_{r}^{\text{up}},
      \quad t=0,\dots,N.
      \label{eq:wm_caging_mani_bounds}
\end{align}
\end{subequations}
Here, $\bm q^t=[\bm q_o^t,\bm q_r^t]$ denotes the joint object--robot state, and $\bm u_r^t$ is the robot control input. The stage cost $l(\cdot)$ captures task objectives such as goal reaching, motion smoothness, or control effort. The caging reward $\hat{\mathcal T}(\bm q_o^t,\bm q_r^t)$ favors states from which the object is difficult to escape, thereby encouraging persistent whole-arm enclosure during manipulation.

The dynamics function $f(\bm q^t,\bm u_r^t)$ models the coupled robot--object evolution under contact. Following prior work \citep{anitescu2006optimization, pang2023global, jin2024complementarity}, we use an optimization-based quasi-dynamic contact model in which object and robot velocities at each step are obtained from a quadratic program. Additional details are provided in Section~\ref{subsec:contact_dynamics}.

\subsection{Two Interpretations of Our Planner}

Equation~\eqref{eq:wm_caging_mani} admits two complementary viewpoints that help explain the behavior of the proposed planner.

\paragraph{(i) Contact-rich trajectory optimization with a learned caging regularizer.}
From a standard planning perspective, Equation~\eqref{eq:wm_caging_mani} is a contact-rich manipulation optimizer augmented with the reward term $-\hat{\mathcal T}(\bm q_o^t,\bm q_r^t)$. This term favors robot states from which the object is difficult to escape, encouraging the arm to use its full geometry rather than relying only on the end-effector or a single link. Under this interpretation, our method is a whole-arm manipulation planner equipped with a geometry-aware cost shaping term.

\paragraph{(ii) Approximate robust planning against fast object escape.}
The same objective can also be interpreted as planning against a bounded adversarial escape model. At each state $(\bm q_o^t,\bm q_r^t)$, the quantity $\mathcal T(\bm q_o^t,\bm q_r^t)$ represents the minimum time required for an object with bounded velocity to reach region outside the robot's workspace. Maximizing this value therefore biases the planner toward more robust configurations that remain safe even under escaping object motion.

Replacing $\mathcal T$ with its learned approximation $\hat{\mathcal T}$ yields the practical optimization problem in Equation~\eqref{eq:wm_caging_mani}. In this sense, the planner implicitly accounts for worst-case escape behavior without explicitly solving an inner adversarial control problem online.

This viewpoint also helps explain the empirical robustness observed in Section~\ref{sec:robust_analyse}: states with large escape time provide additional geometric containment, making the object harder to lose under disturbances or simplified contact models.

\subsection{Contact Dynamics Modeling}\label{subsec:contact_dynamics}

Robot--object interaction is modeled using a standard optimization-based quasi-dynamic contact formulation. At each time step, given the current state $\bm q=[\bm q_o^\top,\bm q_r^\top]^\top$ and robot control input $\bm u_r$, we compute the generalized velocity $\bm v=\dot{\bm q}$ by solving a convex quadratic program (QP), and then update the configuration via $\bm q^{+}=\bm q \oplus h\,\bm v$, where $h$ is the time step and $\oplus$ denotes integration on the configuration manifold.

Specifically, we solve
\begin{equation}\label{eq:qp}
\begin{aligned}
\min_{\bm v}\quad
& \tfrac{1}{2}h^2 \bm v^\top \bm Q \bm v - h\,\bm v^\top \bm b(\bm u_r), \\
\text{s.t.}\quad
& (\bm J_i^n-\mu_i \bm J_{i,j}^t)\bm v + \tfrac{\phi_i}{h} \ge 0, \\
& i=1,\dots,n_c,\quad j=1,\dots,n_d,
\end{aligned}
\end{equation}
where $\bm Q$ and $\bm b(\bm u_r)$ encode inertial and control terms, $\phi_i$ is the signed distance at contact $i$, $\bm J_i^n$ and $\bm J_{i,j}^t$ are normal and tangential contact Jacobians, and $\mu_i$ is the friction coefficient \citep{anitescu2006optimization, pang2023global, jin2024complementarity}.

This first-order model is substantially simpler than full second-order rigid-body dynamics, while retaining the key geometry- and friction-dependent structure of contact interactions. In Equation~\eqref{eq:wm_caging_mani_dyn}, the dynamics function $f(\bm q^t,\bm u_r^t)$ is implemented by one QP solve followed by the integration step above.

Section~\ref{subsec:contact_dyn_simp} later introduces deliberate simplifications of this model to evaluate planner robustness. As shown there, trajectories optimized with the caging objective often remain effective even when the contact model is simplified.

\subsection{Illustrative Example on Whole-Arm Caging Manipulation}\label{subsec:caging_manipulation_example}
We use a controlled planar manipulation task to compare how different objective primitives shape whole-arm robot--object interaction. This example follows the first interpretation of our whole-arm caging manipulation planner and its main purpose is diagnostic rather than statistical: to visualize objective-induced behaviors before the robustness studies in Section~\ref{sec:robust_analyse}.

\begin{figure*}[!t]
    \centering
    \begin{subfigure}[t]{0.25\linewidth}  
        \centering
        \includegraphics[width=\linewidth]{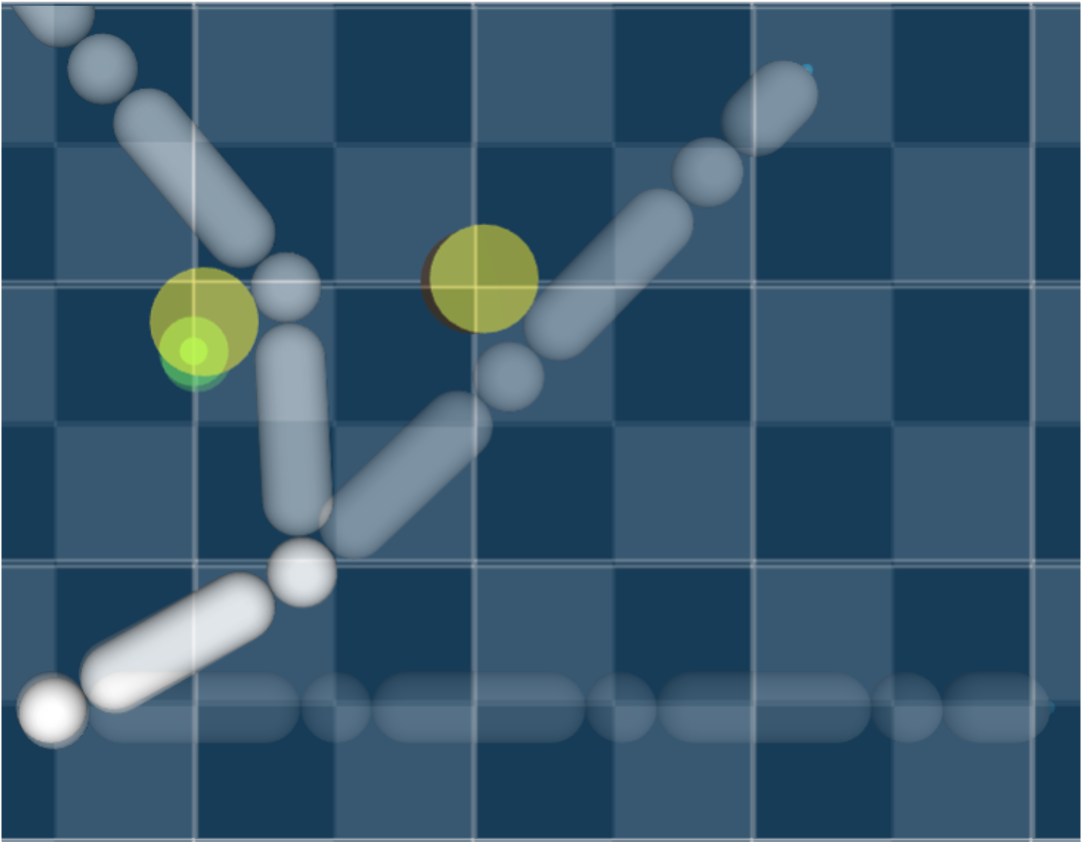}
        \caption{Mode 1 (Baseline 1)}
        \label{subfig:wm_cage_mani_toy_m1}
    \end{subfigure}
    \hspace{0.03\linewidth}
    \begin{subfigure}[t]{0.25\linewidth} 
        \centering
        \includegraphics[width=\linewidth]{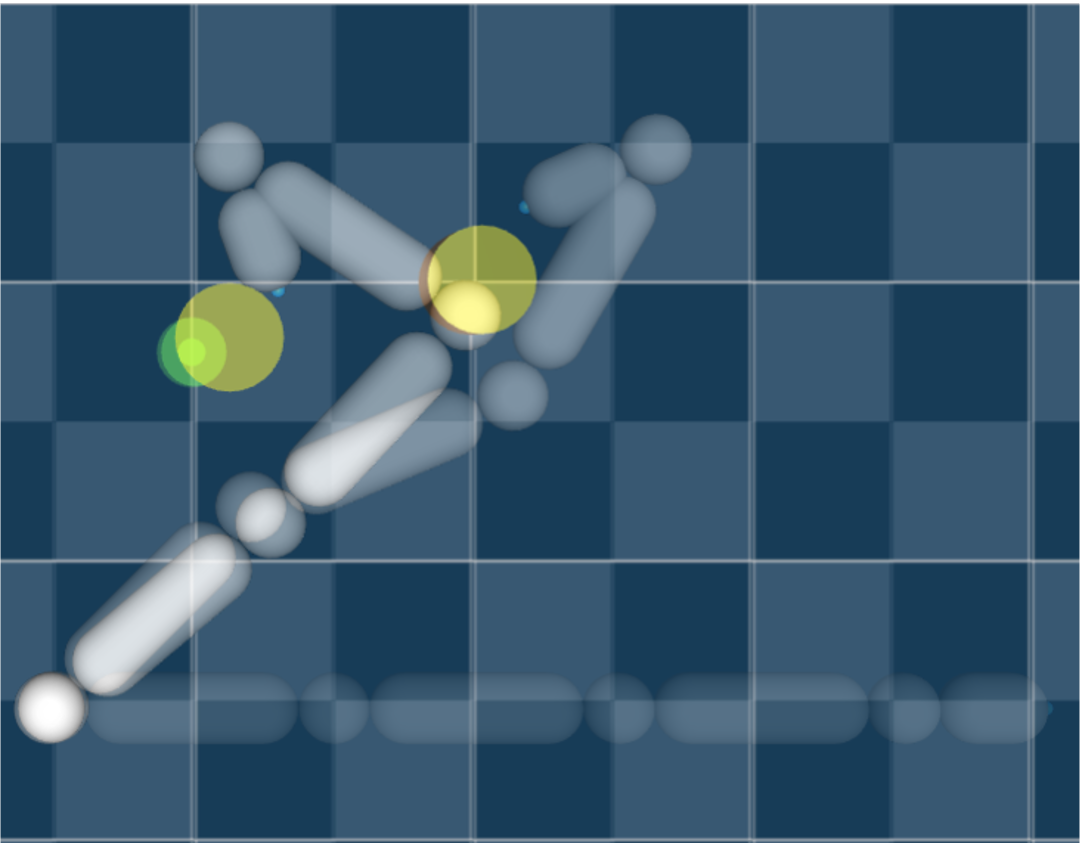}
        \caption{Mode 2 (Baseline 2)}
        \label{subfig:wm_cage_mani_toy_m2}
    \end{subfigure}
    \hspace{0.03\linewidth}
    \begin{subfigure}[t]{0.25\linewidth} 
        \centering
        \includegraphics[width=\linewidth]{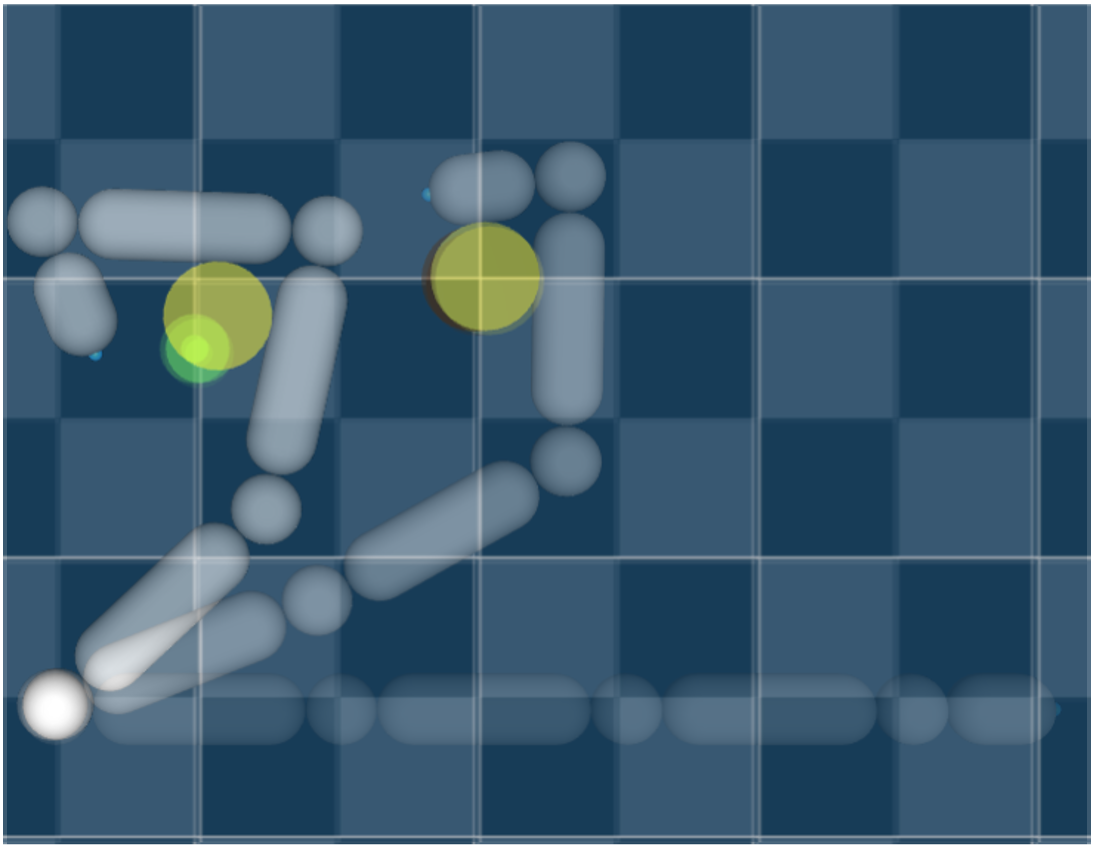}
        \caption{Mode 3 (Ours)}
        \label{subfig:wm_cage_mani_toy_m3}
    \end{subfigure}

    \caption{Whole-arm caging manipulation under three objective modes. Mode 1 minimizes the minimum distance between the object and the \emph{whole} robot arm; Mode 2 minimizes the average distance between the object and \emph{all robot links}; Mode 3 uses the learned escape-time field for whole-arm caging (our method). For each mode, we show the same initial robot configuration, the robot–object configuration at first contact, and the final configuration at the end of the manipulation. Keyframes are overlaid with transparency to illustrate the evolution of robot–object interaction under each objective. The robot (with spherical revolute-joint representation) is initialized at the zero configuration (horizontal in the images), and the object (yellow circle) is pushed toward the green goal region.}
   \label{fig:wm_cage_mani_toy}
   \vspace{-0.5cm}
\end{figure*}

\begin{figure}[!t]
    \centering
    \begin{subfigure}{\linewidth}  
        \centering
        \includegraphics[width=\linewidth]{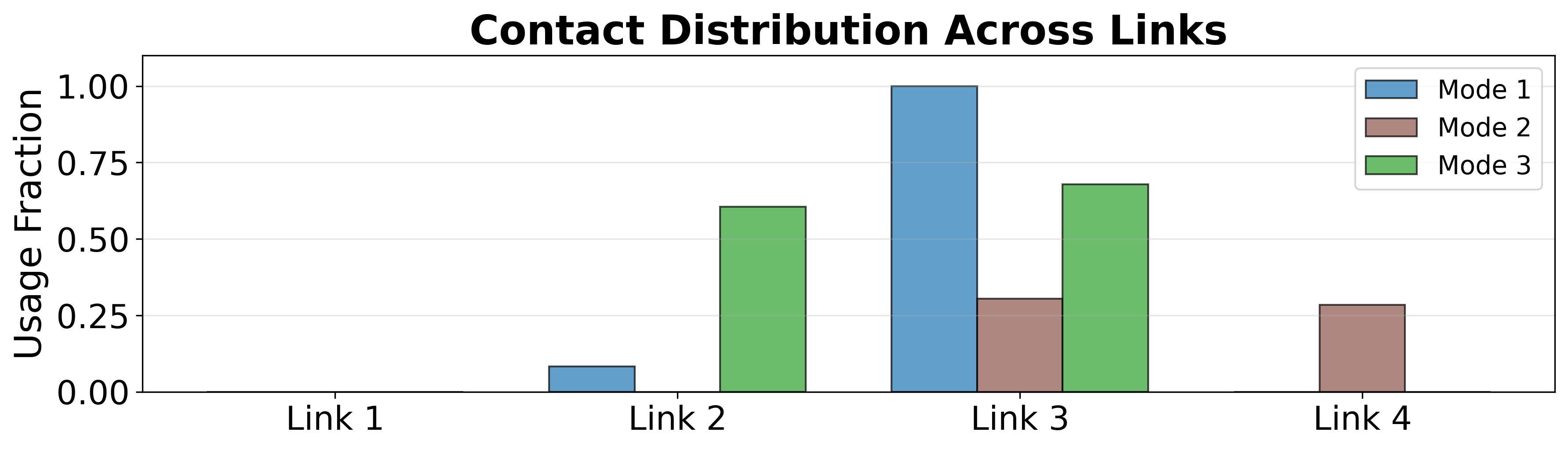}
        \caption{Fraction of time each arm link remains in contact with the object. Mode 1 relies primarily on a single mid-arm link (Link 3). Mode 2 shares proximity mainly between Links 3 and 4, while caging-aware manipulation (Mode 3) maintains sustained, overlapping contact across multiple links.}
        \label{subfig:arm_utilization}
    \end{subfigure}
    \begin{subfigure}[t]{0.48\linewidth} 
        \centering
        \includegraphics[width=\linewidth]{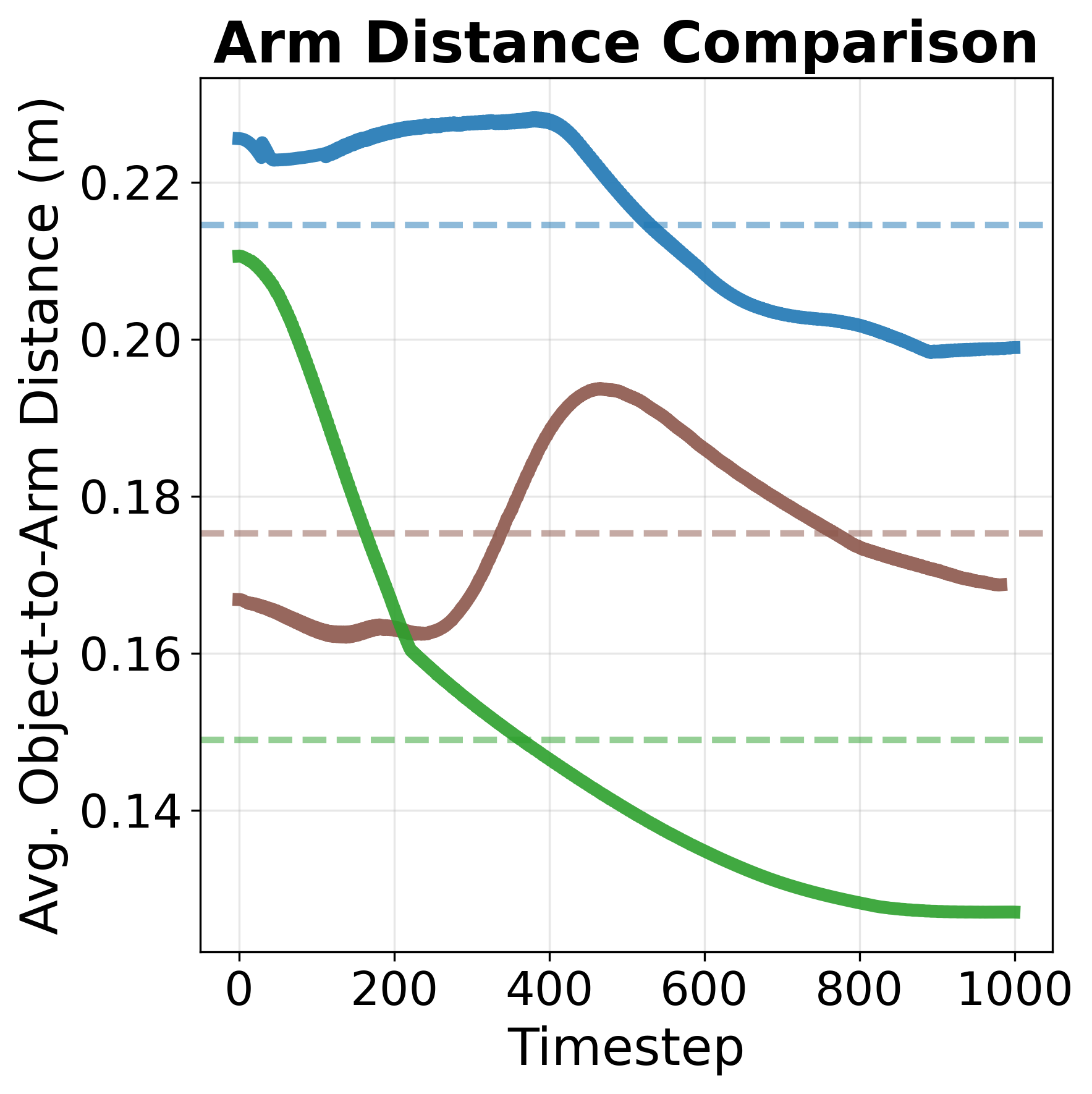}
        \caption{Average object–arm distance over time. Mode~2 rapidly reduces distance by early arm folding, followed by an increase during object transport. Mode~3 maintains larger distance initially and tightens gradually. Dash lines indicate mean values averaged over time.}
        \label{subfig:arm_dist_comp}
    \end{subfigure}
    \hfill
    \begin{subfigure}[t]{0.48\linewidth} 
        \centering
        \includegraphics[width=\linewidth]{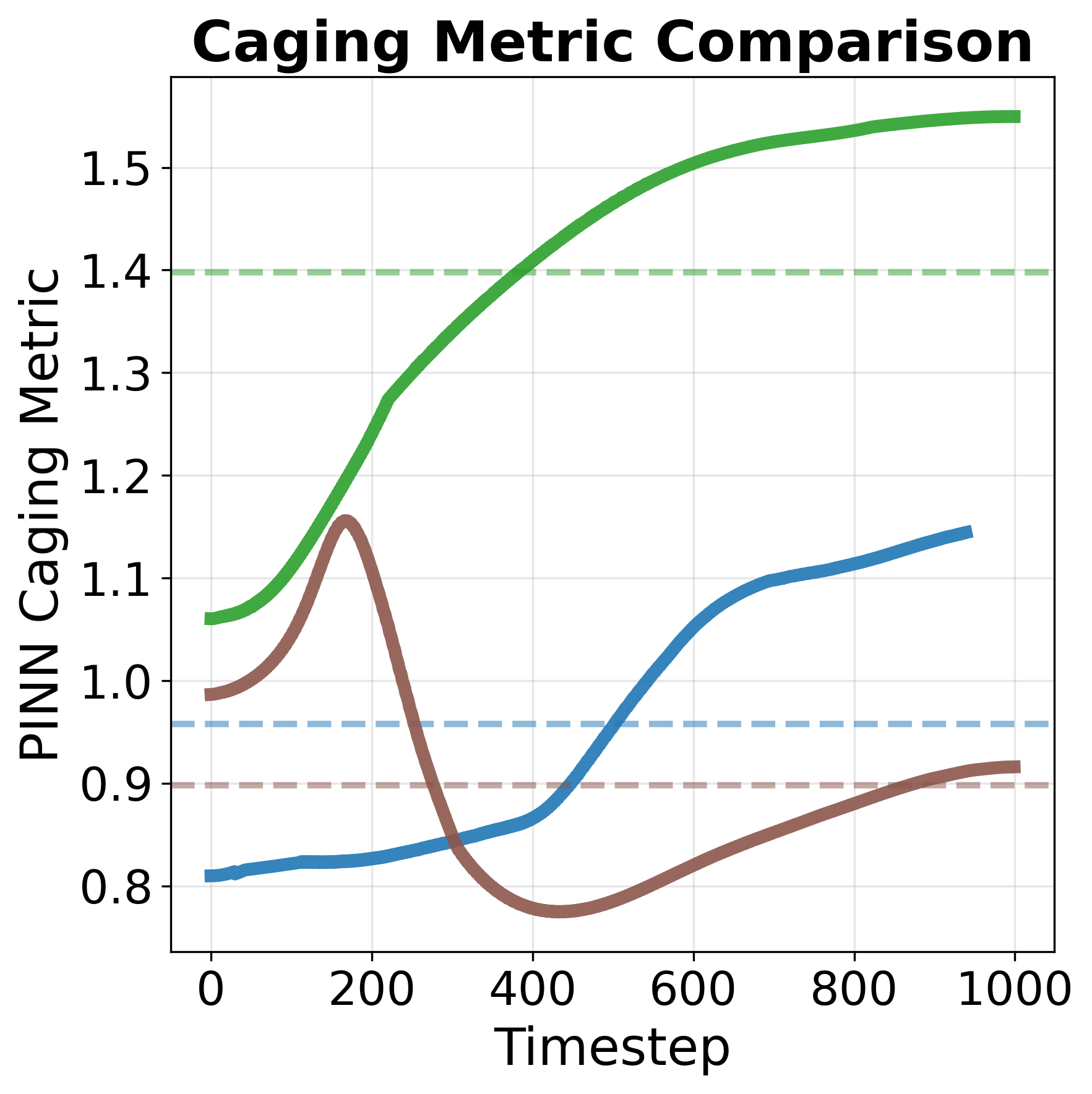}
        \caption{Escape-time (caging) metric over time. Mode~1 remains at low values throughout. Mode~2 exhibits degradation during object transport, while Mode~3 maintains a consistently higher escape-time margin. Dash lines indicate mean values averaged over time.}
        \label{subfig:cage_metric_comp}
    \end{subfigure}

    \caption{Contact distribution, proximity diagnostics, and enclosure quality for the three planning modes.}
   \label{fig:contact_distr_cage_metric}
   \vspace{-0.5cm}
\end{figure}

We use the same planar 4-DOF robot arm and learned eikonal caging model as in Section~\ref{subsec:example_wm_caging}. A circular object is transported from its initial pose to a goal region under the quasi-dynamic contact model of Section~\ref{subsec:contact_dynamics}. Across all experiments, the robot model, optimizer, dynamics, and constraints are identical; only the manipulation objective is changed. Figure~\ref{fig:wm_cage_mani_toy} illustrates the setup and representative trajectories.

\paragraph{Mode 1: Contact-prior baseline.}
We replace the caging reward with the minimum signed distance between the object and the \emph{whole robot arm}. This favors maintaining contact, but does not explicitly account for global escape geometry.

\paragraph{Mode 2: Whole-arm proximity baseline.}
We minimize the average distance between the object and \emph{all robot links}. This encourages distributed proximity, but remains a local objective that does not reason about escape routes.

\paragraph{Mode 3: Whole-arm caging objective (ours).}
We use the proposed formulation in Equation~\eqref{eq:wm_caging_mani}, which maximizes the learned escape-time field $\hat{\mathcal T}(\bm q_o^t,\bm q_r^t)$. This explicitly favors robot configurations that impede fast object escape.

\subsubsection{Result Comparison Across Three Objective Modes}

\paragraph{Coordinated whole-arm usage, proximity, and enclosure quality.}

We compare the optimized trajectories using three diagnostics:  
(i) link contact usage, defined as the fraction of time each link remains in contact with the object; (ii) average object--arm distance (metric used in Mode~2); and (iii) the escape-time caging metric. Results are shown in Figure~\ref{fig:contact_distr_cage_metric}.

Mode~1 relies primarily on a single mid-arm link, with limited participation from the remaining links (Figure~\ref{subfig:arm_utilization}). Because maintaining one active contact is often sufficient to reduce the objective, the planner has little incentive to form coordinated whole-arm enclosure. Accordingly, the average object--arm distance remains relatively large, and the resulting caging values are only moderate.

Mode~2 increases multi-link engagement and rapidly reduces object--arm distance early in the motion. However, proximity alone does not preserve enclosure during transport: as the arm reconfigures to move the object, the average distance rises and the caging metric drops substantially (Figure~\ref{subfig:cage_metric_comp}). This indicates that local closeness does not necessarily imply resistance to escape.

Mode~3 (ours) exhibits a different strategy. Rather than minimizing distance aggressively at the outset, the arm first arranges its geometry to obstruct global escape routes and then progressively tightens during transport. This produces sustained overlapping contact across multiple links, decreasing proximity over time, and consistently larger escape-time values than the two baselines.

Overall, both proximity and caging objectives encourage greater whole-arm involvement than the contact-prior baseline. However, only the caging objective consistently promotes coordinated enclosure over long horizons. These qualitative differences motivate the controlled robustness evaluations in the next section.

\section{Controlled Robustness Analysis Under Contact-Model Mismatch}\label{sec:robust_analyse}
This section uses the controlled toy whole-arm manipulation problem introduced earlier to isolate how different planning objectives affect robustness under disturbances and contact-model mismatch. The simplified geometry enables interpretable comparisons while preserving the key enclosure behaviors studied in this paper.

Contact-dynamics mismatches generally induce discrepancies between the predicted and actual object motions. When the contact model used during planning deviates from the true system dynamics, the resulting prediction error can be interpreted as an object velocity disturbance acting on the quasi-dynamic contact model introduced in Section~\ref{subsec:contact_dynamics}. Under mild assumptions, such disturbances can be viewed as bounded perturbations to the object motion. Section~\ref{subsec:game_theoretic_connection} formalizes this interpretation and shows how the escape-time objective corresponds to maximizing robustness against worst-case disturbances within this bound.

Using the same task setup and planned trajectories from the controlled toy whole-arm manipulation example in Section~\ref{subsec:caging_manipulation_example}, we evaluate robustness in two complementary ways. Section~\ref{subsec:robustness2motion_disturb} studies random object-motion disturbances, representing bounded but non-adversarial velocity mismatch at each time step. Although the planner is motivated by a worst-case disturbance interpretation, evaluating only adversarial disturbances would favor the proposed method. We therefore sample disturbances with the same magnitude bound but independent directions, yielding a more neutral comparison with baseline objectives. Section~\ref{subsec:contact_dyn_simp} then evaluates three representative forms of contact-model mismatch: simplifications in contact dynamics modeling, object shape, and friction parameters.

\subsection{Contact Dynamic Mismatch \& Object Escape Velocity}\label{subsec:game_theoretic_connection}

In practice, the quasi-dynamic model used during planning differs from the true second-order contact dynamics of the physical system. Let
\[
\bm v_o^{\text{pred}} = f_o(\bm q,\bm u_r)
\]
denote the object velocity predicted by the QP-based quasi-dynamic model, and let
\[
\bm v_o^{\text{true}} = f_o^{\text{true}}(\bm q,\bm u_r)
\]
denote the velocity produced by the true contact dynamics. The discrepancy between the two can be represented as a velocity mismatch
\[
\bm v_o^{\text{true}} = f_o(\bm q,\bm u_r) + \delta \bm v_o ,
\]
where $\delta \bm v_o$ captures modeling errors arising from quasi-dynamic approximations, simplified object geometry, or inaccurate friction parameters that alter the contact Jacobians and constraints in the QP formulation.

For a range of such modeling mismatches, we assume the resulting velocity error is bounded,
\[
\|\delta \bm v_o\| \le \bar v .
\]
In our framework we set $\bar v = u_{\max}$, corresponding to the maximal escape velocity assumed in the escape-time formulation. The worst-case disturbance direction is then taken to align with the steepest descent direction of the escape-time field,
\[
\delta \bm v_o
=
- u_{\max}
\frac{\nabla_{\bm q_o}\mathcal T(\bm q_o,\bm q_r)}
{\|\nabla_{\bm q_o}\mathcal T(\bm q_o,\bm q_r)\|}.
\]

Under this interpretation, whole-arm caging manipulation can be viewed as approximately maximizing resistance to bounded contact dynamics mismatches that drive the object toward escape from robot control region in the worst case.

\begin{figure}[!t]
    \centering
    \includegraphics[width=\linewidth]{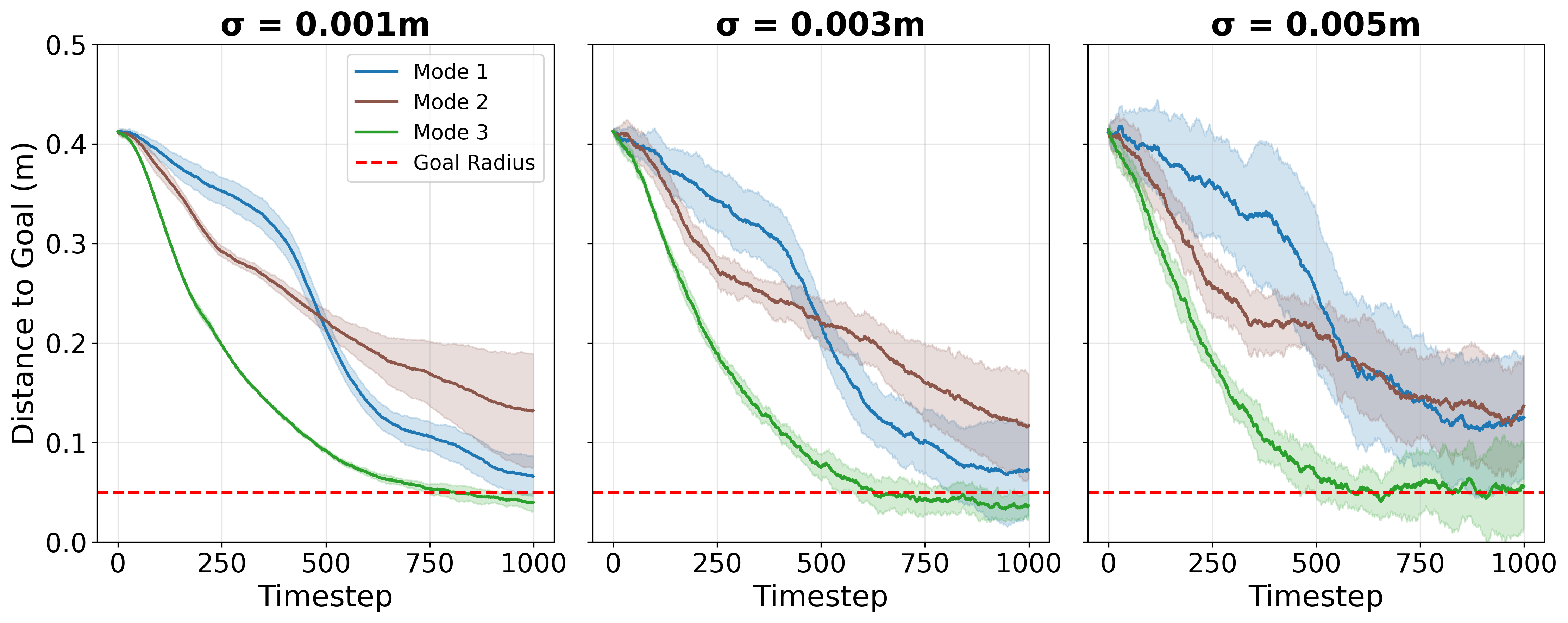}
    \caption{Object--to--goal distance over time during open-loop execution of planned trajectories under three levels of random disturbances across the three planning modes. Lower values indicate better task progress. Mode~3 shows the most concentrated and stable trajectories across disturbance levels.}
    \label{fig:robustness_detailed}
    \vspace{-0.2cm}
\end{figure}
\begin{figure}[!t]
    \centering
    \includegraphics[width=\linewidth]{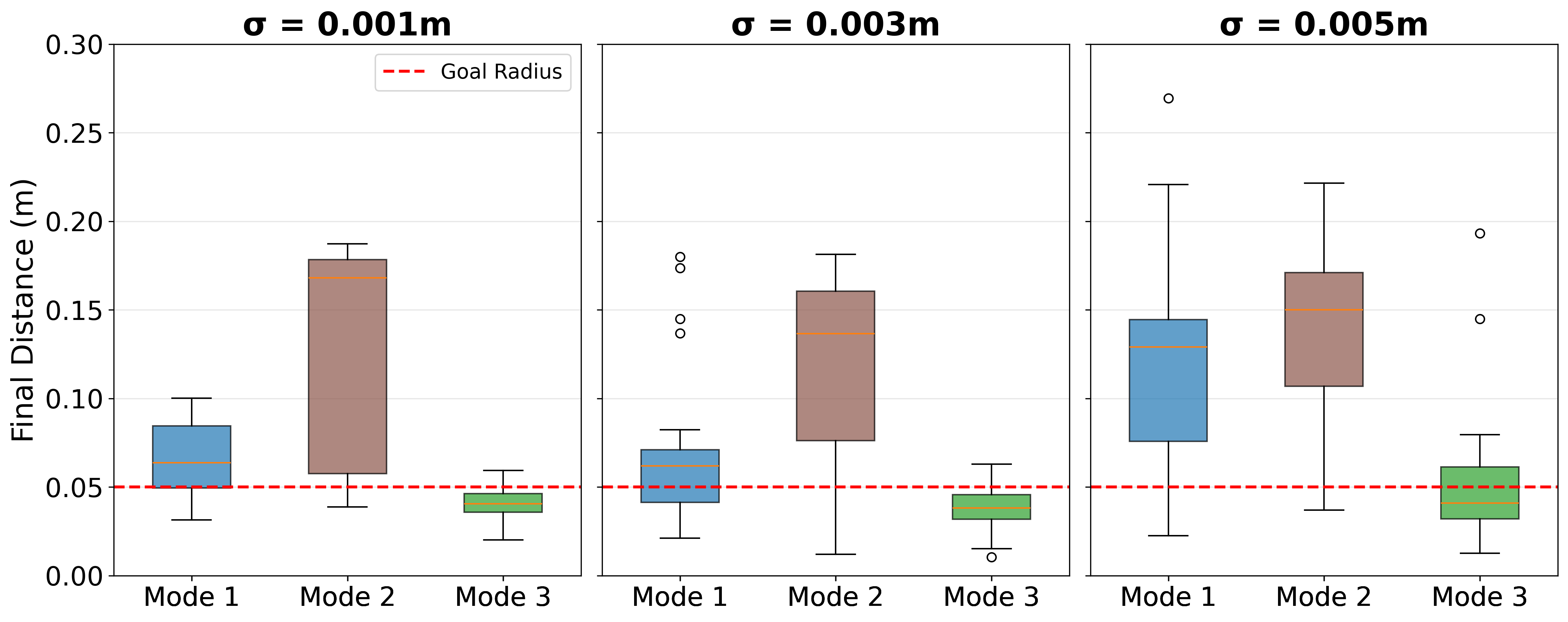}
    \caption{Final object-to-goal distance at the terminal time step under random disturbances. Smaller values indicate better task completion. Mode~3 consistently achieves the lowest final error, while Mode~2 exhibits the largest degradation under disturbance.}
    \label{fig:robustness_detailed_final_dist}
    \vspace{-0.5cm}
\end{figure}

\subsection{Robustness to Random Object Motion Disturbances}\label{subsec:robustness2motion_disturb}
Using the same task setup and planned trajectories from the illustrative example in Section~\ref{subsec:caging_manipulation_example}, we first evaluate how different objectives tolerate unmodeled object motion. For each mode, we fix the optimized open-loop robot trajectory $\{\bm{q}_r^t\}$ and execute the quasi-dynamic model while injecting perturbations into the object state at every time step. 

\begin{figure*}[!t]
    \centering
    \includegraphics[width=\linewidth]{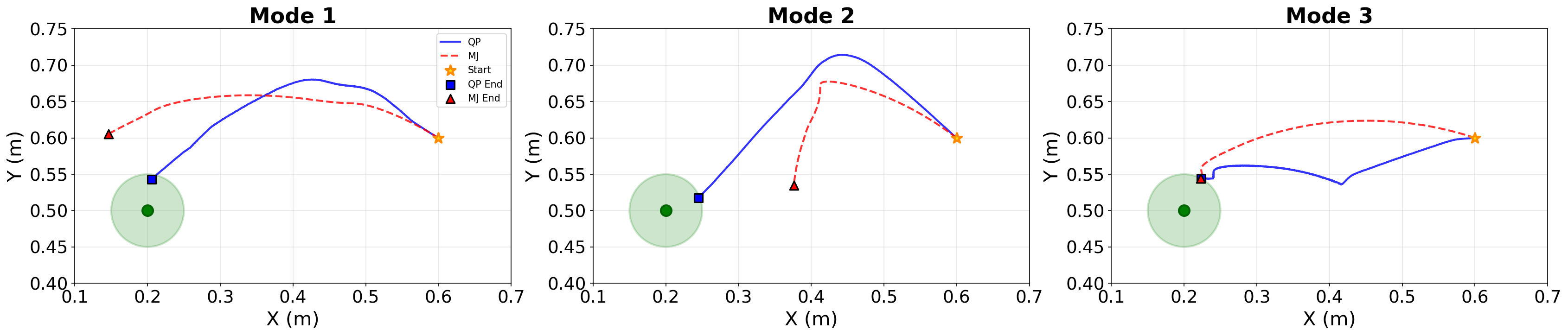}
    \caption{Object trajectories under dynamics-order mismatch. Robot motions are planned using the first-order quasi-dynamic model (blue solid) and executed under second-order MuJoCo dynamics (red dashed). The goal region is shown in green. Mode~3 preserves similar overall behavior under mismatch, whereas Modes~1 and~2 exhibit larger deviations or task failure.}
    \label{fig:qp_vs_mj}
    \vspace{-0.5cm}
\end{figure*}

Specifically, we add zero-mean positional perturbations with magnitudes $\sigma \in \{0.001, 0.003, 0.005\}\,\mathrm{m}$. The escape velocity bound used in the planner is $u_{\max}=0.05\,\mathrm{m/s}$, and the time step of the manipulation planner is $0.1\,\mathrm{s}$. The largest perturbation $\sigma=0.005\,\mathrm{m}$ therefore corresponds to the same maximal velocity mismatch bound ($u_{\max}\cdot h$). For each disturbance level, we perform 100 rollouts and record both the object-to-goal distance over time and the final distance (Figures~\ref{fig:robustness_detailed} and~\ref{fig:robustness_detailed_final_dist}).

The three objectives produce clearly different disturbance responses. Mode~1 exhibits relatively large trajectory variance during the early phase of execution, consistent with its reliance on limited enclosure from a dominant contact link. Mode~2 remains sensitive throughout the task: although it keeps the object closer to the arm on average, perturbations readily redirect the object into configurations with reduced escape resistance, leading to broad trajectory dispersion and frequent loss of progress toward the goal.

An instructive comparison arises between Modes~1 and~2 in the later phase of execution. After approximately 400 time steps, the variance of Mode~1 decreases and its final object-to-goal distance becomes smaller than that of Mode~2, despite maintaining a larger average object--arm distance throughout the trajectory. This behavior is consistent with the coordinated whole-arm usage analysis. Although Mode~1 does not globally minimize proximity, its evolving arm geometry gradually blocks a subset of escape routes. As shown in Figure~\ref{subfig:cage_metric_comp}, this increases the escape-time value after roughly 300 time steps, which in turn improves disturbance tolerance later in the manipulation.

Mode~2, in contrast, demonstrates that maintaining proximity alone is insufficient for robust task completion. Even though the object remains close to the arm, the planned geometry does not consistently suppress global escape opportunities. Consequently, small perturbations accumulate into larger final task errors than those of Mode~1 and Mode~3 across all disturbance levels.

Mode~3 exhibits the strongest robustness overall. Object trajectories remain tightly concentrated throughout execution, and final positions consistently lie within or near the goal region. Because execution is entirely open-loop, this performance cannot be attributed to feedback correction; rather, it reflects the larger escape-time margins induced during planning.

Taken together, these results show that average proximity is not a reliable proxy for robustness. Disturbance tolerance depends on how the arm geometry constrains escape over time, not only on instantaneous closeness. Explicitly optimizing escape resistance therefore yields the most stable whole-arm manipulation behavior under random object-motion disturbances.

\subsection{Exploiting Robustness for Contact Dynamic Simplification}\label{subsec:contact_dyn_simp}

The previous subsection modeled planning mismatch abstractly through random object-motion disturbances. We now examine whether similar trends arise when disturbances are generated by concrete simplifications of the contact model itself. Specifically, we evaluate three representative sources of mismatch: 1) the use of \emph{quasi-dynamics} contact dynamics in Equation~\eqref{eq:qp}, 2) shape under-approximation, 3) friction mismatch. The central question is whether geometric enclosure can compensate for reduced model fidelity. If so, simplified planning models may remain effective when combined with an escape-time objective.

\subsubsection{Simplification 1: First-order Contact Dynamics}\label{subsec:qp_vs_mj}
We first test whether trajectories planned under the quasi-dynamic first-order model remain effective when executed under MuJoCo’s second-order contact dynamics, which introduce inertia and richer transient effects. All trajectories are optimized with the same quasi-dynamic model and then executed open-loop in MuJoCo. Figure \ref{fig:qp_vs_mj} compares the object trajectories observed with the quasi-dynamic model and with those observed during open-loop MuJoCo execution for the three planning modes. 

The two baseline objectives exhibit distinct failure modes. Mode~1 relies heavily on a dominant pushing contact and is sensitive to unmodeled rotational or tangential effects, causing substantial trajectory deviation and failure to reach the goal. Mode~2 maintains proximity to the object, but does not consistently block escape routes; small discrepancies in sliding or rotation therefore redirect the object away from the intended path. 

Mode~3 is less sensitive to this modeling-order mismatch. Although detailed trajectories differ between planning and execution, the overall manipulation behavior remains similar and the final object state stays near the planned target. This suggests that enclosure geometry can reduce sensitivity to higher-order effects ignored during planning.

\subsubsection{Simplification 2: Object Shape Under-approximation.}
\begin{figure*}[!t]
    \centering
    \includegraphics[width=\linewidth]{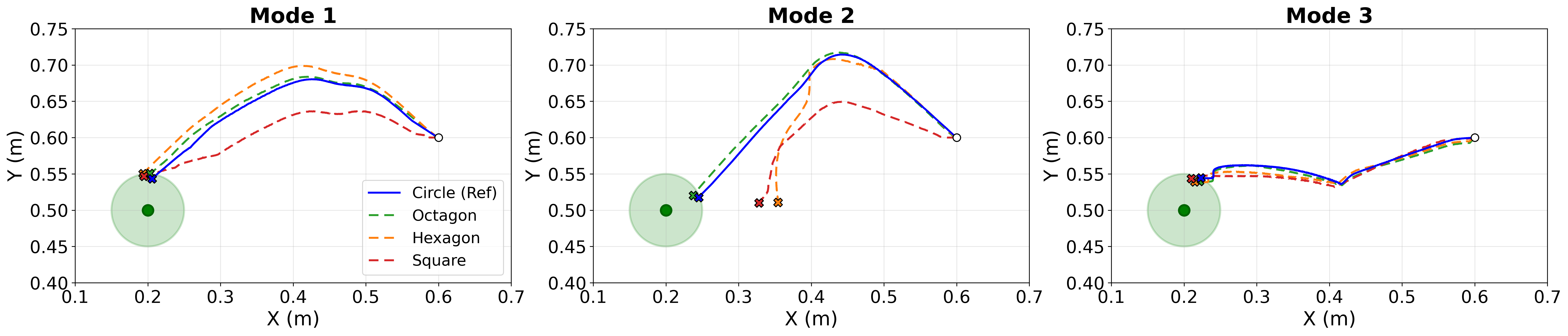}
    \caption{Object trajectories under shape under-approximation. Robot motions planned for a circular object are executed on polygonal objects (octagon, hexagon, square) with increasing geometric mismatch. The circle case is the no-mismatch reference. Mode~3 remains comparatively stable as mismatch increases, while the baselines accumulate larger trajectory errors.}
    \label{fig:shape_under_app}
    \vspace{-0.5cm}
\end{figure*}
\begin{figure}[!t]
    \centering
    \includegraphics[width=\linewidth]{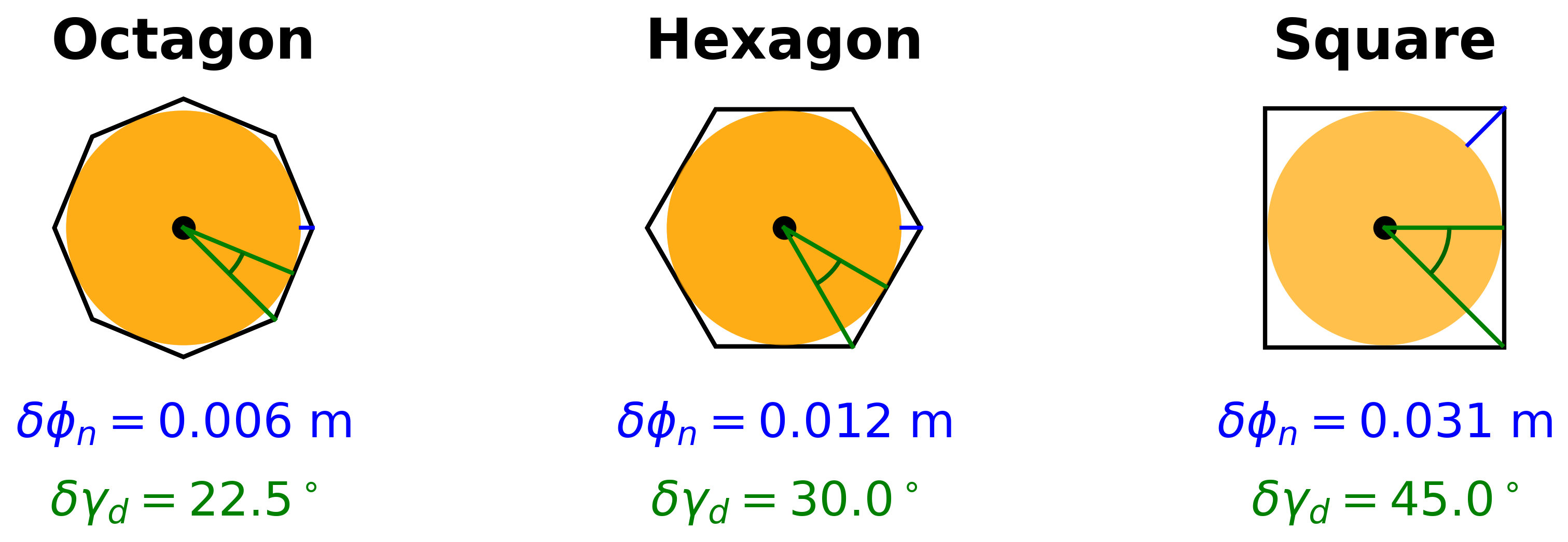}
    \caption{Illustration of shape under-approximation used during planning. The orange circle denotes the simplified object model used for optimization, while black outlines indicate the true polygonal shapes used during execution. Blue segments show maximum contact-point displacement $\delta \phi_n$, and green arcs show corresponding surface-normal mismatch $\delta \gamma_d$. Larger mismatch values imply greater geometric modeling error.}
    \label{fig:shape_app_visual}
    \vspace{-0.5cm}
\end{figure}
We next evaluate geometric simplification, where planning assumes a circular object of radius $0.075\,\mathrm{m}$ while execution uses polygonal objects (octagon, hexagon, and square) of increasing mismatch. The mismatch is quantified in Figure~\ref{fig:shape_app_visual} by the maximum contact-point displacement (\emph{contact mismatch} $\delta \phi_n$) and the corresponding surface normal orientation error (\emph{angle mismatch} $\delta \gamma_d$). All trajectories are optimized for the same circle model and then executed open-loop under the quasi-dynamic model with the true polygonal shapes.

Shape under-approximation affects not only the apparent object boundary but also the quantities used by the contact model. For a fixed robot configuration, the true polygonal shape shifts the closest surface point and alters surface normals and tangential directions. Since these quantities determine the signed distances and contact Jacobians $\bm J_i^n$ and $\bm J_{i,j}^d$ used by the quasi-dynamic formulation, geometric mismatch introduces systematic errors in the predicted object motion.

Figure \ref{fig:shape_under_app} shows the resulting object trajectories under increasing shape mismatch. For the octagonal object, all methods remain close to the reference behavior. As mismatch increases for the hexagon and square, the baselines degrade more noticeably. Mode~1 often achieves partial completion but accumulates trajectory error. Mode~2 is more sensitive, since errors in local contact geometry directly affect the directions favored by the proximity objective.

Mode~3 remains comparatively stable across all tested shapes. Even for the square object, trajectories stay close to the reference and continue to reach the goal region. These results suggest that whole-arm caging manipulation is less sensitive to systematic geometric modeling error than baseline objectives relying primarily on precise local contact information.

\subsubsection{Simplification 3: Nominal Friction Parameter Assumption.}

We evaluate robustness to friction mismatch under a simplified contact model in which a single nominal friction parameter is assumed during planning. Specifically, trajectories are optimized using a fixed friction coefficient ($\mu = 0.5$) and then executed open-loop under varying execution-time friction values $\mu \in \{0.0, 0.25, 0.5, 0.75, 1.0\}$. 

From a physical standpoint, friction primarily affects tangential contact interactions, influencing sliding behavior and rotational controllability. In the quasi-dynamic contact model, reducing friction weakens tangential constraints and increases susceptibility to slip, while higher friction strengthens tangential coupling. As a result, object orientation is inherently more sensitive to friction variation than object position. For the pushing task considered here, task success depends primarily on accurate object positioning rather than precise orientation control. We therefore evaluate robustness at the task level by examining object position trajectories and goal-reaching behavior. 

Figure \ref{fig:frictionless_contact_pos} shows the resulting object trajectories for the three planning modes. Mode~3 shows consistently small positional deviations across the tested friction values and continues to reach the goal region. In contrast, Modes~1 and~2 exhibit larger deviations as friction departs from the nominal planning value. This suggests that enclosure geometry can reduce task-level sensitivity to friction mis-specification even when tangential force modeling is approximate.

\begin{figure*}[!t]
    \centering
    \includegraphics[width=\linewidth]{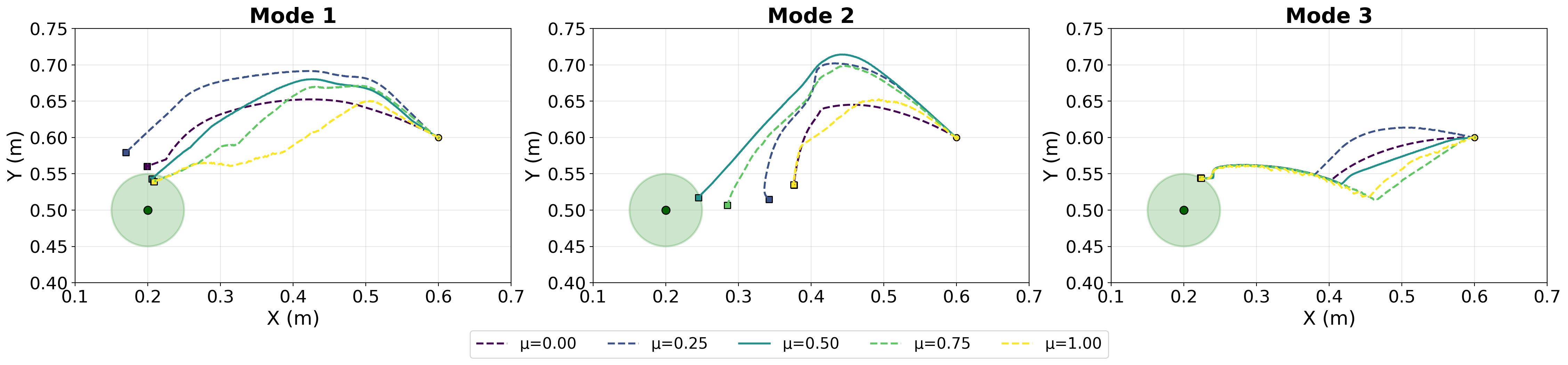}
    \caption{Object trajectories under friction mismatch. Robot motions planned with nominal friction coefficient $\mu=0.5$ are executed using different friction values. Mode~3 exhibits smaller task-level trajectory deviations across the tested range, indicating reduced sensitivity to friction mis-specification.}
    \label{fig:frictionless_contact_pos}
    \vspace{-0.5cm}
\end{figure*}

\subsubsection{Summary}
Across all three simplification scenarios: dynamics order, object shape, and friction parameters, a consistent pattern emerges. Objectives based on local contact heuristics (Mode~1) or average proximity (Mode~2) may maintain contact or closeness, but they do not explicitly preserve global escape resistance. Consequently, modeling errors can redirect the object into configurations with weak enclosure, leading to drift or task failure.

In contrast, the caging-aware objective directly optimizes an escape-time margin, biasing planning toward robot configurations that continue to restrict object escape under mismatch. This yields substantially greater tolerance to simplified contact models across all tested scenarios.

Together with the random-disturbance study, these experiments support the broader claim of this paper: geometric enclosure can be exploited to reduce dependence on highly accurate contact dynamics in whole-arm manipulation planning.

\section{Validation on Realistic Geometry and Real-World Systems}\label{sec:experiments}

Since the proposed method is fundamentally geometry-based, this section thus evaluates whether the conclusions obtained from the illustrative examples extend to realistic robot and object geometries.

In Section~\ref{subsec:example_wm_caging}, we showed that the physics-informed eikonal caging metric accurately approximates the eikonal equation solution and enables whole-arm caging configuration optimization for robot and object geometries composed of simple shape primitives (e.g., capsules and circles). Here, we examine whether these properties transfer to realistic scenarios.

Specifically, we investigate the following questions:

\termdef{def:quest1}{\textbf{Q1}}: Does the PINN-based escape-time field approximation transfer to realistic robot geometries? \label{q:one}

\termdef{def:quest2}{\textbf{Q2}}: Does the key-point-based escape-time field approximation transfer to realistic object geometries?

In the robustness analysis (Section~\ref{sec:robust_analyse}), we showed that whole-arm caging manipulation improves robustness to object motion disturbances and contact-dynamics simplifications in a goal-reaching task. We further examine:

\termdef{def:quest3}{\textbf{Q3}}: Does this robustness transfer to realistic robot and object geometries?

\termdef{def:quest4}{\textbf{Q4}}: How much do whole-arm contact geometry and whole-arm caging contribute to the observed robustness?

\termdef{def:quest5}{\textbf{Q5}}: How does robustness degrade when the caging objective competes with task objectives in a \emph{trajectory-tracking} task?

\termdef{def:quest6}{\textbf{Q6}}: How robust is whole-arm caging manipulation under multiple simultaneous contact-model mismatches that arise during sim-to-real transfer?

We conduct experiments in three stages. First, Section~\ref{subsec:exp_wm_caging_conf} evaluates the learned physics-informed eikonal caging metric in a configuration-planning setting with realistic robot and object geometries, answering \termref{def:quest1}{\textbf{Q1}}--\termref{def:quest2}{\textbf{Q2}}. Second, Section~\ref{subsec:mobile_franka_cage_mani} integrates the escape-time objective into manipulation planning with contact dynamics to evaluate manipulation robustness under representative modeling simplifications, answering \termref{def:quest3}{\textbf{Q3}}--\termref{def:quest5}{\textbf{Q5}}. Finally, we conducted real-world experiments to answer \termref{def:quest6}{\textbf{Q6}} in Section~\ref{subsec:sim2real}.

\begin{figure}[t]
    \centering
    \includegraphics[width=0.6\columnwidth]{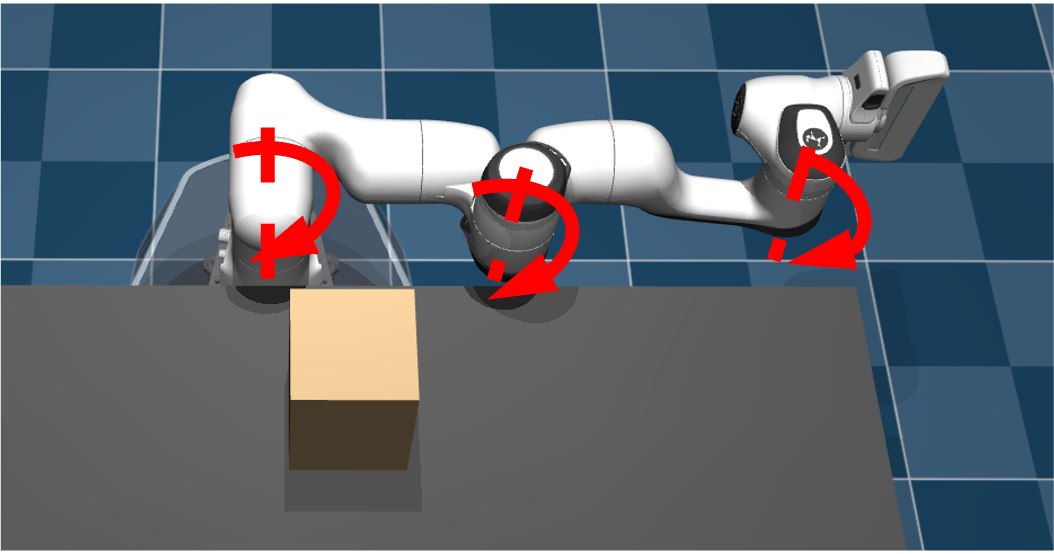}
    \caption{Experiment setup with a Franka arm for tabletop whole-arm manipulation planning. Red dashed lines and arrows indicate the active joints.}
    \label{fig:wm_franka_exp_setup}
\end{figure}

\paragraph{Experiment setups.} All simulation experiments consider planar tabletop settings using a Franka robot arm. To isolate the role of whole-arm geometric engagement while retaining realistic articulated structure, the arm is constrained to operate in a fixed-height plane parallel to the table. A subset of joints is fixed and only joints $[1,4,6]$ are actively controlled (Figure~\ref{fig:wm_franka_exp_setup}), yielding an effective planar whole-arm manipulator that preserves the true link geometry and kinematic coupling of the Franka arm. The robot is initialized from the configuration shown in Figure~\ref{fig:wm_franka_exp_setup}, which is used in all experiments in this section.

\subsection{Whole-Arm Caging Configuration Planning}
\label{subsec:exp_wm_caging_conf}
We first evaluate whole-arm caging configuration planning using a fixed-base Franka arm. Given a fixed object pose on the table, the goal is to find a \textit{static robot configuration} that maximizes the object’s escape-time value while respecting joint limits and robot--object collision constraints. We solve the resulting configuration optimization problem using CMA-ES, treating the learned escape-time field as a black-box objective. Implementation details are provided in Appendix~\ref{app:wm_caging_opt}. A qualitative comparison between the learned escape-time field and ground-truth solutions obtained with a classical FMM solver is detailed in Appendix~\ref{app:piec_franka}.

\begin{figure}[!t]
    \centering
    \begin{subfigure}[t]{\linewidth}
        \centering
        \includegraphics[width=\linewidth]{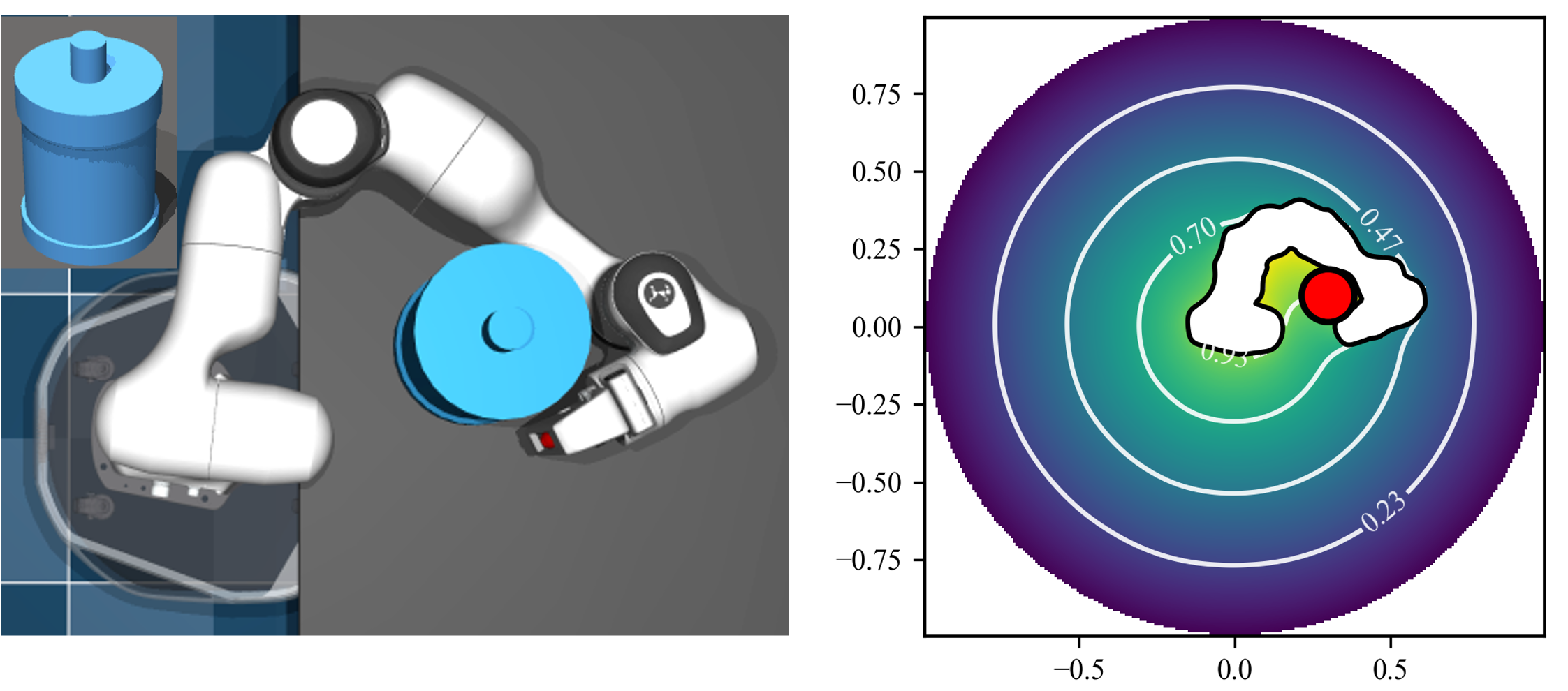}
        \caption{Water jug approximated by a single circle whose center servers as the key point for escaping-time query.}
        \label{fig:wm_caging_franka_water_jug}
    \end{subfigure}
    \hfill
    \begin{subfigure}[t]{\linewidth}
        \centering
        \includegraphics[width=\linewidth]{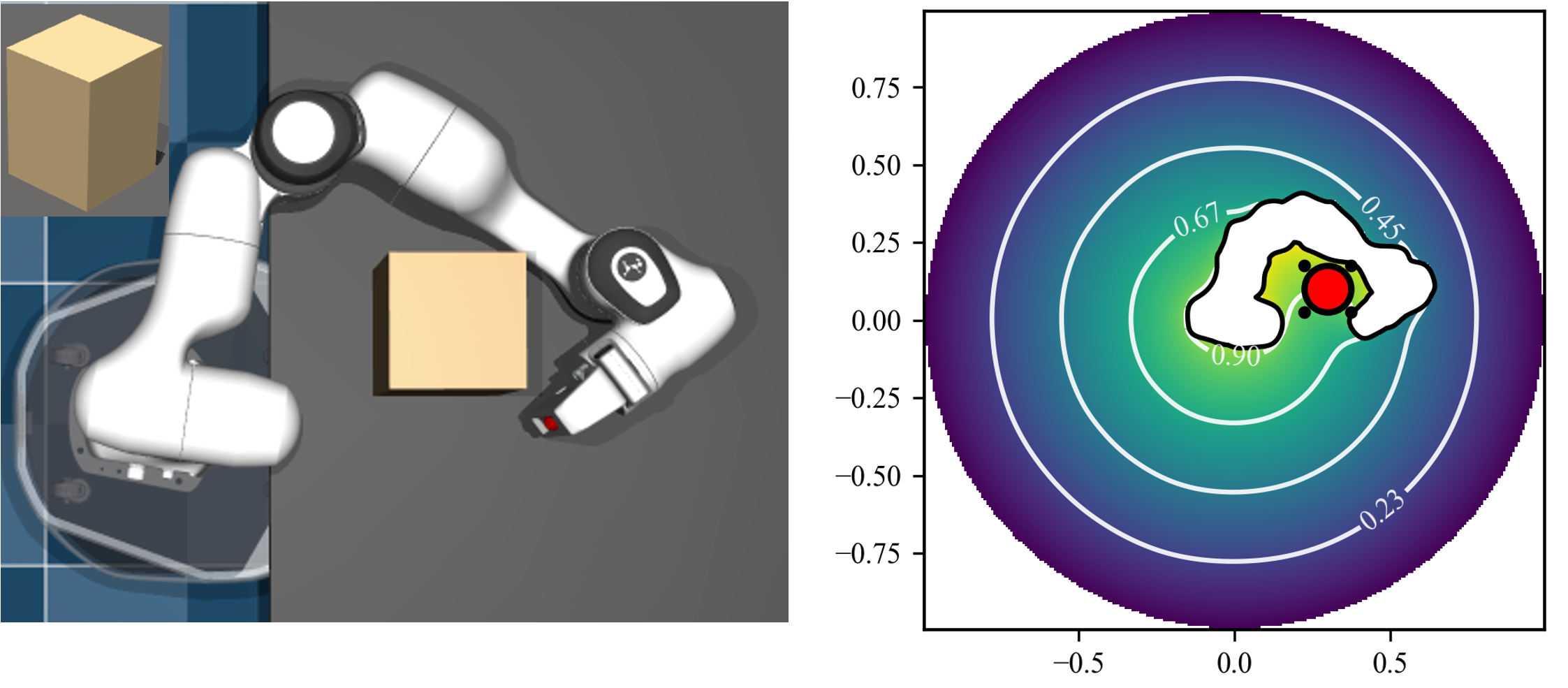}
        \caption{Box approximated by five circular key points: one at the center and four at the corners.}
        \label{fig:wm_caging_franka_box}
    \end{subfigure}
    \hfill
    \begin{subfigure}[t]{\linewidth}
        \centering
        \includegraphics[width=\linewidth]{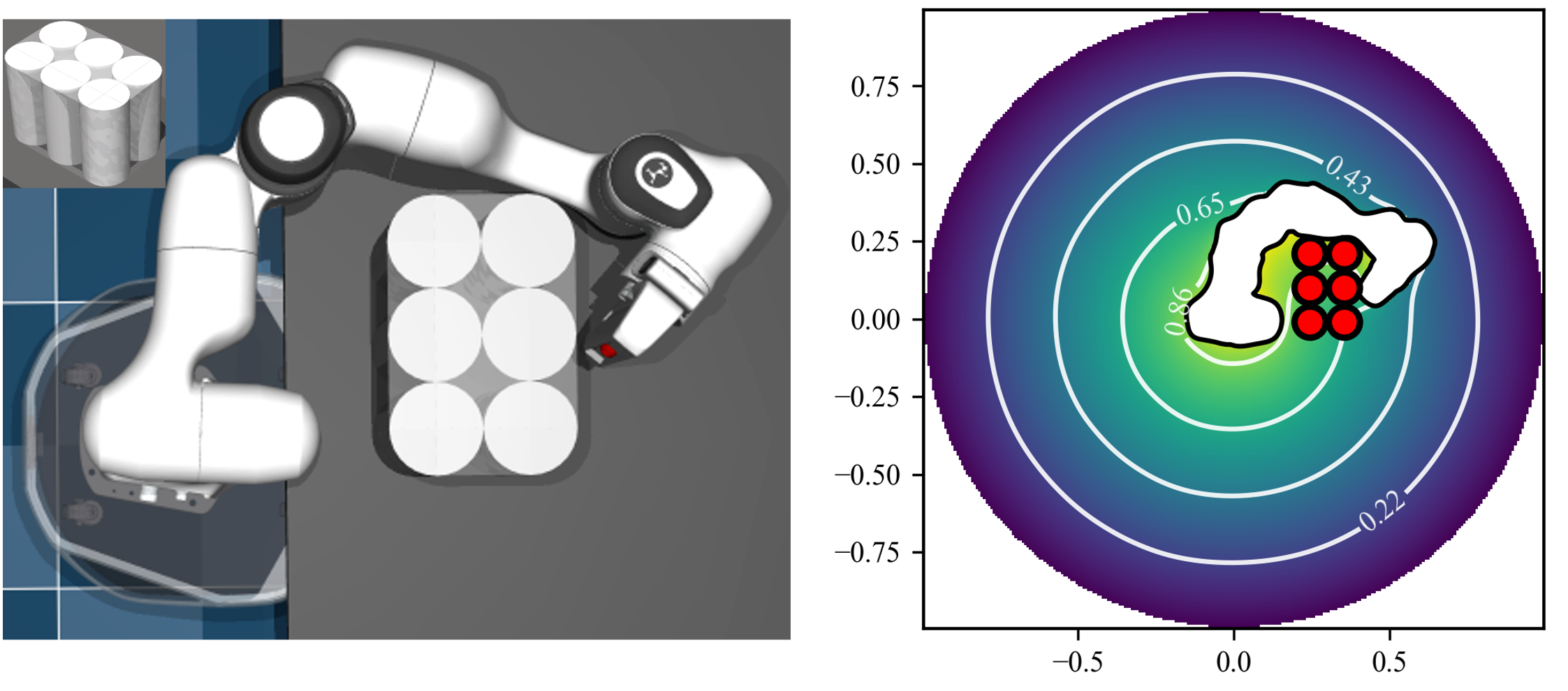}
        \caption{Kitchen paper pack approximated by six circular key points in 2D.}
        \label{fig:wm_caging_franka_paper_pack}
    \end{subfigure}
    \caption{Whole-arm caging configurations for a Franka arm and three everyday object geometries. For each object, the left panel shows the optimized caging configuration obtained with CMA-ES, with a 3D rendering of the object shown in the inset. The right panel visualizes the learned escape-time field together with the object’s 2D point-based approximation. Each object is represented by a small set of circles whose centers serve as key points for querying and aggregating escape-time values during optimization. These examples illustrate that the learned caging metric extends naturally from the illustrative examples to realistic robot and object geometries.}
    \label{fig:wm_caging_franka_objects}
    \vspace{-0.5cm}
\end{figure}

Figure~\ref{fig:wm_caging_franka_objects} visualizes the resulting optimal whole-arm caging configurations together with the corresponding learned escape-time fields for three daily objects: a \texttt{water jug}, a \texttt{box}, and a kitchen \texttt{paper pack}. 
All objects are placed at the same fixed pose $[0.3,\,0.1]\,\mathrm{m}$ on the table, and the robot is initialized from the same joint configuration shown in Figure~\ref{fig:wm_franka_exp_setup}. For each object, the left panel shows the optimized whole-arm configuration found by CMA-ES, while the right panel shows the learned escape-time field overlaid with the object’s 2D key-point representation. Across all objects, the learned escape-time field exhibits smooth and globally structured level sets that reflect the escape difficulty induced by the articulated arm geometry. The optimizer consistently finds whole-arm configurations that wrap around the object to prevent them from escaping, rather than relying on a single link or end-effector contact. 

\begin{table}[!t]
    \centering
    \caption{Whole-arm caging configuration planning results on realistic object geometries. Each object is approximated by a small set of circular key points whose centers query the learned escape-time field. We report the mean and minimum escape-time values $\mathcal{T}$ across key points, where larger values indicate stronger geometric enclosure. The minimum value reflects the weakest protected region of the object.}
    \label{tab:wm_caging_config_results}
        \begin{tabular}{lccc}
            \toprule
            Object &   \# Key Points & Mean $\mathcal{T} (\hat{\mathcal{T}})$ & Min $\mathcal{T}$ \\
            \midrule
            Water jug   & 1 & 0.95 & 0.95 \\
            Box         & 5 & 0.93 & 0.84 \\
            Paper pack  & 6 & 0.80 & 0.66 \\
            \bottomrule
        \end{tabular}
    \vspace{-0.5cm}
\end{table}

Quantitative results are summarized in Table~\ref{tab:wm_caging_config_results}, which reports the mean and minimum escape-time values across object key points at the optimized configurations. 
The single-point water jug achieves the highest escape-time value ($\mathcal{T}=0.95$), while multi-point objects exhibit lower mean escape times due to their increased spatial extent and directional exposure. For the box and paper pack, escape-time values vary across key points, revealing anisotropic escape difficulty induced by object shape. The optimizer implicitly balances these variations by shaping the arm to improve enclosure across \emph{all} key points, consistent with the point-based lower-bound formulation described in Section~\ref{subsec:piec}.

Overall, these results answer \termref{def:quest1}{\textbf{Q1}}, \termref{def:quest2}{\textbf{Q2}}. The proposed physics-informed eikonal caging metric can be optimized effectively for whole-arm caging configuration planning with realistic robot kinematics and object geometries using the key-point-based escape-time approximation. The learned escape-time field provides a smooth, geometry-aware objective that scales beyond the illustrative examples. This objective forms the basis for the whole-arm caging manipulation experiments presented in the following section.

\subsection{Whole-Arm Caging Manipulation}
\label{subsec:mobile_franka_cage_mani}

\begin{figure}[!t]
    \vspace{-10pt}
    \centering
    \begin{minipage}[t]{0.24\linewidth}
        \centering
        \includegraphics[width=\linewidth]{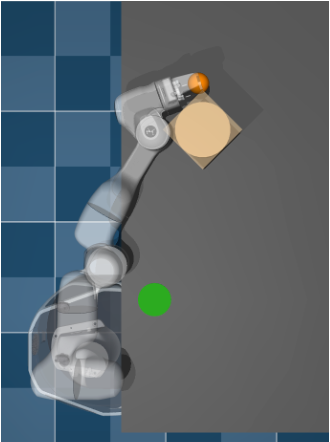}
        \caption*{(a) Point}
    \end{minipage}
    \hfill
    \begin{minipage}[t]{0.24\linewidth}
        \centering
        \includegraphics[width=\linewidth]{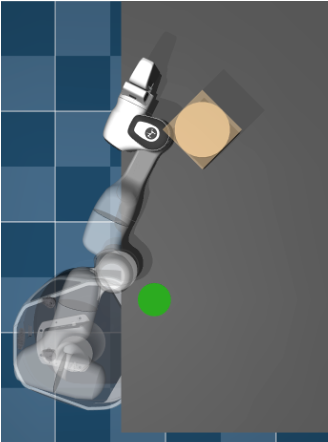}
        \caption*{(b) Link}
    \end{minipage}
    \hfill
    \begin{minipage}[t]{0.24\linewidth}
        \centering
        \includegraphics[width=\linewidth]{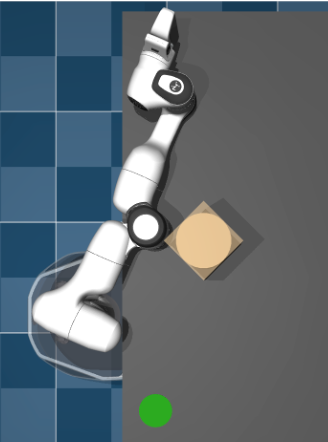}
        \caption*{(c) WAContact}
    \end{minipage}
    \hfill
    \begin{minipage}[t]{0.24\linewidth}
        \centering
        \includegraphics[width=\linewidth]{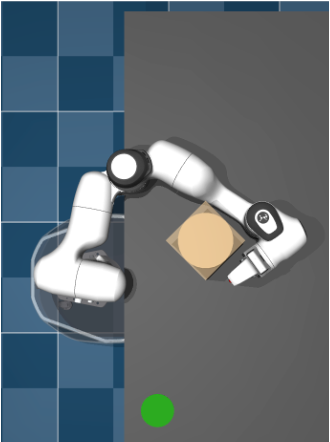}
        \caption*{(d) WACaging}
    \end{minipage}
    \caption{Initial contact configurations for whole-arm caging manipulation (\emph{WACaging}) and baseline contact geometries (\emph{Point}, \emph{Link}, \emph{WAContact}). The nominal object representation used for planning (orange circle) is an under-approximation of the true box geometry (light orange). Transparent robot parts are only used for visualization and are incapable of making contact with the target object. The green marker indicates the goal object region used in subsequent manipulation tasks.}
    \label{fig:wm_mani_exp_setup}
    \vspace{-0.5cm}
\end{figure}

We next evaluate whole-arm caging manipulation on a Franka robot to examine whether the robustness observed in the illustrative examples transfers to realistic robot geometries (\termref{def:quest3}{\textbf{Q3}}). In contrast to the earlier analysis, which compared multiple objective formulations, here we fix a common manipulation objective and instead vary the robot contact geometry available during manipulation (\termref{def:quest4}{\textbf{Q4}}).

Specifically, we consider four variants that differ in how the robot arm can interact with the object:
\begin{itemize}
    \item \textbf{Point:} Contact is restricted to a yellow spherical end-effector point, as shown in Figure~\ref{fig:wm_mani_exp_setup} (a).
    \item \textbf{Link:} Contact is restricted to a short distal arm segment consisting of the last arm link and the gripper, as shown in Figure~\ref{fig:wm_mani_exp_setup} (b).
    \item \textbf{WAContact:} The full arm geometry is available for contact under the same minimal-distance contact prior used by the previous two baselines. This prior is the same metric used in Mode~1 of the illustrative whole-arm caging manipulation example in Section~\ref{subsec:caging_manipulation_example}.
    \item \textbf{WACaging:} The full arm geometry is used together with the learned escape-time caging objective (our whole-arm caging manipulation planner).
\end{itemize}
This comparison separates robustness arising from increased whole-arm contact engagement from robustness induced by the escape-time caging objective, allowing us to isolate their respective contributions to the observed robustness (\termref{def:quest3}{\textbf{Q3}}, \termref{def:quest4}{\textbf{Q4}}).

Besides, unlike the fixed-base Franka configuration optimization experiment in Section~\ref{subsec:exp_wm_caging_conf}, we enable the mobile base in this section to enlarge the reachable workspace of the arm. The base does \emph{not} participate in object contact or caging; all interactions with the object occur exclusively through the arm. The mobile platform ensures feasible arm configurations for objects at different table locations while preserving the geometric interpretation of whole-arm engagement.

For each variant, we first solve the whole-arm caging configuration optimization to obtain the initial robot-object contact configurations shown in Figure~\ref{fig:wm_mani_exp_setup}. This optimization follows the same formulation as in Section~\ref{subsec:exp_wm_caging_conf}, with the mobile base enabled. From these initial configurations, we then solve the manipulation trajectory optimization problem for two tasks: (i) \emph{Whole-Arm Trajectory Optimization}, in which the object is pushed into a specified goal region, and (ii) \emph{Whole-Arm Trajectory Tracking}, in which the object is required to follow a prescribed reference trajectory. In the trajectory-tracking task, we deliberately select the reference trajectory in which the task objective competes with the caging objective in order to study how this trade-off affects robustness (\termref{def:quest5}{\textbf{Q5}}).

To ensure fair comparisons, all methods share the same state and action spaces. In both tasks, the mobile base position and the active arm joints are jointly optimized. Additional planner parameters and cost function details are provided in Appendix~\ref{app:wm_caging_mani}.

\subsubsection{Whole-Arm Trajectory Optimization:} The circular object is initially placed at $[0.35, 0.1]\,\mathrm{m}$ and must be pushed into a circular goal region centered at $[0.2, -0.4]\,\mathrm{m}$ with radius $0.05\,\mathrm{m}$. We used the same planner, planner parameters and robot velocity limits as in the planar illustrative example. Figure~\ref{fig:robot_obj_traj} illustrates the keyframe robot-object configurations with the considered four contact geometries. The blue curves in Figure~\ref{fig:traj_opt_random_dist} show the corresponding object trajectories obtained by open-loop execution of the planned mobile Franka trajectories.  

\paragraph{Manipulation capability and link engagement.}

We observe that the \emph{Point} baseline, which restricts contact to a spherical end-effector, fails to consistently push the object into the goal region, whereas the other variants succeed. With single-point contact, successful manipulation requires continuously regulating the pushing direction so that the object is steered toward the goal while maintaining stable contact. Even small deviations in the relative robot–object configuration can induce slip or unintended rotation, forcing the robot to break and re-establish contact before continuing the push.

In our setup, the planner jointly optimizes the mobile base pose and three active arm joints, resulting in a relatively high-dimensional search space. For single-point pushing, feasible solutions occupy a narrow region of this space, as they require precise coordination between base motion and arm configuration to maintain contact while steering the object. Although such behaviors can be manually designed, discovering them through open-loop trajectory optimization under simplified contact dynamics is challenging. In contrast, allowing larger portions of the arm to engage the object provides passive geometric guidance. Multi-link contact reduces sensitivity to small configuration variations and enables the object to be steered toward the goal without requiring precise directional corrections at each time step.

\paragraph{Robustness to object motion disturbances.}
Figure~\ref{fig:wm_franka_caging_comp} compares the evolution of caging values over time for the four contact geometries. With the spherical end-effector, the robot quickly loses control of the object, and the caging value drops to zero after approximately 350 steps. The \emph{Link} baseline maintains slightly higher enclosure due to increased contact area, but remains sensitive to slip and gradual escape. Allowing the full arm geometry (\emph{WAContact}) leads to noticeably higher initial caging values and improved geometric coverage. However, because escape directions are not explicitly suppressed, the object progressively slides along the arm during manipulation (as shown in Figure~\ref{fig:WAContact_nominal}), causing the caging value to decrease over time and eventually approach the link segment of the \emph{Link} baseline (blue and brown lines in Figure~\ref{fig:wm_franka_caging_comp}, correspondingly). In contrast, \emph{WACaging} explicitly maximizes the learned escape-time field, maintaining substantially higher caging values and reinforcing enclosure as manipulation progresses.

These differences translate directly into robustness under random object motion disturbances. In Figure~\ref{fig:traj_opt_random_dist}, we execute the planned trajectories with additive disturbances of magnitude $\bm{\sigma}=0.005\,\mathrm{m}$. The disturbed object trajectories (thin grey lines) and their mean trajectories (red curves) reveal a clear progression in robustness as link engagement increases. Compared to \emph{Point} and \emph{Link}, \emph{WAContact} produces more concentrated disturbed trajectories and smaller deviations from the nominal path. Moreover, \emph{WACaging} exhibits a markedly stronger effect: disturbed trajectories are tightly funneled toward the goal region, with significantly reduced dispersion relative to \emph{WAContact}. This is reflected quantitatively in Table~\ref{tab:franka_d2g_sr}, where \emph{WAContact} achieves modest improvements in final distance and success rate over baselines, while \emph{WACaging} yields substantially lower final distance-to-goal errors and consistently higher success rates. 

Together, these results indicate a clear hierarchy: increasing geometric engagement improves disturbance tolerance, and explicitly maximizing the escape-time value further amplifies this robustness by actively suppressing escape modes rather than merely increasing contact area (\termref{def:quest3}{\textbf{Q3}}, \termref{def:quest4}{\textbf{Q4}}).

\paragraph{Robustness to model mismatches.}

We next evaluate robustness under systematic model mismatches, following the simplification analysis in the planar illustrative examples. The nominal planning model assumes a friction coefficient $\mu = 0.5$, circular object geometry, and QP-based quasi-dynamics. We then open-loop execute the planned trajectories under four mismatch conditions: higher friction ($\mu = 0.8$), lower friction ($\mu = 0.2$), box object geometry, and second-order MuJoCo contact dynamics, separately. 

The resulting object trajectories and final distance-to-goal values are summarized in Figure~\ref{fig:traj_opt_mismatch_comp} and Table~\ref{tab:franka_mismatch}. Across all mismatch conditions, a clear hierarchy emerges. From \emph{Point} to \emph{Link} to \emph{WAContact}, the object trajectories under mismatch (dashed curves in Figure~\ref{fig:traj_opt_mismatch_comp}) exhibit progressively smaller deviation from the nominal trajectory (solid blue curve) generated using the simplified model. 
\clearpage
\begin{figure*}[p]
    \centering

    % ---------- Row 1: robot & object keyframes ----------
    \begin{subfigure}{0.48\linewidth}
        \centering
        \includegraphics[width=\linewidth]{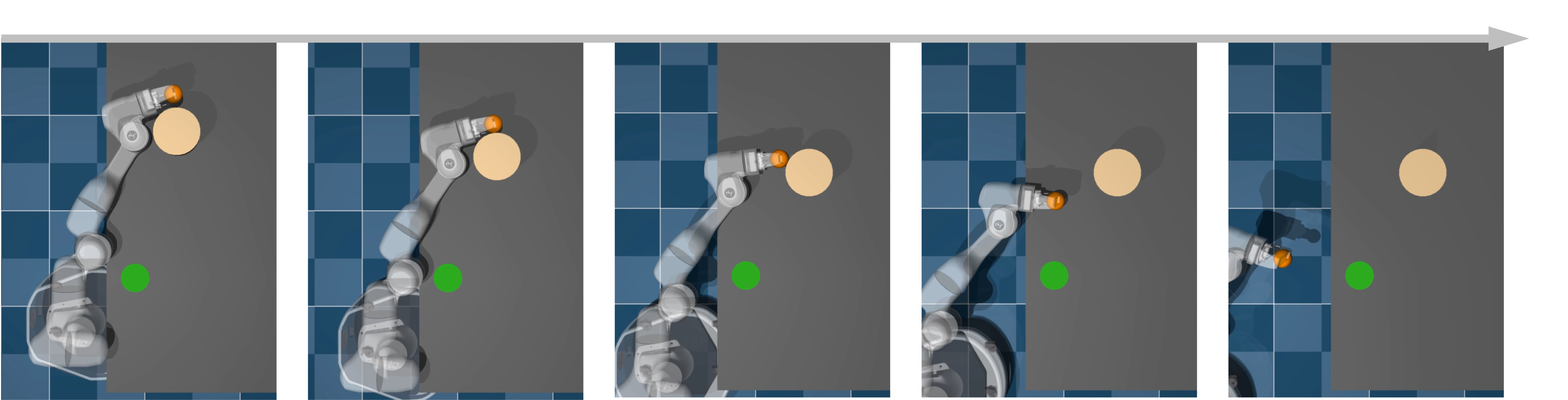}
        \caption{Point: yellow spherical end-effector as contact geometry.}
        \label{fig:point_nominal}
    \end{subfigure}
    \hfill
    \begin{subfigure}{0.48\linewidth}
        \centering
        \includegraphics[width=\linewidth]{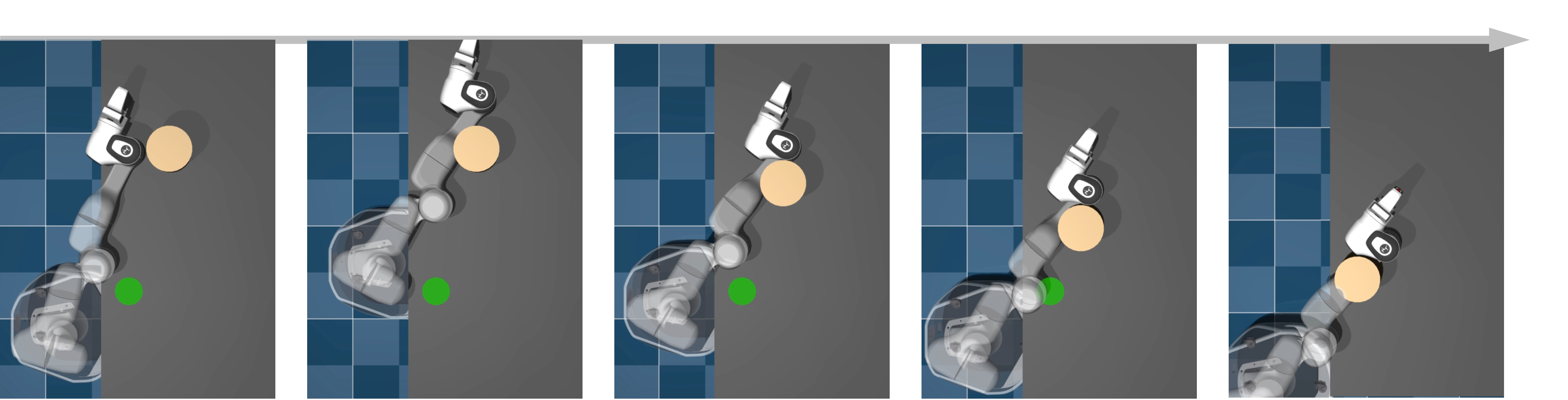}
        \caption{Link: using the last Franka arm link and the gripper.}
        \label{fig:link_nominal}
    \end{subfigure}

    \vfill

    \begin{subfigure}{0.48\linewidth}
        \centering
        \includegraphics[width=\linewidth]{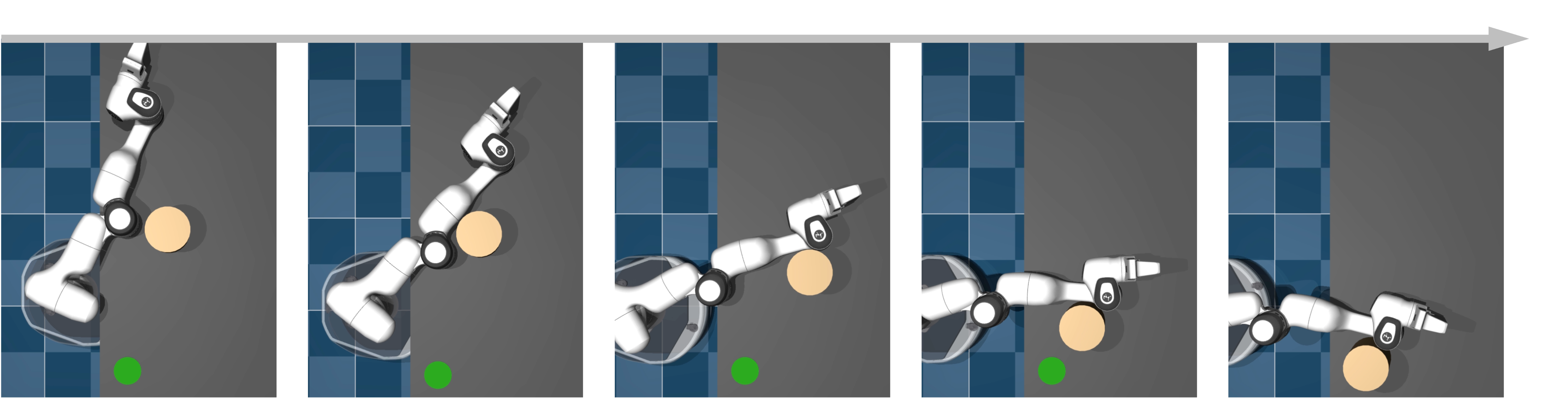}
        \captionsetup[subfigure]{justification=centering}
        \caption{WAContact: using the whole robot arm.}
        \label{fig:WAContact_nominal}
    \end{subfigure}
    \hfill
    \begin{subfigure}{0.48\linewidth}
        \centering
        \includegraphics[width=\linewidth]{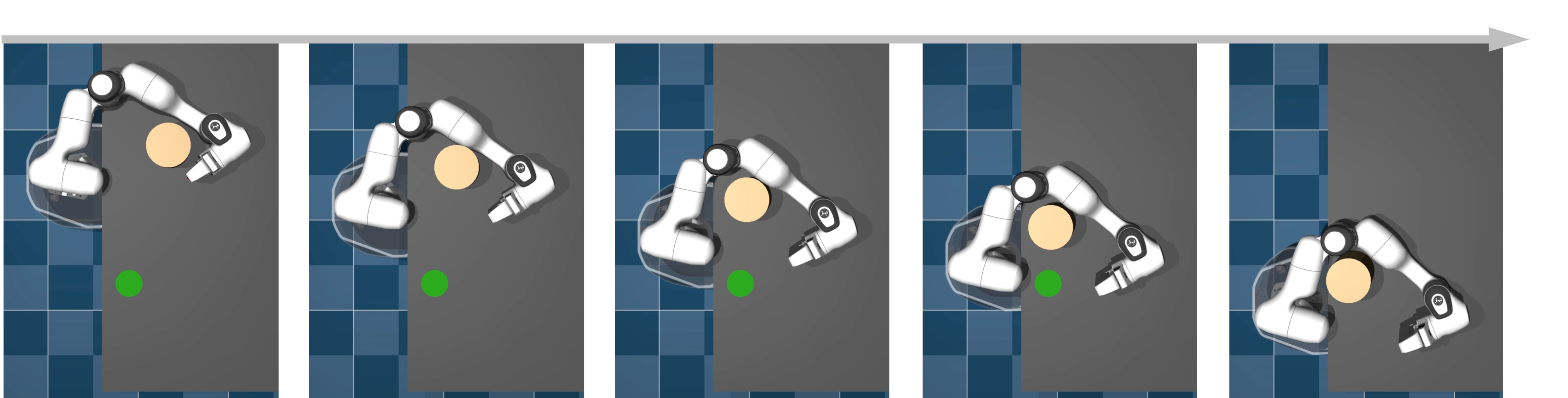}
        \caption{WACaging (Ours)}
        \label{fig:WACaging_nominal}
    \end{subfigure}

    \caption{Robot and object configurations at keyframes with four different contact geometries.}
    \label{fig:robot_obj_traj}

    \vspace{1.5\baselineskip}

    % ---------- Row 2: random disturbance robustness ----------
    \includegraphics[width=\linewidth]{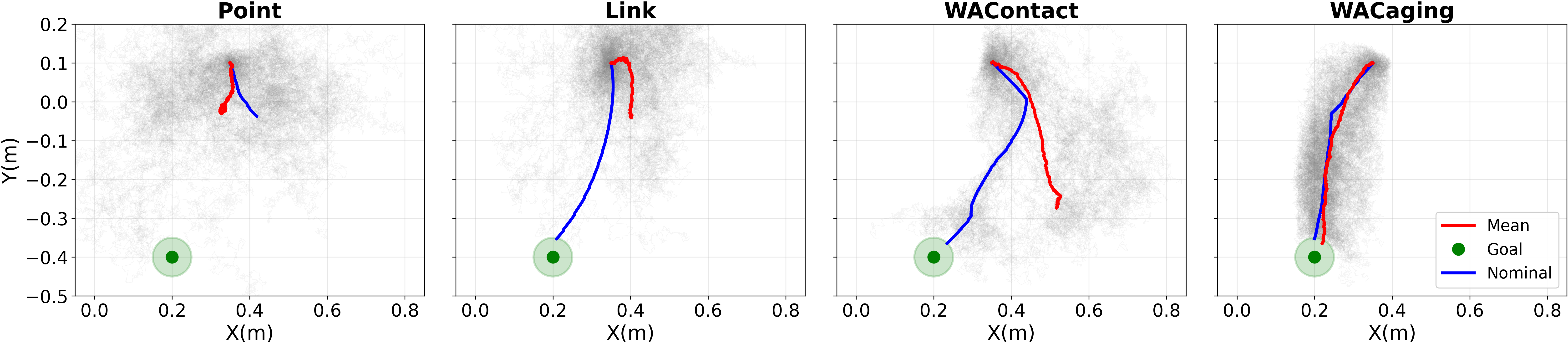}
    \captionof{figure}{Robustness evaluation under random object motion disturbances. The object trajectories without any disturbances are shown as \emph{Nominal} trajectories with blue lines. \emph{Red} trajectories show the mean disturbed object trajectories (shown with grey thin lines) over 100 trials.}
    \label{fig:traj_opt_random_dist}

    \vspace{1.5\baselineskip}

    % ---------- Row 3: caging metric & mismatch comparison ----------
    \begin{minipage}[t]{0.3\textwidth}
        \centering
        \includegraphics[width=\linewidth]{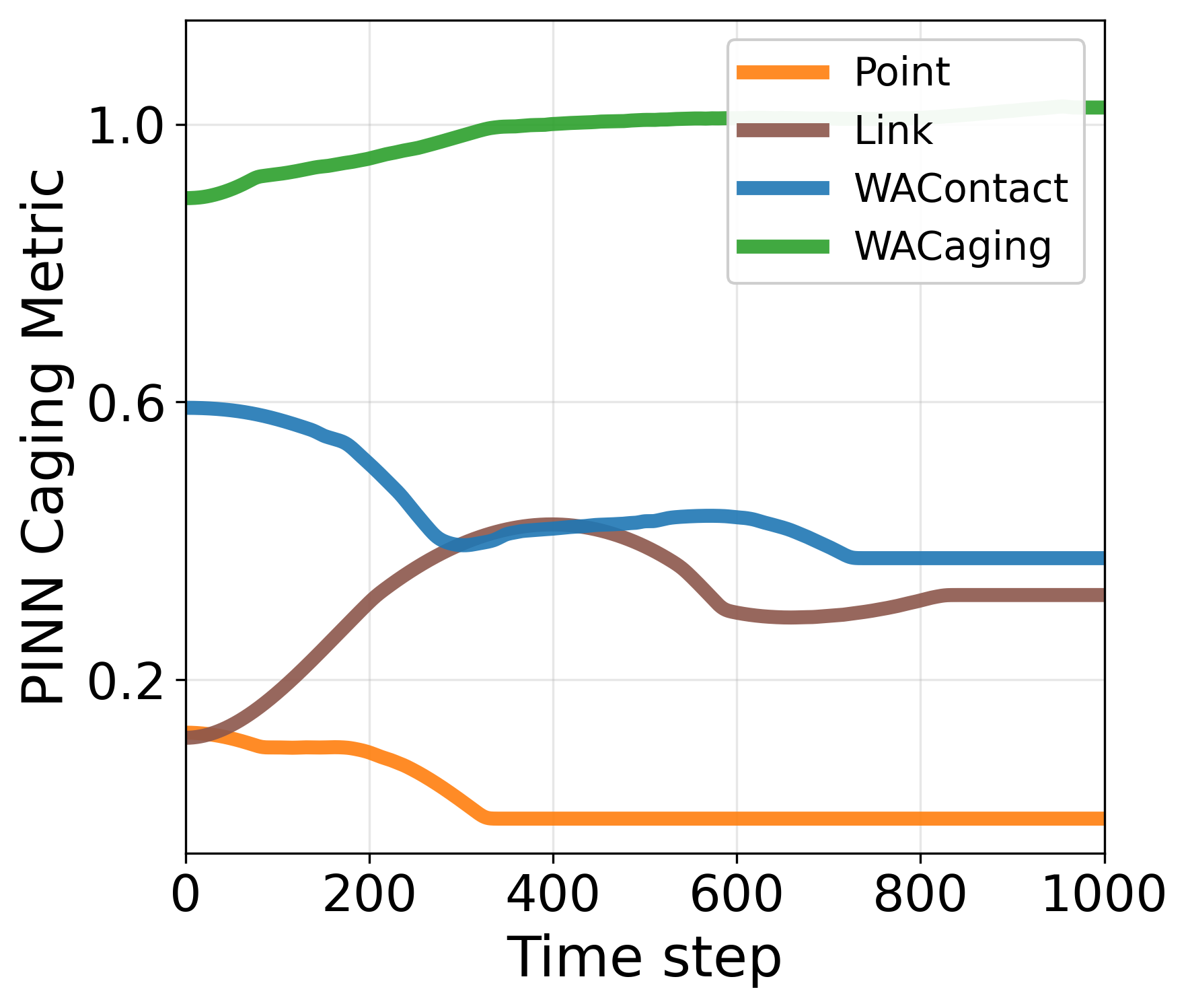}
        \captionof{figure}{Caging metric comparison.}
        \label{fig:wm_franka_caging_comp}
    \end{minipage}
    \hfill
    \begin{minipage}[t]{0.68\textwidth}
        \centering
        \includegraphics[width=\linewidth]{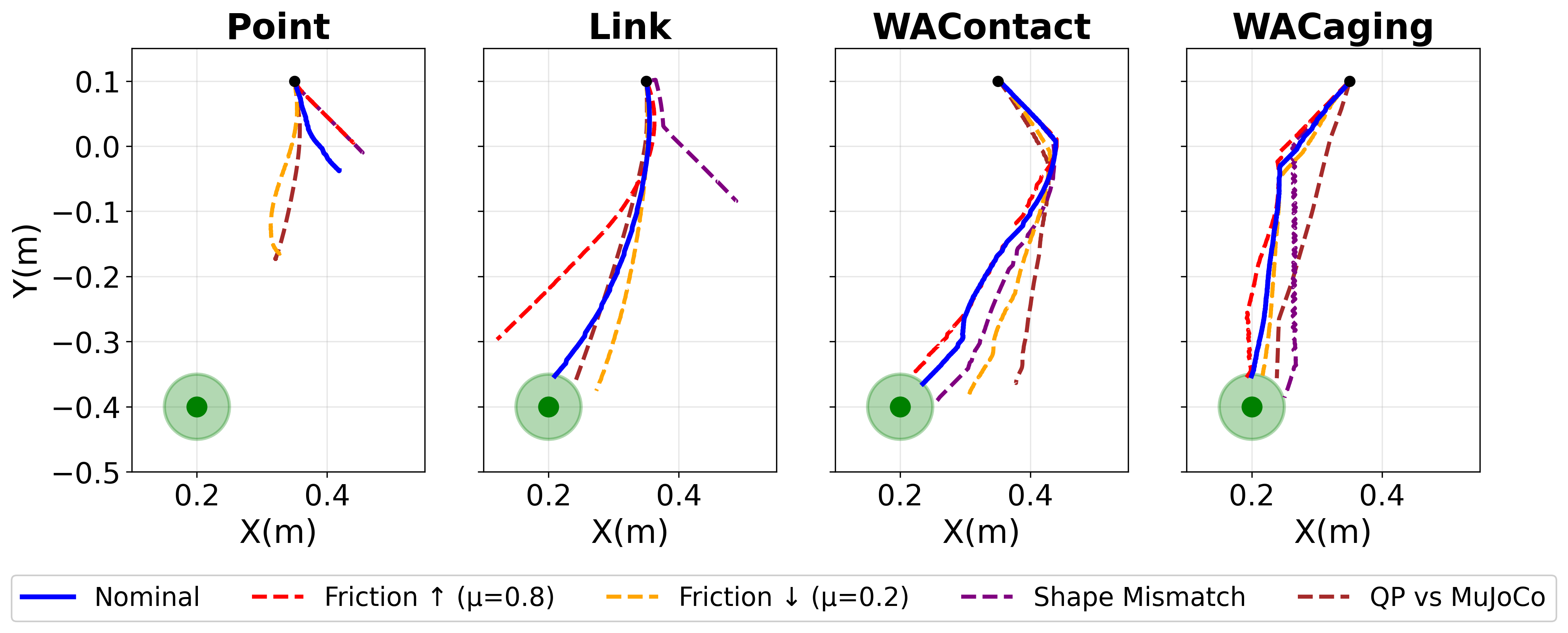}
        \captionof{figure}{Object trajectories under model mismatches.}
        \label{fig:traj_opt_mismatch_comp}
    \end{minipage}

    \vspace{1.5\baselineskip}

    % ---------- Row 4: tables (as tables, via captionof{table}) ----------
    \begin{minipage}[t]{0.3\textwidth}
        \centering
        \captionof{table}{Final distance to goal (D2G) and success rate (SR) under random object motion disturbances, complementary to Figure~\ref{fig:traj_opt_random_dist}.}
        \vspace{\baselineskip}
        \label{tab:franka_d2g_sr}
        \setlength{\tabcolsep}{5pt}
        \renewcommand{\arraystretch}{1.2}
        \begin{tabular}{lcc}
            \toprule
            Method & D2G & SR \\
            \midrule
            Point     & $0.383 \pm 0.165$  & 4\%  \\
            Link      & $0.399 \pm 0.201$  & 4\%  \\
            WAContact & $0.308 \pm 0.208$  & 15\% \\
            WACaging  & $\textbf{0.003}\pm \textbf{0.018}$ & \textbf{87\%} \\
            \bottomrule
        \end{tabular}
    \end{minipage}
    \hfill
    \begin{minipage}[t]{0.68\textwidth}
        \centering
        \captionof{table}{Goal-reaching performance on the Franka robot under four contact-dynamics mismatches. Friction $\uparrow$ and $\downarrow$ indicate friction mismatches where the coefficient changes from $0.5$ to $0.8$ and $0.2$, respectively. Shape and MuJoCo correspond to shape and contact dynamics order mismatches. Final distance to goal values are reported, complementary to Figure~\ref{fig:traj_opt_mismatch_comp}.}
        \label{tab:franka_mismatch}
        \setlength{\tabcolsep}{5pt}
        \renewcommand{\arraystretch}{1.2}
        \begin{tabular}{lccccc}
            \toprule
            Method & Nominal & Friction$\uparrow$ & Friction$\downarrow$ & Shape & MuJoCo \\
            \midrule
            Point     & 0.373 & 0.421 & 0.217 & 0.416 & 0.207 \\
            Link      & \textbf{0.000} & 0.080 & 0.027 & 0.378 & 0.008 \\
            WAContact & \textbf{0.000} & 0.001 & 0.058 & 0.008 & 0.132 \\
            WACaging  & \textbf{0.000} & \textbf{0.000} & \textbf{0.001} & \textbf{0.002} & \textbf{0.002} \\
            \bottomrule
        \end{tabular}
    \end{minipage}

\end{figure*}
\clearpage
The improvement from \emph{Point} to \emph{Link} arises from increased contact area, while \emph{WAContact} further benefits from whole-arm geometric coverage. However, because \emph{WAContact} does not explicitly suppress escape directions, its advantage diminishes as the object gradually slides along the arm (as shown in Figure~\ref{fig:WAContact_nominal}), leading to final distance-to-goal values that are comparable to those of \emph{Link}. In contrast, \emph{WACaging} maintains consistently higher caging values throughout execution and demonstrates significantly stronger trajectory consistency under all mismatch scenarios. The executed trajectories remain closely aligned with the nominal plan, resulting in substantially lower final distance-to-goal errors compared to all baselines. 

These results indicate that increasing geometric engagement alone improves robustness to model inaccuracies, but explicitly maximizing the escape-time value provides an additional layer of protection by actively suppressing escape modes (\termref{def:quest3}{\textbf{Q3}}, \termref{def:quest4}{\textbf{Q4}}). 

The comparison between \emph{WAContact} and \emph{WACaging} is especially important: both variants allow full-arm contact under the same planner, contact model, and state--action space, but only \emph{WACaging} optimizes the escape-time objective. The performance gap therefore isolates the contribution of escape-time shaping beyond the robustness obtained from whole-arm contact geometry alone (\termref{def:quest4}{\textbf{Q4}}). This geometric stabilization improves open-loop execution robustness to contact dynamic mismatches, supporting the use of contact-dynamics simplifications in whole-arm manipulation planning.

\subsubsection{Whole-Arm Trajectory Tracking:}
\label{subsec:traj_tracking}
The goal-reaching experiments demonstrate that escape-time shaping improves robustness when task progress is compatible with increasing enclosure. In many manipulation tasks, however, the object must follow a prescribed reference trajectory. In such cases, maximizing the escape-time objective may conflict with minimizing tracking error (\termref{def:quest5}{\textbf{Q5}}). This experiment therefore investigates how escape-time shaping interacts with reference trajectory tracking and evaluates whether robust manipulation can still be achieved when enclosure must be partially relaxed at the beginning of manipulation.

\paragraph{Reference trajectories and objective formulation.}
We consider two reference object trajectories connecting the same initial and final configurations as in the goal-reaching experiment. The first is a left-side circular arc (yellow curve in Figure~\ref{fig:wm_tt_ref_obj_traj_ratios}), along which enclosure can increase monotonically during motion. The second is a right-side arc (green curve), which requires initially reducing enclosure to reposition the robot before pushing the object back toward the goal, as shown in Figure~\ref{fig:wm_tt_traj_snapshots_ratio100}. Unless otherwise stated, we focus on the right-side trajectory, as it induces a geometric conflict between enclosure growth and reference tracking. During trajectory optimization, we minimize the distance between the object and the reference trajectory while maximizing the learned escape-time value. We vary the weight ratio 
\[
\lambda = \frac{w_{track}}{w_{cage}},
\]
so that small $\lambda$ corresponds to enclosure-dominant planning and large $\lambda$ emphasizes tracking accuracy.

\begin{figure}[!t]
    \centering
    \begin{subfigure}{\linewidth}
        \centering
        \includegraphics[width=\linewidth]{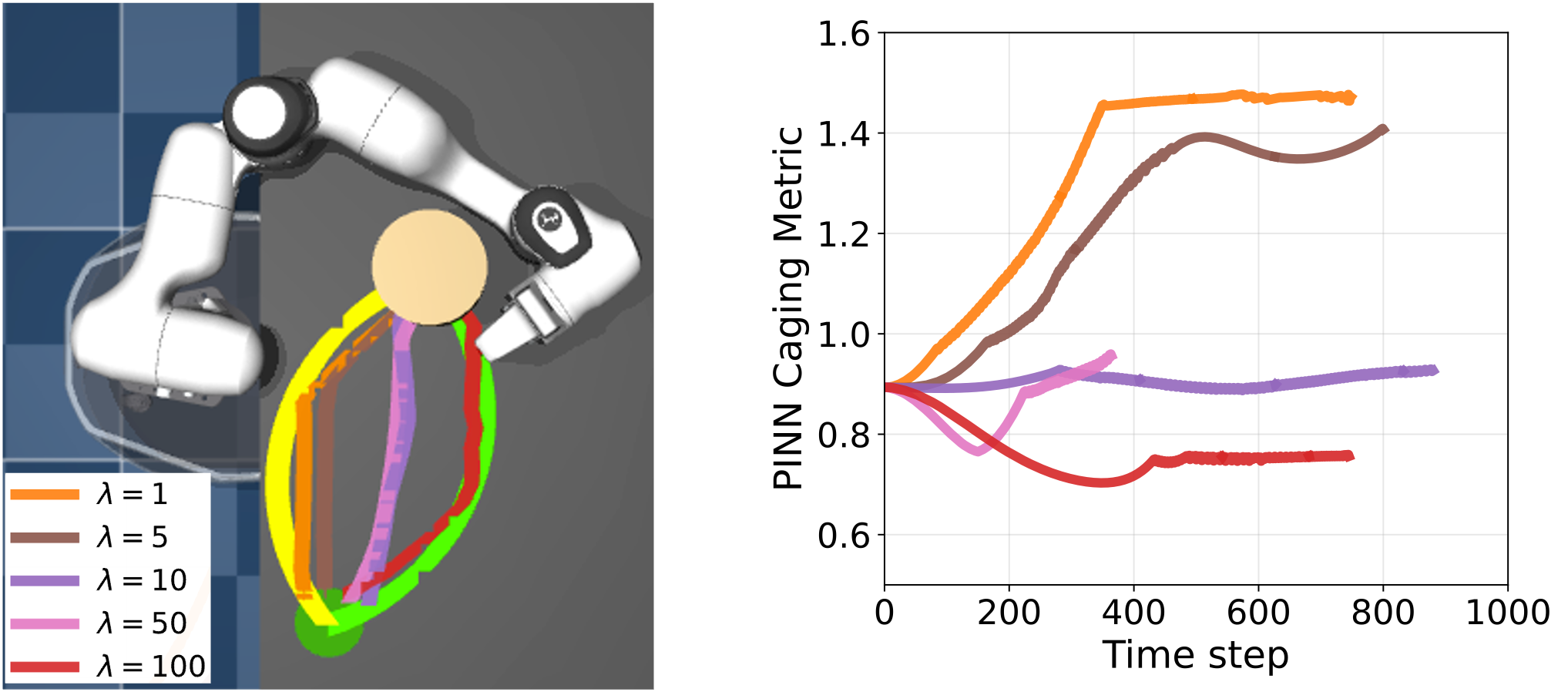}
        \caption{\emph{Left}: Two reference trajectories that connect the initial object position and the goal region center are shown with bright yellow and purple wider lines. Object's trajectories with different weight ratios $\lambda$ are shown with different colors. \emph{Right}: The figure shows the corresponding caging values over time for different ratios.}
        \label{fig:wm_tt_ref_obj_traj_ratios}
    \end{subfigure}
    \hfill
    \begin{subfigure}{\linewidth}
        \centering
        \includegraphics[width=\linewidth]{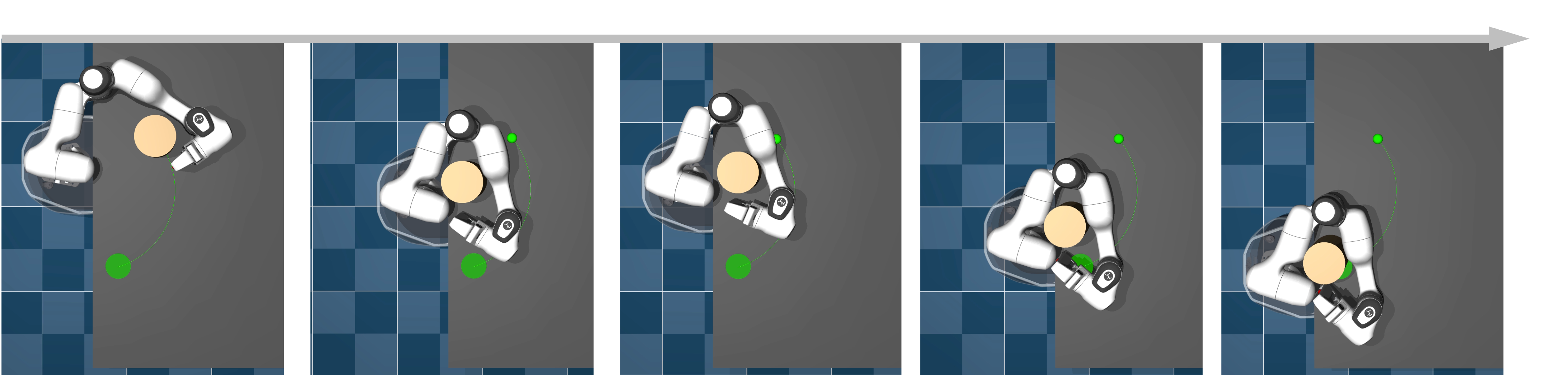}
        \caption{Robot and object configuration at keyframes when $\lambda = 1$}
        \label{fig:wm_tt_traj_snapshots_ratio1}
    \end{subfigure}
    \hfill
    \begin{subfigure}{\linewidth}
        \centering
        \includegraphics[width=\linewidth]{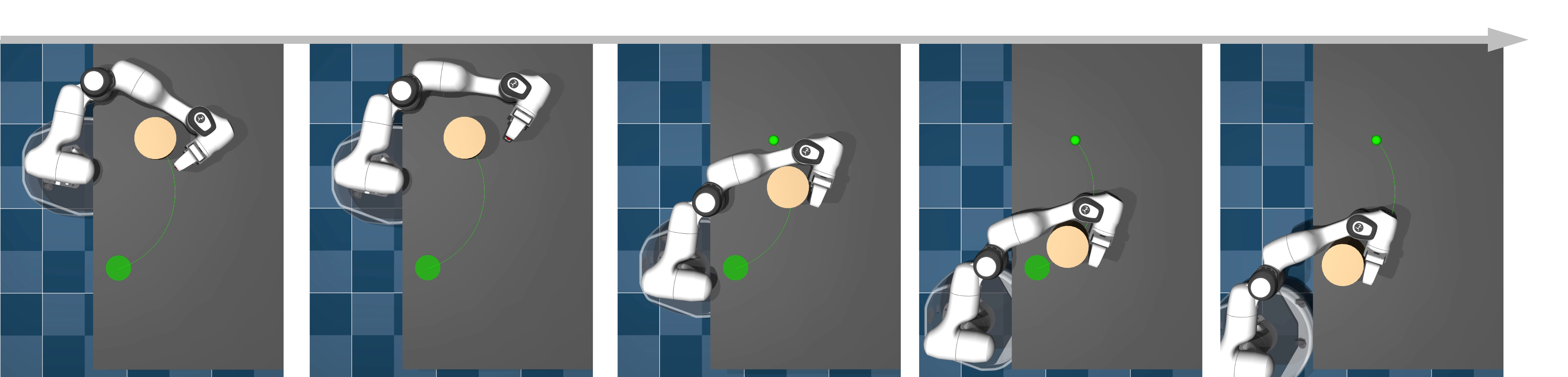}
        \caption{Robot and object configuration at keyframes when $\lambda = 100$}
        \label{fig:wm_tt_traj_snapshots_ratio100}
    \end{subfigure}
    \caption{Whole-arm trajectory tracking and caging performance analysis}
    \label{fig:wa_tt_performance_ana}
    \vspace{-0.5cm}
\end{figure}

\paragraph{Overall tracking and enclosure behavior.}
Figure~\ref{fig:wm_tt_ref_obj_traj_ratios} illustrates the resulting object trajectories and corresponding escape-time values for $\lambda \in \{1,5,10,50,100\}$.  For small $\lambda$ (e.g., $\lambda=1$ or $5$), the planner prioritizes enclosure and produces object paths that remain closer to the left-side arc as observed in the previous goal-reach task, despite the right-side arc being used as reference. The corresponding keyframes in Figure~\ref{fig:wm_tt_traj_snapshots_ratio1} show that the robot maintains a more wrapped configuration, preserving enclosure at the expense of reference fidelity.

As $\lambda$ increases, the object trajectory progressively aligns with the prescribed right-side path. For $\lambda=100$, the object closely follows the reference trajectory. The keyframes in Figure~\ref{fig:wm_tt_traj_snapshots_ratio100} reveal that the robot temporarily relaxes the enclosure early in the motion to reconfigure its pushing direction, thereby enabling accurate tracking. This improved tracking performance comes at the expense of enclosure margin, as shown in the right panel of Figure~\ref{fig:wm_tt_ref_obj_traj_ratios}. Increasing $\lambda$ systematically reduces the escape-time value along the trajectory. For large $\lambda$, the escape-time temporarily drops below its initial value, reflecting the need to reduce the enclosure to achieve the manipulability required for pushing along the conflicting reference path. 

\begin{figure}[!t]
    \centering
    \includegraphics[width=\linewidth]{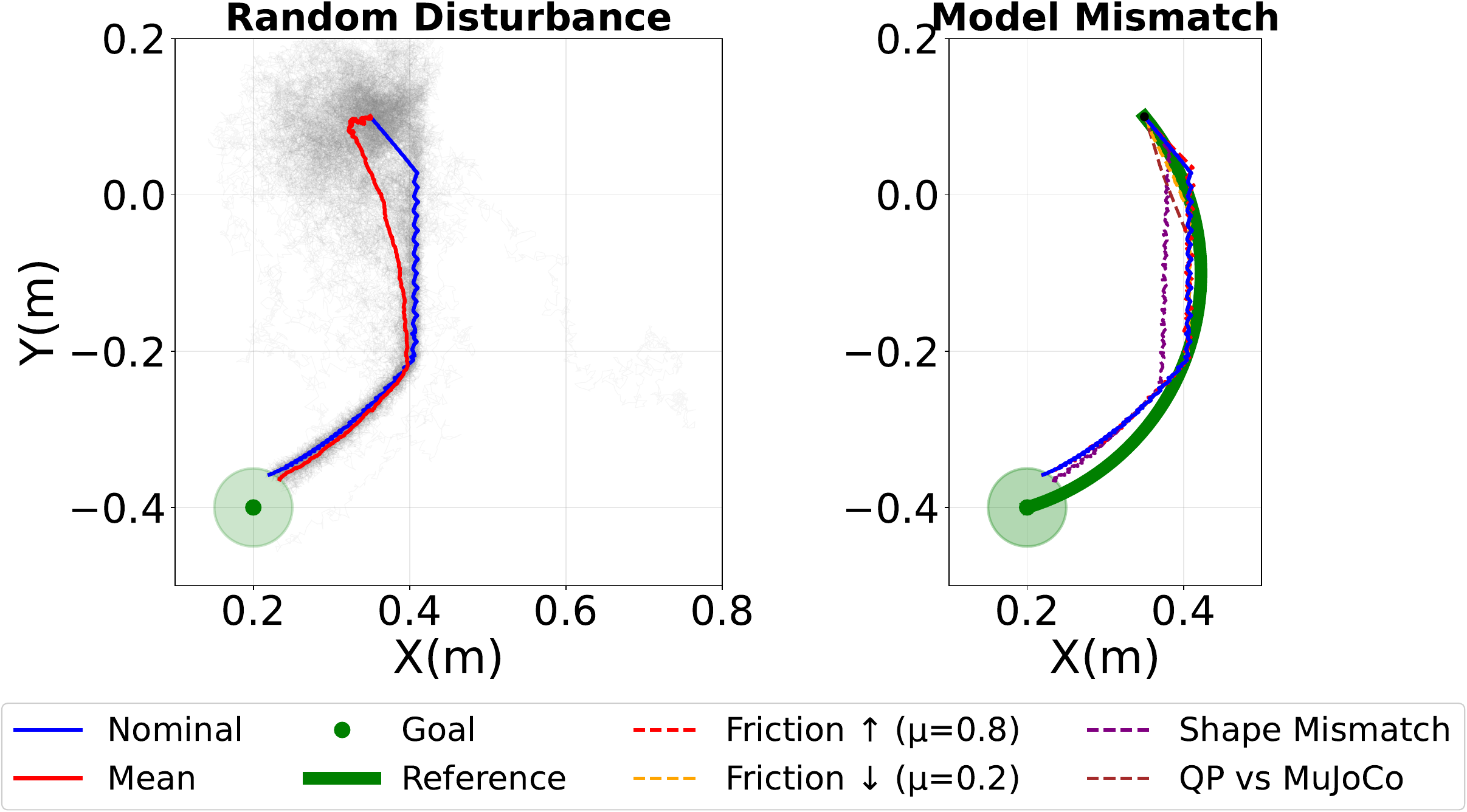}
    \caption{Evaluation of robustness with solution generated under $\lambda=100$. \emph{Left}: Robustness to random motion disturbance with magnitude $\sigma=0.005 \textrm{m}$. \emph{Right}: Robustness to four representative forms of contact model mismatches.}
    \label{fig:trajectory_level_robustness}
    \vspace{-0.5cm}
\end{figure}

\paragraph{Robustness analysis.}

We evaluate the robustness of the trajectory-tracking solution ($\lambda=100$) under both random motion disturbances and contact model mismatches, following the same experimental settings used in the goal-reaching experiments. The resulting trajectory distributions are shown in Figure~\ref{fig:trajectory_level_robustness}, using the same visualization conventions as the \textbf{WACaging} results reported in Figures~\ref{fig:traj_opt_random_dist} and~\ref{fig:traj_opt_mismatch_comp} for the goal-reaching experiments.

Compared with the goal-reaching task, the escape-time profile differs because the robot must temporarily reduce enclosure in order to adjust its pushing configuration. As shown in Figure~\ref{fig:wm_tt_ref_obj_traj_ratios}, the escape-time value decreases from approximately $0.9$ to $0.75$ during the initial phase of motion, whereas in the goal-reaching task the escape-time value gradually increases to around $1.0$. This reduced enclosure margin leads to a more dispersed distribution of object trajectories during the early phase under random disturbances, as seen in Figure~\ref{fig:trajectory_level_robustness} (Left). In several trials the object temporarily leaves the immediate manipulation region before the robot re-establishes contact.

Nevertheless, the overall escape-time value remains relatively high throughout the trajectory. Once the robot re-establishes contact with the object, the arm geometry again constrains the object motion and pushes it back toward the manipulation region. As a result, the disturbed trajectories gradually converge toward the nominal trajectory and follow the reference path toward the goal region. The final object positions therefore remain close to the goal, with a slightly larger variance compared with the goal-reaching experiment. 

A similar behavior is observed under contact-model mismatches. As shown in Figure~\ref{fig:trajectory_level_robustness} (Right), the executed trajectories remain bounded across all mismatch conditions and do not exhibit progressive drift away from the reference path. The largest deviation occurs under the shape mismatch condition, where the circular object is replaced with a box of edge length $0.212\,\mathrm{m}$. This introduces a geometric contact-distance discrepancy of approximately $0.031\,\mathrm{m}$ at the box corners, which is consistent with the maximal trajectory deviation observed in the experiments.

Taken together, these results indicate that when tracking and caging objectives compete, the robot temporarily relaxes enclosure in order to adjust its pushing configuration and satisfy the tracking objective. Although this reduces the escape-time margin compared with the goal-reaching task, the remaining enclosure remains sufficiently large to preserve robust manipulation behavior. As a result, object trajectories remain bounded under both disturbances and model mismatches, and the object can still be reliably pushed along the reference trajectory (\termref{def:quest5}{\textbf{Q5}}). 

These observations also suggest that incorporating caging-aware objectives during reference trajectory generation may further improve robustness in tasks where large configuration adjustments are required.

\subsection{Simulation-to-Real Transfer}\label{subsec:sim2real}

We then evaluate whether manipulation trajectories optimized using the simplified QP-based quasi-dynamic contact model transfer to real-world execution. The planned whole-arm caging trajectories are executed open-loop on the physical robot without replanning or feedback correction, allowing us to assess robustness to several simultaneous contact-model mismatches presented in the real system (\termref{def:quest5}{\textbf{Q6}}).

\begin{figure*}
    \centering
    \includegraphics[width=\linewidth, trim=0.3cm 6.9cm 0.3cm 2.5cm, clip]{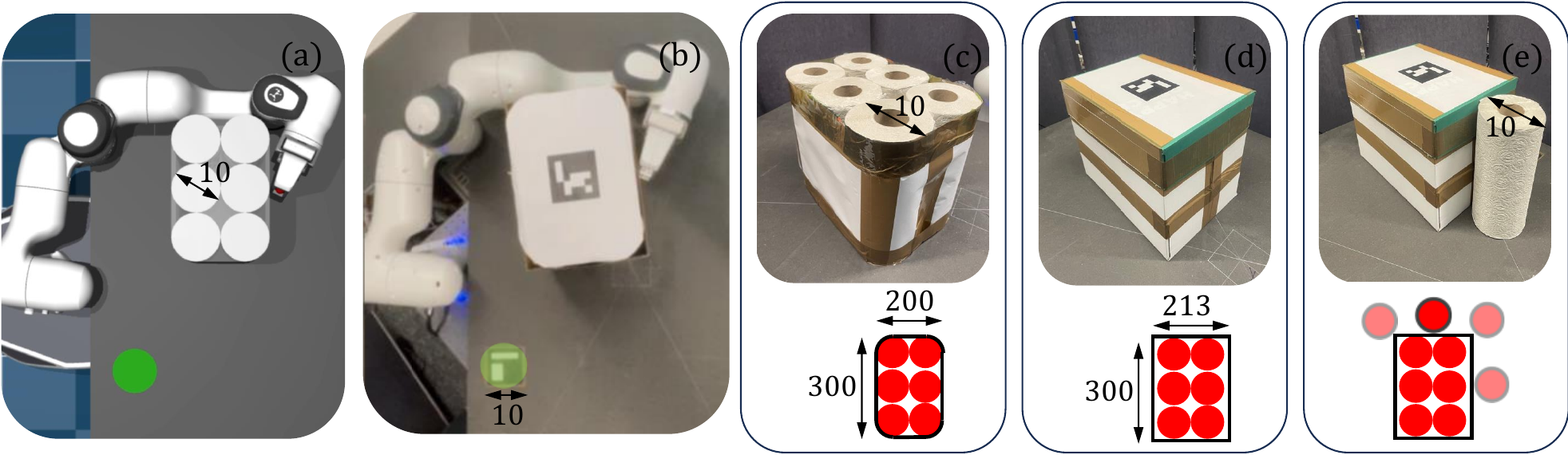}
    \caption{Real-world experiment setup with increasing levels of object-shape mismatch relative to the planning model. (a,b) Initial contact configuration in simulation and on the Franka robot. During planning, the object is approximated as six cylindrical primitives corresponding to the kitchen paper rolls (white cylinders in (a)). We considered three levels of object shape mismatch \textbf{(SM)} relative to the simulation model. \textbf{SM1}: packaged kitchen paper pack. \textbf{SM2}: the same pack inside an A4 paper box. \textbf{SM3}: the boxed object with an additional paper roll attached near the initial contact edges, producing the largest mismatch. The added roll is highlighted in (e). In all trials, the object is placed near the nominal simulated pose with small perception/calibration offsets. Units are $\mathrm{mm}$.}
    \label{fig:real_world_exp_setup}
    \vspace{-0.5cm}
\end{figure*}

Experiments are conducted using a fixed-base Franka Panda arm in a planar tabletop manipulation setup, as shown in Figure~\ref{fig:real_world_exp_setup} (a--b). The manipulated object is placed on the table at position $[0.4,\,0.2]\,\mathrm{m}$ with yaw angle $\frac{\pi}{2}$ and pushed toward a circular goal region (green area) centered at $[0.2,\,-0.2]\,\mathrm{m}$ using whole-arm caging manipulation. Object pose is estimated using an ArUco marker attached to the object and observed by an overhead RealSense D435 camera. The perception system provides an approximate position accuracy of about $6\,\mathrm{mm}$, while additional uncertainty may arise from marker placement on the object, resulting in small pose-estimation errors. The object is manually placed close to the nominal simulated initial pose but not perfectly aligned, introducing a small initial pose mismatch between simulation and reality.

The manipulation trajectory is generated in simulation using the whole-arm caging planner described in Section~\ref{subsec:wm_cage_mani_form}. During planning, robot–object interaction is modeled using the QP-based quasi-dynamic contact formulation in Equation~\eqref{eq:qp}. In the real world, however, the interaction follows the true second-order contact dynamics of the physical system with uncalibrated friction parameters. 

Moreover, we execute the same planned trajectory under three object shape mismatch scenarios to evaluate the robustness under increasing levels of geometric discrepancy. In simulation, the object is represented as the union of six cylindrical primitives corresponding to the individual paper rolls. In the real world, the rolls are packaged together as a single pack as shown in Figure~\ref{fig:real_world_exp_setup}(c), resulting in an initial shape mismatch (\textbf{SM1}). Next, the kitchen paper pack is placed inside a slightly larger A4 packaging box, introducing additional shape mismatch at the four corners of the box (\textbf{SM2}). Finally, an additional paper roll is placed near the A4 box, introducing larger shape mismatch and additional object–object contact that is not modeled during planning (\textbf{SM3}). These shape mismatches are visualized in Figure~\ref{fig:real_world_exp_setup}(c--e).

\begin{figure}[!t]
    \centering
    \includegraphics[width=\linewidth]{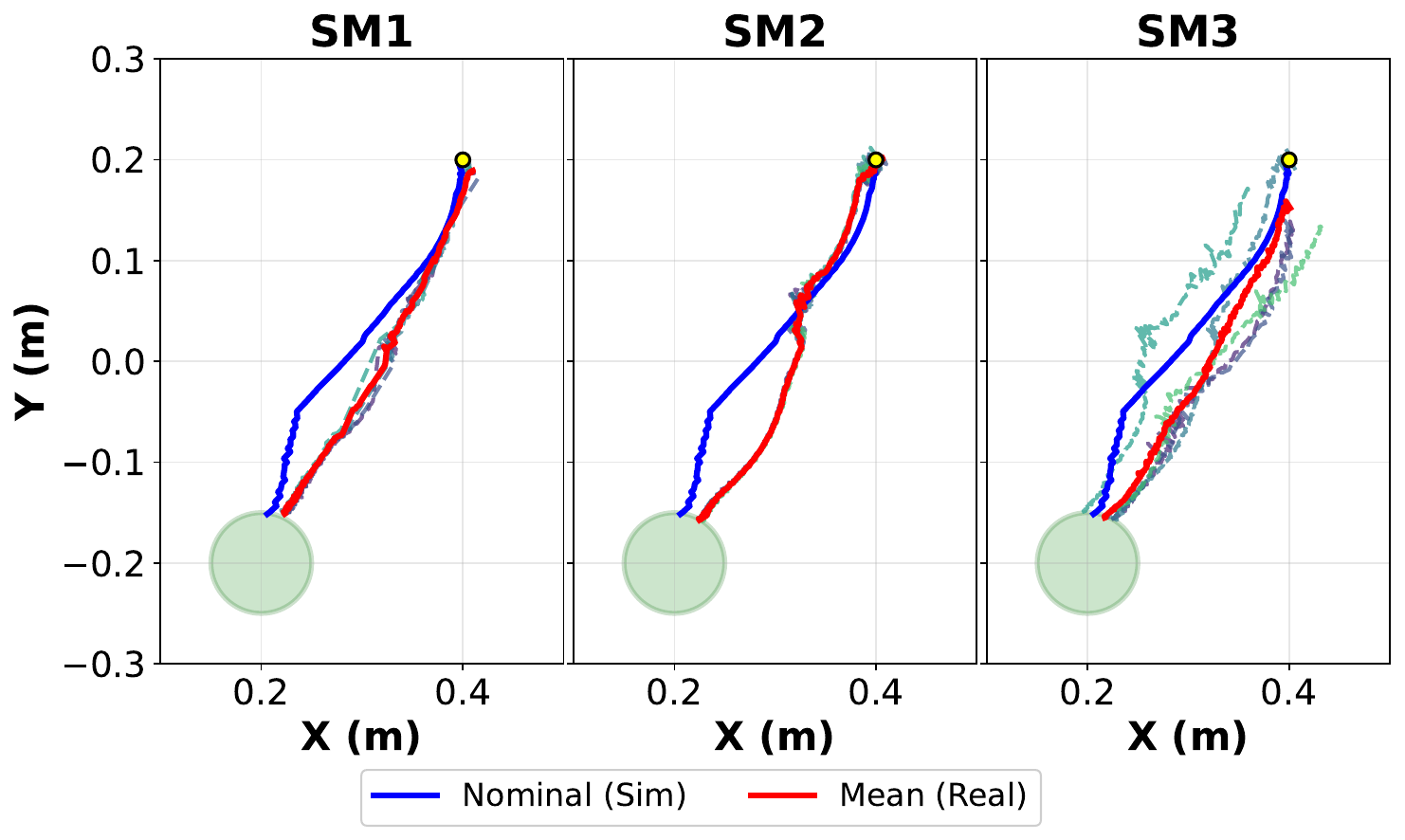}
    \caption{Object trajectories during sim-to-real transfer under three increasing levels of object-shape mismatch. Other execution conditions are kept unchanged across the three scenarios. The yellow dot marks the nominal initial object position used during simulation planning. For each case, the planned simulation trajectory is shown in blue, together with five real-world trials (thin dashed lines) and their mean trajectory (red). As mismatch increases, deviation from the nominal plan becomes more pronounced, enabling direct comparison of transfer robustness across scenarios.}
    \label{fig:real_world_obj_traj}
    \vspace{-0.5cm}
\end{figure}

Consequently, executing the planned trajectory in \textbf{SM1} already involves multiple contact-model mismatches, including initial object placement mismatch, quasi-dynamic contact-model approximation, and friction and shape mismatch. From \textbf{SM1} to \textbf{SM2} and \textbf{SM3}, other mismatches are kept and the level of shape mismatch is progressively increased to further evaluate the robustness of the planner. For each shape mismatch scenario (\textbf{SM1--SM3}), the same planned manipulation trajectory is executed open-loop on the real robot for five trials. No replanning or feedback correction is applied during execution. The resulting object trajectories are compared with the nominal trajectory observed using the QP-based quasi-dynamics contact model and shown in Figure~\ref{fig:real_world_obj_traj}.

For \textbf{SM1}, the object is pushed close to the goal boundary with a final distance of $4.1\,\mathrm{mm} \pm 2.1\,\mathrm{mm}$. Considering the perception uncertainty of approximately $6\,\mathrm{mm}$, we treat these trials as successful executions. When switching to the A4 paper box (\textbf{SM2}), the object is consistently pushed into the goal region across all trials. For \textbf{SM3}, we introduce an additional kitchen paper roll at the four configurations shown in Figure~\ref{fig:real_world_exp_setup}. The configuration highlighted in the figure is evaluated twice, while the remaining three trials correspond to the other configurations. The resulting trajectories are shown in Figure~\ref{fig:real_world_obj_traj}. Compared with \textbf{SM1} and \textbf{SM2}, larger deviations are observed in the initial object positions, and individual trials exhibit greater trajectory variation. Nevertheless, the mean object trajectory remains close to the nominal trajectory predicted by the planning model, and the object is consistently pushed into the goal region, similar to the results observed in \textbf{SM1} and \textbf{SM2}.

\begin{figure*}[!t]
    \centering
    \includegraphics[width=\linewidth, trim=5.9cm 4.4cm 5.9cm 4.4cm, clip]{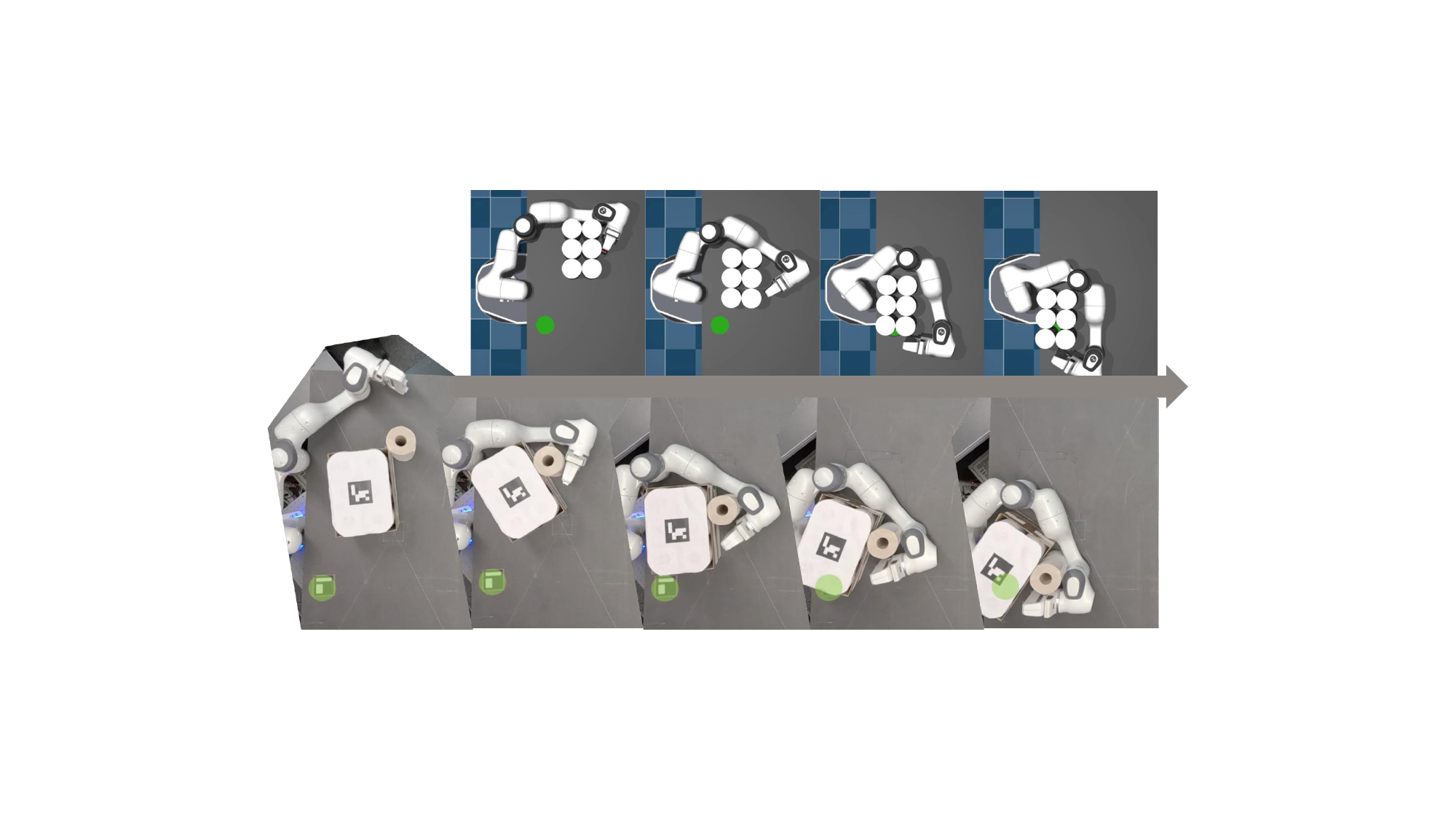}
    \caption{Snapshots of robot--object configurations in simulation and real-world execution under the most challenging mismatch setting (\textbf{SM3}: A4 paper box with an additional paper roll). The upper row shows the planning model used for whole-arm caging manipulation, while the lower row shows corresponding real-world execution. Despite multiple simultaneous contact-geometry mismatches, the transferred trajectory remains effective and successfully pushes the object into the goal region (green area), answering \textbf{Q6}.}
    \label{fig:real_world_snapshots}
    \vspace{-0.5cm}
\end{figure*}

To further illustrate the robot–object interaction during manipulation, Figure~\ref{fig:real_world_snapshots} shows representative snapshots of the planned trajectory in simulation with the six-cylinder contact geometry and the corresponding real-world execution under the \textbf{SM3} mismatch scenario. Additional results and videos can be found on the \href{https://sites.google.com/view/piec4wam?usp=sharing}{project webpage}.

Overall, these results indicate that whole-arm caging manipulation successfully handles the multiple contact-model mismatches encountered during sim-to-real transfer. The manipulation also remains robust under more severe geometric discrepancies and even when an additional object is introduced, which alters the contact geometry and creates object--object interactions not modeled during planning (\termref{def:quest6}{\textbf{Q6}}).

\section{Discussion}
The proposed escape-time formulation introduces a geometric objective for improving robustness in whole-arm manipulation. To make this framework practical for articulated robots and high-dimensional planning, we adopted several modeling and algorithmic simplifications. This section discusses these design choices, their implications, current limitations, and promising directions for future work.

\subsection{Physics-Informed Eikonal Caging}
\paragraph{Point-based Object Representation.} In this work, escape-time reasoning uses a point-based object approximation to obtain a stable and tractable formulation. Richer object models, such as bounding boxes or parametric shape descriptors, could be incorporated by lifting collision constraints into the joint robot--object configuration space. The same eikonal framework would then apply to the augmented state representation.

\subsection{Whole-Arm Caging Manipulation}\label{subsec:franka_cage_mani}
\paragraph{Escape-Time Metric versus Heuristics.}
The escape-time objective provides a principled alternative to local proximity-based metrics for whole-arm manipulation. Our experiments compare it with representative heuristic objectives to illustrate the limitations of purely local formulations. We do not claim that escape-time is the only effective metric; future work may develop simpler objectives that approximate enclosure reasoning while retaining similar benefits.

\paragraph{Optimizer Consideration.} The learned escape-time field is differentiable and can in principle be optimized with gradient-based methods. We demonstrated this in simplified configuration planning. However, full manipulation planning additionally couples the objective with hybrid contact dynamics. For the realistic Franka experiments, CMA-ES provided more reliable solutions under the available computational budget. Developing scalable differentiable contact planners that more fully exploit escape-time gradients remains an important direction for future work.

\paragraph{Min–max game for robust control.}
In the illustrative example in Section~\ref{subsec:caging_manipulation_example}, we also explored a stage-wise adversarial variant in which the object was modeled as an agent with bounded escape velocity. At each time step, the object’s action followed the gradient of the learned escape-time field, and the resulting disturbance was explicitly injected into the contact dynamics. This formulation is conceptually closer to differential game–based approaches, while remaining a single-player trajectory optimization under a fixed disturbance policy. Empirically, we found that explicitly simulating such adversarial object dynamics significantly increased planning time and numerical sensitivity, while producing manipulation behaviors qualitatively similar to those obtained using the escape-time objective alone (Appendix~\ref{app:gt_wm_cage_mani}). These results suggest that, for the evaluated tasks, most of the robustness benefits arise from the geometry-based escape-time field itself, rather than from online adversarial rollout. Motivated by this trade-off, we adopt the simpler formulation in this work and leave the development of more stable and scalable game-theoretic extensions to future work.

\paragraph{Relation to Other Robust Planning Methods.}
Belief-space planning, robust model predictive control, and reinforcement learning typically improve robustness through explicit uncertainty modeling, feedback correction, or policy adaptation. These approaches are complementary to our method. Rather than representing uncertainty as a belief distribution or optimizing over a prescribed set of parameter variations, our approach introduces a geometric robustness primitive: it shapes the robot configuration so that object escape becomes difficult under bounded worst-case motion. This is particularly relevant for whole-arm manipulation, where contact-model mismatch can arise from multi-contact mode transitions, geometry simplification, friction errors, and unmodeled object motion, which are difficult to parameterize exhaustively.

Accordingly, the baselines in this work are designed as objective and contact-geometry ablations rather than exhaustive comparisons to all robust-planning paradigms. The contact prior, proximity, link-contact, and whole-arm contact baselines isolate whether the observed robustness comes from the proposed escape-time objective, rather than merely from maintaining contact, reducing local distance, increasing contact area, or allowing whole-arm interaction. In particular, the \emph{WAContact} baseline uses the same full-arm contact geometry as our method but removes the escape-time objective, directly testing whether whole-arm geometry alone is sufficient. The results show that explicit escape-time shaping provides additional robustness beyond whole-arm contact alone. Combining the proposed escape-time objective with feedback control or belief-space planning is a promising direction for more complex settings such as occlusion, clutter, multi-object manipulation, and larger disturbances.

\subsection{Limitations}
\paragraph{Primarily Open-Loop Execution.}
The experiments in this work rely mainly on open-loop trajectory execution. This design isolates the geometric robustness induced by the escape-time objective: the improved execution performance cannot be attributed to online replanning or feedback correction. However, the proposed formulation does not provide a formal closed-loop robustness guarantee for arbitrary disturbances, sensing errors, or contact-model mismatch. The escape-time field should instead be interpreted as a geometric robustness margin under the bounded escape model, which empirically improves tolerance to the representative mismatches evaluated in this paper. Belief-space planning and feedback control remain important under perception uncertainty or large disturbances. Combining escape-time shaping with model predictive control, differentiable simulation for closed-loop policies is a promising direction for further improving robustness in more complex manipulation settings.

\paragraph{Planar Task Scope.}
The evaluated tasks are primarily planar pushing scenarios. Extending the framework to three-dimensional manipulation is conceptually natural for geometry-based caging, but requires richer escape dynamics than the isotropic eikonal formulation used in this work. In 3D settings, object escape depends on gravity, resulting in direction-dependent mobility. One promising direction is to replace the isotropic escape model with an anisotropic eikonal formulation \citep{mirebeau2014anisotropic, chen2016new}, where the local wave propagation speed depends on both configuration and direction. For example, the escape speed could vary with the angle between the object motion direction and gravity, assigning different costs to motions along, against, and perpendicular to gravity. This would allow the escape-time field to encode gravity-aware resistance to object escape. Recent work on neural eikonal fields suggests that such metric-dependent time fields can be approximated with PINNs \citep{Li26IJRR}, making this a promising path toward gravity-aware 3D whole-arm caging manipulation. We leave the systematic design and evaluation of such models to future work.

\section{Conclusion}
This work introduced escape-time shaping as a geometric robustness primitive for whole-arm manipulation planning. By formulating caging as a minimum-time escape problem and leveraging its eikonal characterization, we obtained a continuous and differentiable escape-time field that quantifies resistance to worst-case object escape and can be optimized directly in manipulation planning. We demonstrated that incorporating this escape-time objective into whole-arm manipulation planning systematically biases the robot toward configurations that suppress escape directions. In simulation and real-world experiments, this geometric bias translated into improved robustness under representative contact-model mismatches and disturbances, which can be further exploited to simplify the whole-arm contact dynamics. Overall, the results suggest that reasoning about geometric escape in configuration space provides a principled and practical way to induce robustness in contact-rich whole-arm manipulation.

{\small 
\section*{Acknowledgements}
This work is supported by the State Secretariat for Education, Research and Innovation (SERI), Switzerland, for participation in the European Commission’s Horizon Europe Programme through the INTELLIMAN project (HORIZON-CL4-Digital-Emerging Grant 101070136). 
The work is also partly funded by the European Commission under the Horizon Europe Framework Program project SoftEnable (Grant 101070600).
The authors thank Ruiqi Ni for his valuable discussion on eikonal equations and physics-informed neural networks.
}

{\small 
\section*{Declaration of conflicting interests} 
The authors declared no potential conflicts of interest with respect to the research,
authorship, and/or publication of this article.
}

\bibliographystyle{SageH}
\bibliography{Sage}

\appendix

\section{Implementation Details}
\subsection{Whole-Arm Caging Configuration Planning}\label{app:wm_caging_opt}

\subsubsection{Physics-informed Eikonal Caging}\label{app:pinn}
We use the same PINN training pipeline for both planar-arm and Franka experiments; the two setups differ mainly in input dimensionality, boundary geometry, and sampling parameters.

\paragraph{Network architecture.}
The escape-time field is represented by a multi-layer perceptron (MLP) that takes as input the concatenated vector $(\bm{p}, \bm{q}_{r})$ of object key points $\bm{p}$ and robot configuration $\bm{q}_r$, with total input dimension $d_{\bm{p}} + d_{\bm{q}_{r}}=6$ for the planar toy arm and typically $d_{\bm{p}} + d_{\bm{q}_{r}}=5$ for the Franka arm. The MLP has depth $6$ and width $256$ and outputs a scalar value passed through a Softplus \citep{dugas2000incorporating} activation to enforce non-negativity. 

\paragraph{Speed field from robot SDF.}
In the main text, the eikonal equation is written with a constant maximal
escape speed \(u_{\max}\). In implementation, we use a spatially varying
maximal speed field \(u_{\max}(\bm p,\bm q_r)\) induced by the robot signed
distance function
\(d=\phi_r(\bm p,\bm q_r)\). This field slows down wave propagation near
the robot surface and blocks propagation inside collision regions.

We define a normalized speed attenuation factor
\begin{equation}
    \alpha(d)
    =
    \alpha_{\min}
    +
    (1-\alpha_{\min})
    \sigma\!\big(s(d-\tau)\big),
\end{equation}
where \(\sigma(\cdot)\) is the logistic sigmoid. In our experiments, we set
\(\alpha_{\min}=0.01\), \(\tau=0.05\), and \(s=40\). The local maximal escape
speed is then given by
\begin{equation}
u_{\max}(\bm p,\bm q_r)
=
\bar u_{\max}
\begin{cases}
0, & d \le d_0,\\[2pt]
\max\!\big(\alpha(d),\,\alpha_{\mathrm{floor}}\big), & d>d_0,
\end{cases}
\end{equation}
where \(\bar u_{\max}\) is the nominal free-space escape speed,
\(d_0=0.03\) is the near-collision cutoff, and
\(\alpha_{\mathrm{floor}}=0.001\) prevents numerically vanishing speeds in
near-contact but collision-free regions.

Accordingly, the physics-informed eikonal residual used during training is
\begin{equation}
    \mathcal L_{\mathrm{pde}}
    =
    \left(
    \left\|\nabla_{\bm p} T_p(\bm p,\bm q_r)\right\|
    -
    \frac{1}{u_{\max}(\bm p,\bm q_r)}
    \right)^2 .
\end{equation}
When \(u_{\max}(\bm p,\bm q_r)=\bar u_{\max}\) everywhere in free space,
this reduces to the constant-speed eikonal equation used in the main text.
During training, samples are restricted to collision-free points with a
small clearance by enforcing
\(\phi_r(\bm p,\bm q_r)\ge 0.01\).

\paragraph{Sampling strategy.}
At each training iteration, we sample a mini-batch of robot configurations and then draw training points from two streams: two from the interior of the collision-free space and one from the workspace boundary. The first interior stream samples collision-free points in a thin band near the robot surface, where the geometry is most restrictive and accurate gradients of the escape-time field are most important. The second interior stream samples from the entire collision-free workspace to encourage consistency of the PDE away from obstacles and to reduce bias toward only near-robot regions. In our experiments, we use a \(0.99/0.01\) split between the near-band and full-workspace streams. Additionally, boundary points are sampled on the outer workspace boundary and assigned \(\mathcal{T}_p=0\). This fixes the value of the time field on the boundary, providing a consistent zero-time reference.

\paragraph{Training details.}
We train the neural network with two loss terms: a PDE residual loss and a boundary loss, weighted by \((\lambda_{\mathrm{pde}},\lambda_{\mathrm{bc}})=(10,1)\).
Optimization is performed using Adam with a learning rate \(0.001\), and we apply a step scheduler that multiplies the learning rate by \(0.5\) every 2000 iterations.
Training runs for \(2\times10^4\) iterations.

\subsubsection{Optimizers: CMA-ES and SQP}\label{app:optimizers_implement}

\paragraph{CMA-ES.}
In Sections~\ref{subsec:example_wm_caging} and~\ref{subsec:exp_wm_caging_conf}, we use CMA-ES to solve a static whole-arm caging configuration problem. The objective is to find a robot configuration that maximizes the learned escape-time field while penalizing collisions between the robot arm and the target object. The optimization objective is given by 
\[c = -\hat{\mathcal{T}}(\bm{q}_o^{0}, \bm{q}_r) + \omega_{\mathrm{col}} \sum_{i=1}^{K} \phi_r(\bm{p}_i, \bm{q}_r),
\]
where $\bm{q}_o^{0}$ denotes the fixed initial object configuration, $\bm{q}_r$ is the robot joint configuration, $\phi_r$ is the robot signed distance field evaluated at object key points $\{\bm{p}_i\}_{i=1}^{K}$, and $\omega_{\mathrm{col}} = 10^{3}$ is a large penalty weight encouraging collision-free configurations. We use a population size of $100$ and run CMA-ES for $100$ iterations for both planar and Franka-arm caging configuration optimization.

\paragraph{SQP.}
In Section~\ref{subsec:example_wm_caging}, we additionally solve the static whole-arm caging configuration problem using sequential quadratic programming (SQP). The problem is formulated as the minimization of the negative escape-time objective subject to collision-free constraints and joint-limit box constraints, following Equation~Equation~\eqref{eq:wm_caging_lb}. Collision avoidance is enforced using the robot's SDF evaluated at each object key point. At each SQP iteration, constraints are linearized and a quadratic program of the form
\[
    \min_{\Delta}\ \tfrac{1}{2}\Delta^\top \bm{H}\,\Delta + \bm{g}^\top \Delta,
\quad \text{s.t. } \bm{G}\Delta \le \bm{h},
\]
is solved, where $\bm{g}$ is the gradient of the objective and $\bm{H}$ is a Gauss--Newton approximation given by $\bm{H} = \bm{J}^\top \bm{J} + 10^{-2}\mathbf{I}$. For the planar arm, constraint gradients are obtained analytically from the capsule-based SDF representation of each link, yielding one constraint per link--object pair. The robot configuration $\bm{q}_r$ is updated using an adaptive step size: if the maximum constraint violation exceeds $10^{-3}$, the step size is reduced by a factor of $0.8$ (down to a minimum of $0.1$); if it is below $10^{-6}$, the step size is increased by a factor of $1.1$ (up to a maximum of $1.0$). Optimization terminates when $\|\Delta\| < 10^{-2}$ or after $100$ iterations.

\subsection{Whole-Arm Caging Manipulation}
\label{app:wm_caging_mani}

\subsubsection{Trajectory Parameterization and Optimization}

For the caging manipulation tasks in Sections~\ref{subsec:caging_manipulation_example}  and~\ref{subsec:mobile_franka_cage_mani}, robot trajectories are parameterized using $5$ key points over a planning horizon of $1000$ time steps, following the via-point representation in~\cite{Jankowski2022vp}. The full trajectory is obtained by interpolating between key points while enforcing joint-level kinematic constraints. Maximum velocity limits are set to $0.05\,\mathrm{rad/s}$ for arm joints (and $0.05\,\mathrm{m/s}$ for the mobile base).

Trajectory optimization is performed using CMA-ES. For each planning task, CMA-ES is run for $100$ iterations, and each iteration evaluates $50$ sampled trajectories under the same quasi-dynamic contact model. The best-performing trajectory across all iterations is selected for execution and evaluation $c_{prior}$. All methods share the same trajectory parameterization, optimizer, contact model, and planning horizon. The only difference between baselines and the proposed method lies in the contact-related objective term. All whole-arm caging manipulation experiments optimize the objective
\[
c = \sum_{t=0}^{1000} \Big(
    w_{\text{task}}\, c_{\text{task}}(t)
    + w_{\text{cmd}}\, c_{\text{cmd}}(t)
    + w_{\text{prior}}\, c_{\text{prior}}(t)
    \Big),
\]
where:
\begin{itemize}
    \item $c_{\text{task}}(t)$ encodes the task objective, such as goal reaching or reference trajectory tracking;
    \item $c_{\text{cmd}}(t) = \|\bm{u}_r^t\|^2$ penalizes control effort;
    \item $c_{\text{prior}}(t)$ denotes either the contact-prior term used by the baselines or the caging-related term $-\hat{\mathcal{T}}(\bm{q}_o^t, \bm{q}_r^t)$ derived from the learned escape-time metric (proposed method).
\end{itemize}
Unless otherwise specified, $w_{\text{task}}=10^2$, $w_{\text{cmd}}=10^3$, and $w_{\text{prior}}=10^2$ 
are used for trajectory optimization experiments.

For the trajectory tracking task in Section~\ref{subsec:franka_cage_mani}, the same trajectory parameterization, optimizer, and hyper-parameters are used. The only modification concerns the task term $c_{\text{task}}$. In trajectory optimization (goal-reaching), we define the task objective as
\[
c_{\text{task}}(t) = \|\bm{q}_o^t - \bm{q}_o^g\|^2,
\]
the squared distance from the current object position to the center of the goal region. In trajectory tracking, instead of penalizing distance to a fixed goal, we define
\[
c_{\text{task}}(t) 
    = d\big(\bm{q}_o^t, \bm{q}_o^{\mathrm{ref}}\big),
\]
where $d(\cdot,\cdot)$ denotes the shortest Euclidean distance from the current object position $\bm{q}_o^t$ to the reference trajectory $\bm{q}_o^{\mathrm{ref}}$. The reference trajectory is discretized into $50$ interpolated points, and the minimum distance to these points is used as the tracking error.

\subsubsection{Contact Dynamics}
\paragraph{QP-based quasi-dynamic contact model.}
For the caging manipulation tasks in Sections~\ref{subsec:caging_manipulation_example} and~\ref{subsec:mobile_franka_cage_mani}, we employ a QP-based quasi-dynamic contact model during planning. Unless otherwise specified, the model uses a Coulomb friction coefficient of $0.5$, a discrete planning time step of $0.1\,\mathrm{s}$, and a diagonal mass matrix $\mathbf{M} = \mathrm{diag}([0.1,\,0.1,\,0.1,\,0.1,\,0.1,\,0.042])$. Mass matrix regularizer $\epsilon$ is set to be $10$ for stable simulation of multi-contact interaction. The primary difference between the planar illustrative example and the mobile Franka experiments lies in the joint-space stiffness parameters. In the planar illustrative example (Section~\ref{subsec:caging_manipulation_example}), we use a stiffness matrix $\mathbf{K} = \mathrm{diag}([500,\,500,\,500,\,500])$, whereas in the mobile Franka experiments (Section~\ref{subsec:mobile_franka_cage_mani}) we use higher stiffness values $\mathbf{K} = \mathrm{diag}([10^{4},\,\ldots,\,10^{4}])$ for the actively controlled joints. Contact Jacobians are computed using MuJoCo's collision detection and contact geometry interfaces, ensuring consistent contact point locations and surface normals between planning and simulation.

\paragraph{MuJoCo forward simulation.}
For forward simulation and execution, we use MuJoCo with its default simulation time step of $0.002\,\mathrm{s}$. The Franka arm is controlled using a position controller whose stiffness parameters match those used in the corresponding QP-based planning model. Each planned trajectory waypoint generated by the QP model is executed over $50$ MuJoCo simulation steps, allowing the system to converge to the target configuration before advancing to the next waypoint.

\section{Additional Experiment Results}

\subsection{Stage-wise Adversarial Whole-Arm Caging Manipulation}
\label{app:gt_wm_cage_mani}

For the toy whole-arm caging manipulation example in Section~\ref{subsec:caging_manipulation_example}, we additionally evaluated a stage-wise adversarial variant that is closer in spirit to differential game formulations, by introducing an explicit object escape velocity into the robot--object contact dynamics. In this variant, the object is modeled as an adversarial agent that attempts to escape at each time step, while the robot optimizes the same task-level objective as in the main planner.

Specifically, we modify the contact dynamics in~Equation~\eqref{eq:wm_caging_mani_dyn} by injecting a bounded escape velocity
\[
g(\bm{q}_r^t, \bm{q}_o^t) = -u_{\max} \, \nabla_{\bm{q}_o^t} \hat{\mathcal{T}}(\bm{q}_o^t, \bm{q}_r^t),
\]
which corresponds to the steepest descent direction of the learned escape-time field with respect to the object configuration conditioning on the current robot configuration. This induces a closed-loop, stage-wise adversarial interaction: after each robot action, the object reacts according to a fixed escape policy derived from the inner value function, while the outer planner remains a single-player trajectory optimization under this disturbance model. All other planner parameters are kept identical to those used in the main paper.

\begin{figure}[!t]
    \centering
    \begin{subfigure}{\linewidth}  
        \centering
        \includegraphics[width=\linewidth]{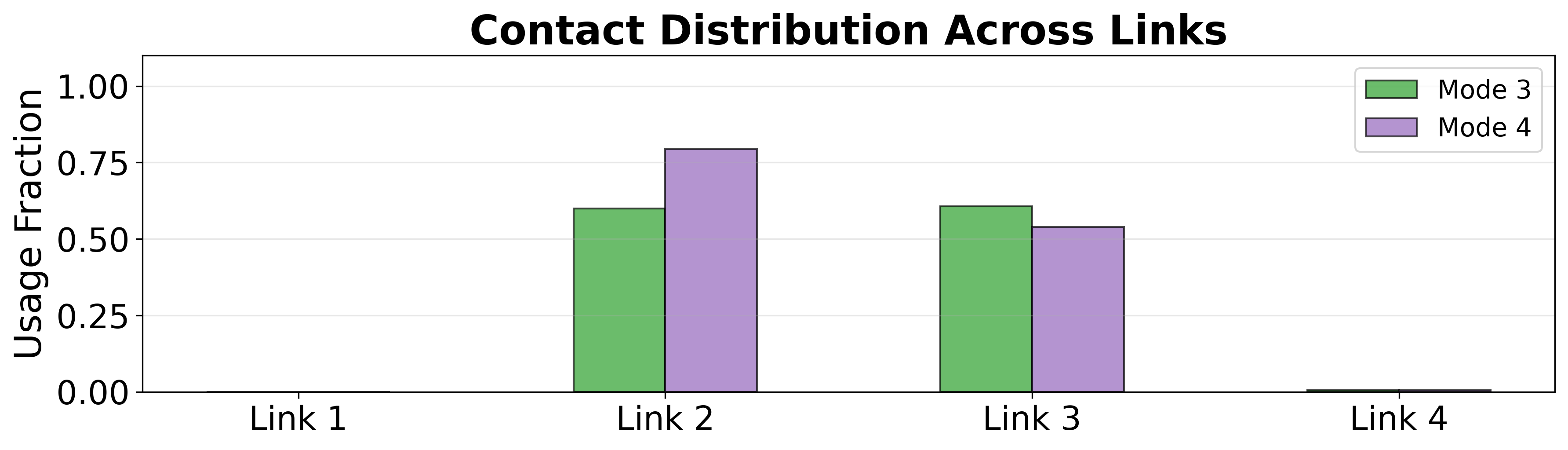}
        \caption{Fraction of time each arm link remains in contact with the object. }
        \label{subfig:arm_utilization_game}
    \end{subfigure}
    \begin{subfigure}[t]{0.48\linewidth} 
        \centering
        \includegraphics[width=\linewidth]{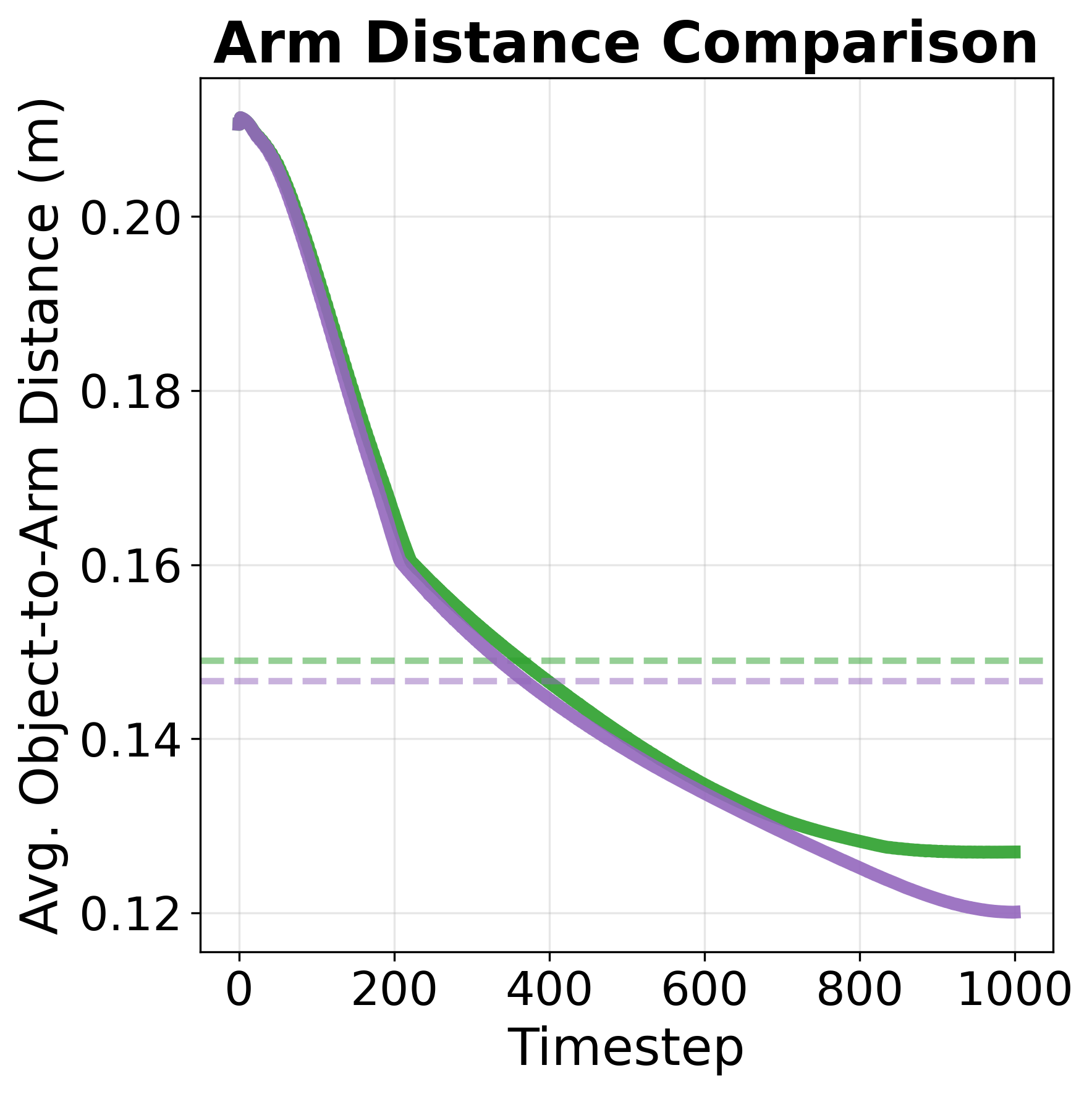}
        \caption{Average object–arm distance over time. }
        \label{subfig:arm_dist_comp_game}
    \end{subfigure}
    \hfill
    \begin{subfigure}[t]{0.48\linewidth} 
        \centering
        \includegraphics[width=\linewidth]{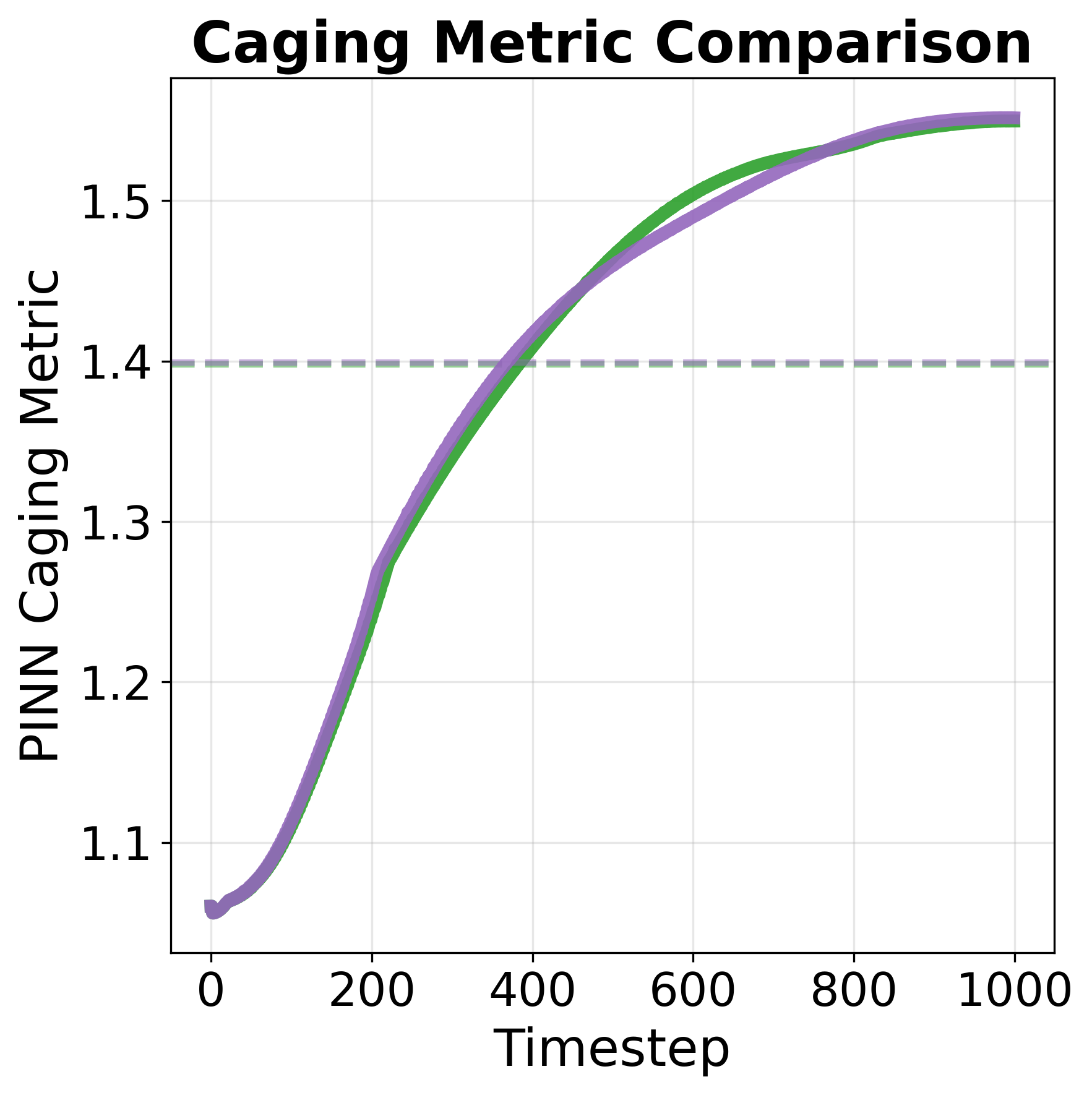}
        \caption{Escape-time (caging) metric over time.}
        \label{subfig:cage_metric_comp_game}
    \end{subfigure}

    \caption{Contact distribution, proximity diagnostics, and enclosure quality for the three planning modes.}
   \label{fig:contact_distr_cage_metric_game}
   \vspace{-0.5cm}
\end{figure}

Figure~\ref{fig:contact_distr_cage_metric_game} compares arm utilization, average object--arm distance, and the learned caging metric over time between the planner used in the main paper (Mode~3) and the stage-wise adversarial variant (Mode~4). We observe that explicitly injecting the escape velocity biases the robot toward increased use of the second link and reduced reliance on the third link, resulting in slightly smaller average object--arm distance and marginally higher escape-time values. These effects indicate a tighter geometric enclosure, consistent with the adversarial objective.

\begin{figure}[!t]
    \centering
    \includegraphics[width=\linewidth]{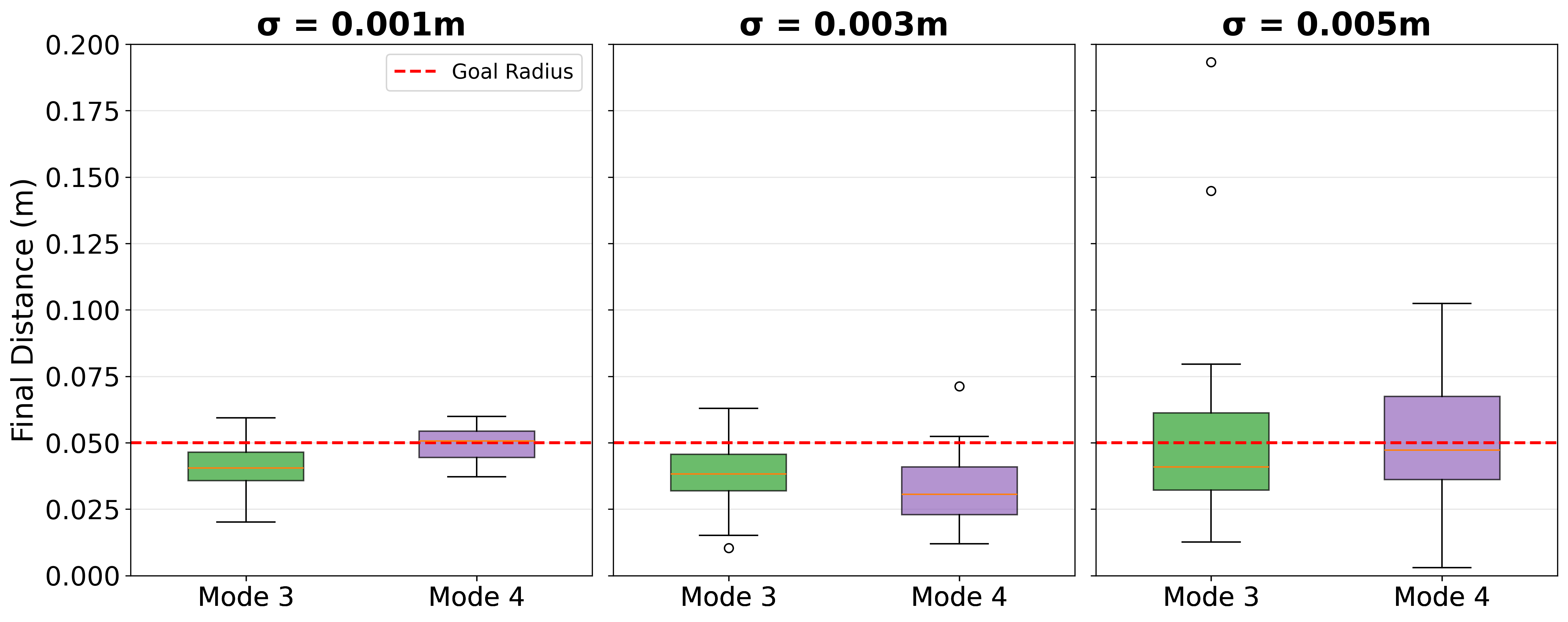}
    \caption{Object's distance to goal at final time step.}
    \label{fig:robustness_detailed_final_dist_game}
    \vspace{-0.5cm}
\end{figure}
Figure~\ref{fig:robustness_detailed_final_dist_game} further evaluates task-level robustness by measuring the final object distance to the goal under additive random motion disturbances of varying magnitudes. The adversarial variant exhibits similar robustness with our used planner under evaluated disturbances.
However, in practice, we observed that explicitly injecting adversarial object velocities increases numerical sensitivity during trajectory optimization, leading to larger variance in CMA-ES rollouts and slower convergence. This behavior is consistent with the non-smooth interaction between contact dynamics and adversarial feedback. 

Taken together, these observations support our design choice in the main paper: the geometry-based escape-time objective captures the dominant robustness effects, while explicit adversarial rollout introduces additional complexity and sensitivity without commensurate gains in overall task performance.

\subsection{Physics-Informed Eikonal Caging for the Franka Arm}\label{app:piec_franka}
\begin{figure}[!t]
    \centering
    \includegraphics[width=\linewidth]{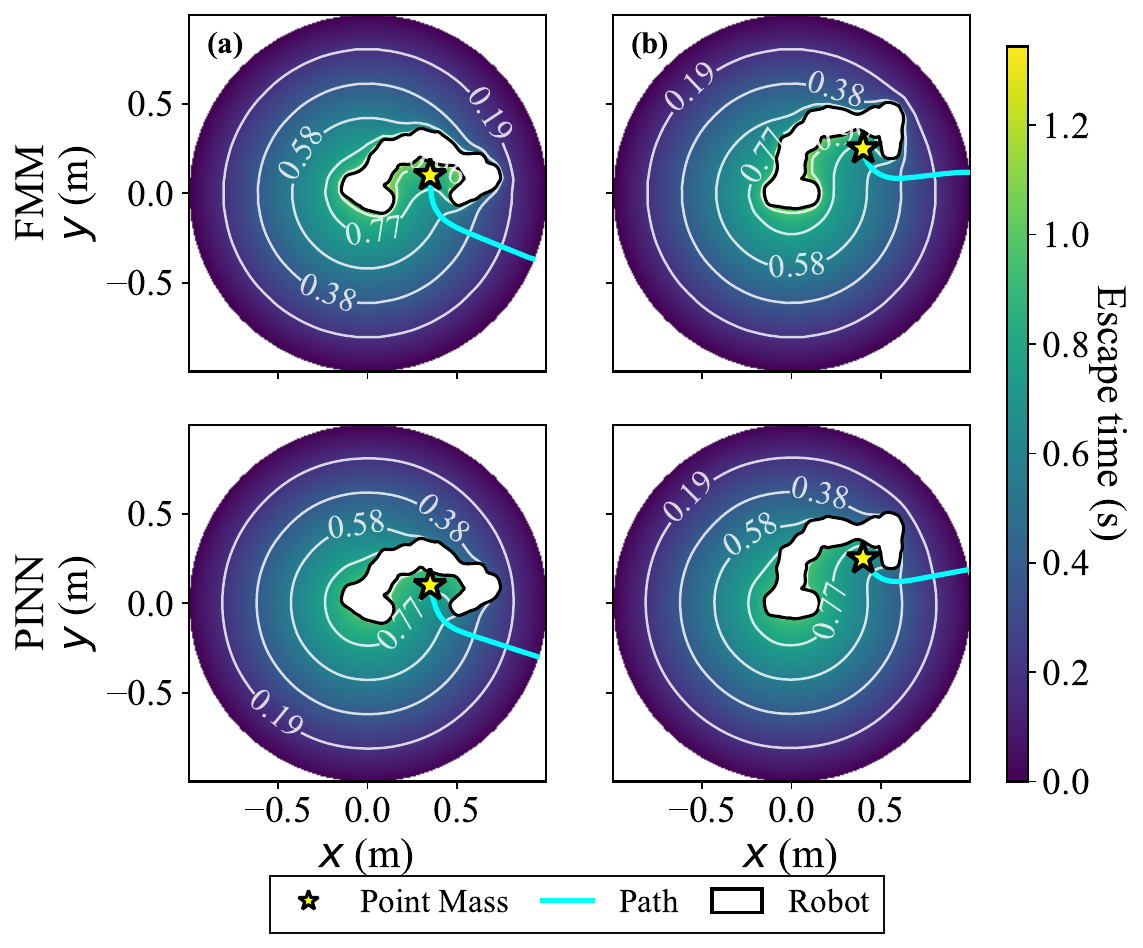}
    \caption{Comparison between the PINN-based eikonal caging time field and the ground-truth solution from FMM with Franka arm. Two representative robot configurations and point-mass positions are shown, illustrating that the learned model closely matches the true escape-time field and recovers the corresponding optimal escape paths.}
    \label{fig:pinn_vs_fmm_franka}
    \vspace{-0.5cm}
\end{figure}
We also provide a qualitative validation of the learned escape-time field for a realistic articulated robot geometry. Specifically, we compare the PINN-based eikonal caging time field against a ground-truth solution computed using the FMM for a planar Franka arm in Figure~\ref{fig:pinn_vs_fmm_franka}. This experiment is intended to verify that the proposed physics-informed formulation accurately captures escape-time structure for complex robot geometry, rather than to introduce new quantitative benchmarks.

\end{document}